\def\isarxiv{1}
\theoremstyle{plain}
\newtheorem{theorem}{Theorem}%[section]
\newtheorem{proposition}{Proposition}
\newtheorem{claim}{Claim}
\theoremstyle{definition}
\theoremstyle{remark}
\newcommand{\strblk}[1]{
{\sethlcolor{black!12}\hl{% added this percent
    \texttt{\footnotesize #1}% added this percent
}}
} 
\newcommand{\mypara}[1]{{\noindent\textbf{#1}}}
\title{Can Language Models Compose Skills In-Context?}
\renewcommand{\and}{\unskip \quad}  % a comma + space, no forced break
\author{
Zidong Liu\thanks{\texttt{zidongliu@connect.hku.hk}. The University of Hong Kong.}
\and \quad
Zhuoyan Xu\thanks{\texttt{zhuoyan.xu@wisc.edu}. University of Wisconsin-Madison.}
\and \quad
Zhenmei Shi\thanks{\texttt{zhmeishi@cs.wisc.edu}. University of Wisconsin-Madison.}
\and \quad
Yingyu Liang\thanks{\texttt{yingyul@hku.hk}. The University of Hong Kong.} 
}
\date{}
\begin{document}

\maketitle
\begin{abstract}

Composing basic skills from simple tasks to accomplish composite tasks is crucial for modern intelligent systems. We investigate the \emph{in-context composition} ability of language models to perform composite tasks that combine basic skills demonstrated in in-context examples. This is more challenging than the standard setting, where skills and their composition can be learned in training. We conduct systematic experiments on various representative open-source language models, utilizing linguistic and logical tasks designed to probe composition abilities. The results reveal that simple task examples can have a surprising \emph{negative impact} on the performance, because the models generally struggle to recognize and assemble the skills correctly, even with Chain-of-Thought examples. Theoretical analysis further shows that it is crucial to align examples with the corresponding steps in the composition. This inspires a method for the probing tasks, whose improved performance provides positive support for our insights.
%Our investigations provide insights into how models compose skills in-context and hopefully help future algorithm design.

\end{abstract}

\section{Introduction}\label{sec:introduction}

% LLM's general intelligence: in-context 
% LLM's reasoning ability: composition

% propose to investigate in-context composition
% describe the setting; point out key difference from standard setting
% why: general reasoning ability requires such in-context composition

% succint version
%In recent years, significant progress in artificial intelligence has been driven by large language models that perform diverse downstream tasks through in-context learning—adapting to new tasks by processing a few examples at inference without parameter updates. These models exhibit robust reasoning abilities, and their capacity for compositional reasoning—integrating basic skills to solve multi-step composite tasks—is critical given the impracticality of training on every possible composite scenario.
%This study investigates whether language models can perform in-context compositional reasoning. Specifically, we assess a model’s ability to resolve queries from novel composite tasks when presented with in-context examples of both underlying simple tasks and the composite task itself. Unlike traditional settings where skills and their compositions are acquired during training, our approach requires the model to infer multi-step processes solely from provided examples.

Recent advances in machine learning have yielded substantial progress, particularly with the rise of language models (e.g.,~\citep{openai2023gpt4, claude3, deepseekr1}). These models exhibit strong in-context learning (ICL) capacity: they can adapt to novel tasks by leveraging a few examples provided at inference time without requiring parameter updates. 
%Additionally, language models begin to exhibit reasoning abilities in downstream tasks.
Through ICL, language models can generalize across tasks by adapting to the given context.  
A critical aspect of this task generalization is the ability to integrate basic skills from simple tasks to perform more complex composite ones, which is essential given the exponential number of possible compositions that prevents learning each individually. Ideally, models should be able to compose skills demonstrated in-context to tackle new compositions. This leads to our central question: \emph{Can language models do composition in-context?} 

This study proposes to study the \emph{in-context composition} ability of language models. Specifically, it examines whether models can solve queries for novel composite tasks combining several simple tasks, when provided with some in-context examples from the simple tasks and some examples from the composite task. Unlike traditional scenarios where the skills and potentially some of their compositions are learned during training, the models in this study need to learn novel skills and compositions during inference, thus demanding strong compositional generalization.

% Our approach and results
% note the difference from our COLM paper
% key reasons for failure: 1) not recognize the composition (a) treat the query as wholistic instead of compositional; (b) not distinguish simple task and composite tasks
% 2) not align examples with proper steps of the composition
% theory: preliminary but explains the key failure reasons from empirical observations, and reveals fixing the recognition and alignment issues can make improvements

We first perform systematic empirical studies on representative language models~\citep{touvron2023llama,touvron2023llama2,karamcheti2021mistral,grattafiori2024llama,guo2025deepseek}, using linguistic and logical tasks from~\cite{xu2024do} designed for probing the composition abilities. The experiments show a surprising phenomenon: \emph{simple task examples can hurt the performance on composite queries, rather than improve it}. See an illustration in~\cref{fig:illustration-example}. This is in sharp contrast to the expectation that these examples can help the model identify skills and compose them to solve the query. Our investigation of this negative impact finds that the models generally do not recognize the composition and do not align the simple task examples with the corresponding steps of the composition. %In particular, they largely rely on syntactic operators rather than semantic contents to discover examples to utilize, and thus may match the query to either simple or composite task examples. 
Even when Chain-of-Thought (CoT) examples are used, they may mismatch the skills inferred from examples to the wrong steps in answering the composite query. Inspection into the inner attentions of the language models provides further evidence for our findings.

We further provide a theoretical analysis in a stylized setting that captures the essence of the in-context composition and focuses on understanding the key factor behind the observations. The analysis confirms that ignorance of the compositional structure can harm the performance, while \emph{aligning the examples to appropriate steps of the composition} can potentially improve it. 
This inspires a proof-of-concept algorithm for the probing tasks, Expanded Chain-of-Thought (ExpCoT), that views simple task examples as composite task examples with missing steps and expands them into the CoT format with missing steps marked by special symbols. 
Evaluations show that it can explicitly align the examples with the corresponding steps and thus improve the performance. The improvement verifies our insights and justifies the potential for helping future algorithm designs.

% Based on the insights from the studies, we introduce Expanded Chain-of-Thought (ExpCoT), a novel variant of CoT that \emph{views the simple task examples as composite task examples with missing steps and expands them into the CoT format with missing steps marked by special symbols}. 
% Evaluations show that it can explicitly align the examples with the corresponding steps and thus improve the in-context composition performance. This step-alignment insight can be useful for practical algorithm development beyond our research setting. 

\ifdefined\isarxiv

\begin{wrapfigure}
{r}{0.52\textwidth}
\vspace{-3em}
\begin{center}
\subfloat{
\includegraphics[width=0.98\linewidth]{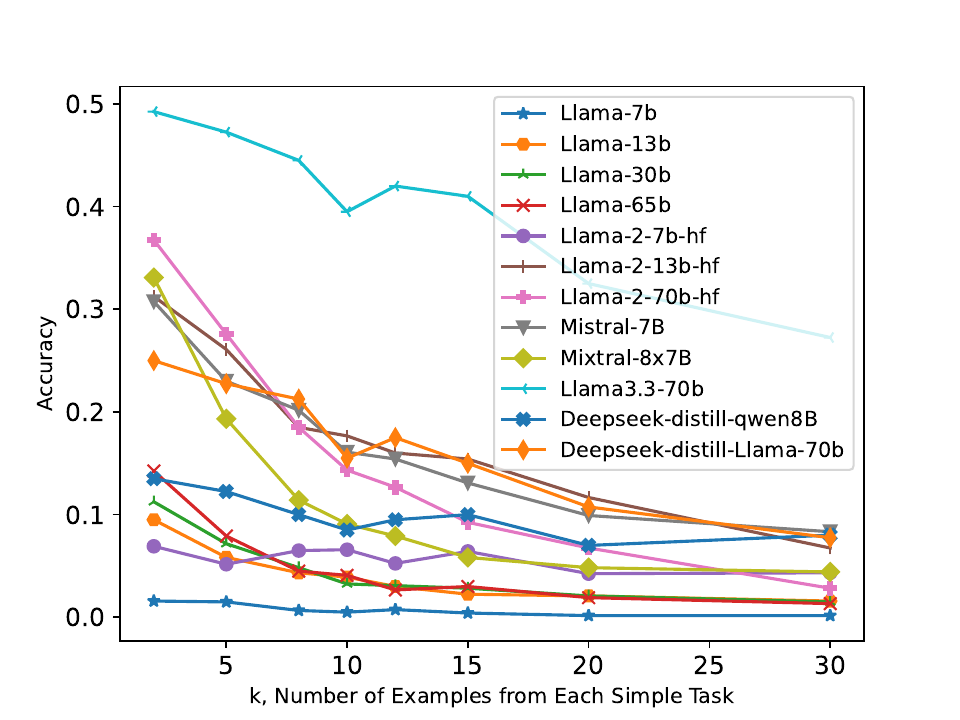}
}
\end{center}
\vspace{-1em}
\caption{\small The negative impact of simple task examples on the opposition+swap task (see \cref{tab:dataset-example}). The models need to answer a composite query when given $k$ examples from each simple task and $k_c=5$ examples from the composite task. They show unexpected \emph{decreasing} performance with more simple task examples ($k$).}
\label{fig:illustration-example}
\vspace{-4em}
\end{wrapfigure}

\else

\begin{wrapfigure}
{r}{0.5\textwidth}
\vspace{-2em}
\begin{center}
\subfloat{
\includegraphics[width=\linewidth]{image/pic00.pdf}
}
\end{center}
\caption{\small The negative impact of simple task examples on the opposition+swap task (see \cref{tab:dataset-example}). The models need to answer a composite query when given $k$ examples from each simple task and $k_c=5$ examples from the composite task. They show unexpected \emph{decreasing} performance with more simple task examples ($k$).}
\label{fig:illustration-example}
\vspace{-1em}
\end{wrapfigure}
\fi

Our contributions are summarized as follows.
\begin{itemize}
    \item We perform systematic experiments investigating the in-context composition ability of representative open-source language models and find that they typically exhibit limited such ability, due to difficulties in recognizing the composition and identifying proper skills from in-context examples.
    \item We provide a theoretical analysis, which explains the empirical observations and reveals that explicitly aligning in-context examples with the corresponding steps can help accomplishing the composite tasks. 
    \item We propose a proof-of-concept method that significantly improves the in-context composition performance on the probing tasks, which provides positive support for our insights.  
\end{itemize}

\section{Related Work} \label{sec:related}

\noindent\textbf{In-Context Learning and Chain-of-Thought.}
% LLM exhibits a remarkable ability for in-context learning (ICL)~\citep{brown2020language, openai2023gpt4,google2023gemini,claude3}, which allows pretrained LLMs to solve specific tasks by conditioning on a few prepended in-context examples, without requiring any updates to the model parameters. 
Several studies investigate the behavior of in-context learning (ICL). 
% \cite{zhao2021calibrate,lu2022fantastically} formulate the problems and analyze the sensitivity of LLMs to in-context examples sequences. \cite{min2022rethinking,wei2023larger} investigate on how LLMs performance change react to the ground-truth label and text demonstrations within in-context examples. 
\cite{zhao2021calibrate,lu2022fantastically,min2022rethinking,wei2023larger} analyze the sensitivity of LLMs to in-context examples.
\cite{rubin2022learning,liu-etal-2022-makes, hongjin2022selective,wang2023large} propose methods to effective selection of in-context learning examples. 
%\cite{chen-etal-2022-meta, min-etal-2022-metaicl} use meta training with an explicit in-context learning object to enhance performance. 
% Theoretically, \cite{xie2022an} provide a bayesian framework to explain the working mechanism of in-context learning. 
\cite{garg2022can,pmlr-v202-von-oswald23a, rek2023what, mahankali2023one, zhang2023trained,shi2024why} investigate with linear models, showing how transformers can represent gradient descent and conduct linear regression. \cite{guo2024how} provide analysis on how ICL works in non-linear functions. 
% Based on these works, we present an analysis demonstrating how LLMs can exhibit compositional capabilities in ICL tasks.
Chain-of-Thought (CoT) prompts LLMs to produce intermediate reasoning steps to solve multi-step reasoning questions~\cite{kojima2022large,wei2022chain}.
%CoT prompt LLMs to produce these steps either in zero-shot or few-shot (ICL) settings. 
% Zero-shot-CoT adds instructions such as "Let's think step by step" in prompts~\cite{kojima2022large} while few-shot-CoT provides several examples with step-by-step reasoning as in in-context demonstrations~\cite{brown2020language,wei2022chain}. 
Few-Shot-CoT improves LLM's reasoning ability using demonstrations that are either manually constructed ~\cite{khot2022decomposed,zhou2023leasttomost,li2023making,wang2023selfconsistency} or automatically selected \cite{zhang2023automatic}. 
Several theoretical work have been proposed to analyze the effecitiveness of CoT. %\cite{liu2023transformers} studies the expressiveness of shallow transformers. 
\cite{feng2023towards,li2024chain} shows CoT allows for performing more serial computations, increasing the effective depth of a transformer. \cite{joshi2025theory} presents a frameworks that allows for universal representability and computationally tractable CoT learning and \cite{abedsoltan2025task} analyzes the task generalization enabled by composition. Our theoretical analysis is partially inspired by \cite{joshi2025theory,abedsoltan2025task} but considers a different setting with both simple/composite task examples, and also analyzes when the composition can fail.

\noindent\textbf{Compositional Task Learning.}
Compositional reasoning of LLM is an active area in AI~\cite {huang2022towards,sinha2024a}. \cite{kim-linzen-2020-cogs,levy2022diverse} explore the compositional capabilities of LLMs in abstract reasoning tasks under ICL settings. \cite{an2023does,an2023context} show LLMs are capable of learning abstract reasoning (e.g., grammar) to perform new tasks when finetuned or given in-context examples. \cite{ye2023compositional, dziri2023faith,thomm2024limits,xu2024do} show that LLMs can handle simple sub-tasks, but often struggle with tasks composing multiple sub-tasks. \cite{press2023measuring} show that challenge in composition can be mitigated through CoT prompting. \citep{zhao2024can} demonstrates that small-scale LLMs can learn and generalize compositional skills through finetuning on tasks involving skill combinations. \citep{song2025out,yang2024large,brinkmann2024mechanistic,guo2025llms,hong2024transformers} provide mechanistic analyses on how LLMs tackle compositional reasoning.
Our work conducts systematic experiments on the tasks from~\cite{xu2024do} and provides empirical/theoretical analysis on the reasons for success and failure. 
%and find that LLMs fail on composite tasks when logical steps are intermixed. Motivated by this limitation, we demonstrate that our proposed expanded CoT approach can alleviate this issue and provide a theoretical analysis on this phenomenon.
More discussions on related work are in~\cref{app:related}.
\section{Empirical Study on the In-Context Composition Ability of LLMs}\label{sec:empirical}
 
To examine the in-context compositional capabilities of language models, we conduct systematic experiments utilizing public large language models on linguistic and logical composite tasks.  

\mypara{Models and Dataset.} 
%Following, 
We use 12 models: Llama (7B, 13B, 30B, and 65B)~\citep{touvron2023llama}, Llama2 (7B, 13B, and 70B)~\citep{touvron2023llama2}, Mistral (7B and 8x7B)~\citep{karamcheti2021mistral}, Llamma3.3 (70B)~\citep{grattafiori2024llama}, Deepseek-distill (-qwen8B and -Llama2-70b)~\citep{guo2025deepseek}.
We adopt the test suite with nine composite tasks from~\cite{xu2024do}. The simple tasks apply functional mappings to words represented by operators like \strblk{*}, 
%(e.g., opposition \strblk{* poor~->~rich}, word swap \strblk{\#~dry~die~->~die~dry}), 
while composite tasks combine two simple tasks, as illustrated in \cref{tab:dataset-example}. 
\cref{app:data} includes the dataset details like the list of tasks, the numbers of queries tested, etc. Our code will be made public. 

These tasks fit our purpose of investigating in-context composition. (1) They are clean compositions of linguistic and logical skills that allow in-depth investigation. (2) They are at appropriate levels of difficulty, while too simple composite tasks will be easily solved and too difficult ones will lead to always low performance, which will obscure interesting observations and prevent detailed examinations. (3) They construct synthetic test data with customized syntax operators to ensure novel tasks (see discussion in~\cite{xu2024do}). While the models have corresponding linguistic/logic abilities (e.g., they can answer queries like ``what's the opposition of the word dry''), the operators have not been associated with the linguistic and logical tasks in pretraining, so the model needs to learn the simple tasks and their composition in-context at inference time. 

\begin{table}[h]
\scriptsize
\centering
\begin{tabular}{l|lll|l}
\toprule
& \textbf{Simple Task 1 (opposition)} & \textbf{Simple Task 2 (swap)} & \textbf{Composite Task} & \textbf{In-Context Composition} \\
\midrule 
 \multicolumn{1}{l|}{Prompt} &  input: * Dry Lie  &                    &                          & input: * Dry Lie       \\
              &    &                    &                          & output: Wet Stand        \\
               &   & input: Sad Less \# &                          & input: Sad Less \#       \\
                &  &                    &                          & output: Less Sad         \\
                 & &                    & input:* Eager Proud \#   & input: * Eager Proud \#   \\
                  & &                    &                          & output: Humble Listless \\
                 & &                    &                          & input: * Rich Humble \#  \\ 
\hline
\rule{0pt}{10pt}
Answer & output: Wet Stand & output: Less Sad   & output: Humble Listless & output: Proud Poor               \\
\bottomrule
\end{tabular}
\medskip
\caption{An example of In-context composition. Here the composition consists of two simple tasks: opposition (represented by operator \strblk{*}) and swap (represented by operator \strblk{\#}).}
\label{tab:dataset-example}
\vspace{-1em}
\end{table}

\mypara{Experimental setup.}
Consider a composite task combining two simple tasks (task 1 and task 2). The test prompt given to models will consist of in-context examples (including $k_1$ examples from task 1, $k_2$ examples from task 2, and $k_c$ examples from the composite task), and a query from the composite task. The examples drawn from simple tasks demonstrate basic skills, while the composite examples illustrate how these skills can be integrated. This setup evaluates whether the model can solve new composite queries by utilizing examples of the individual skills and also their composition.

LLMs are known to be sensitive to the orders of in-context examples (e.g.,~\cite{lu2022fantastically}). To avoid the influence on our investigation, we randomly shuffle the $k_1 + k_2 + k_c$ examples (see \cref{app:order} for an ablation study). Each prompt is evaluated with 4 different random shufflings, and we report the average accuracy across both test prompts and random seeds.

\mypara{Result summary.}
We investigate the following questions: 
(\textbf{Q1}) Can in-context examples help composition?  
(\textbf{Q2}) What information from the in-context examples is utilized? 
(\textbf{Q3}) How are the in-context examples utilized? 
(\textbf{Q4}) Where are the models paying attention to? 
(\textbf{Q5}) Can Chain-of-Thought examples help? 
Our experiments provide the following findings. 
(\textbf{A1}) By increasing the number of simple or composite task examples, we observe that composite ones help while simple task examples unexpectedly hurt the performance.
(\textbf{A2}) By examining the outputs, we find that the model may ignore the compositional structure: it may match the composite query to examples from any task and perform only the matched task. Then more examples from a simple task lead to higher chance of performing only that simple task on the query and thus worse performance.
(\textbf{A3}) By an ablation study on different parts of the examples, we find that the models largely match the query to examples based on the operators rather than the semantic content. 
(\textbf{A4}) By visualizing the inner attention map, we illustrate that the models typically do not recognize the composition and pay roughly equal attention to the simple and composite examples.
(\textbf{A5}) By adding intermediate step outcomes (CoT) in the composite task examples, we find that na\"ive CoT may not help, because the model may not align the examples to the corresponding steps in the composition.
Below we present our empirical studies. Due to space limitations, some details/results are deferred to the appendix.   

\subsection{Composite task examples help but simple task examples hurt composition}
\label{sec:impact-of-examples}

Ideally, the model can learn the basic skills demonstrated in the simple task examples, and learn how to compose the skills from the composite task examples. In this case, increasing the number of simple task examples or composite task examples should enhance the performance on composite queries. To evaluate the impact of in-context examples, we vary the number of examples provided to the model. Specifically, we set the number of examples for each simple task to be equal ($k_1 = k_2 := k$), and then evaluate performance across different $k \in \{2, 5, 8, 10, 12, 15, 20, 25, 30\}$ and $k_c \in \{0, 1, 2, 5, 10\}$.

\begin{figure*}[ht]
\vspace{-1em}
\begin{center}
\subfloat[Increasing the number of simple task examples]{
\includegraphics[width=0.46\linewidth]{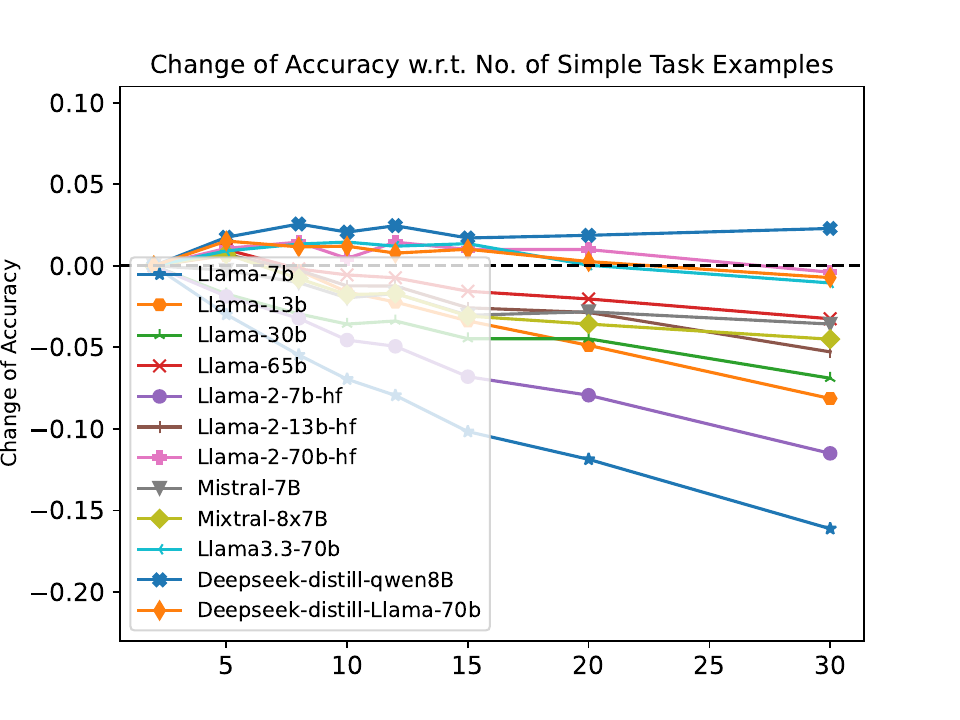}
}
\hspace{1em}
\subfloat[Increasing the number of composite task examples]{
\includegraphics[width=0.46\linewidth]{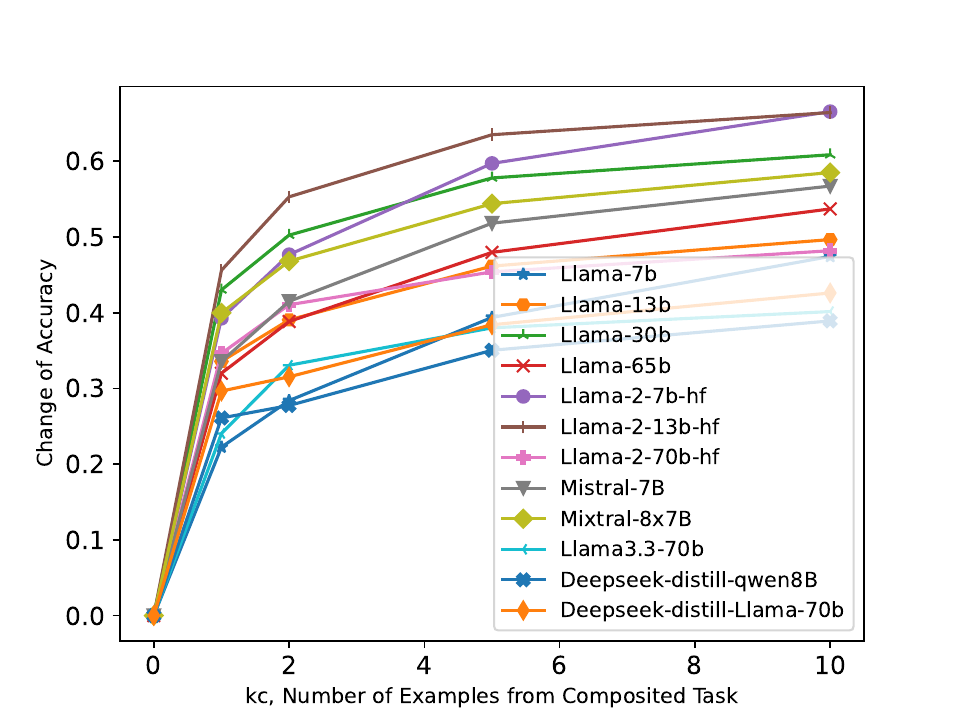}
}
\end{center}
\caption{The effect of the in-context examples on in-context composition. 
The average \textbf{change} of the accuracy is reported, averaged over the tasks and $k_c$ (or $k$).
%$x$-axis is the number of examples, and $y$-axis is the average change of the accuracy, averaged over the tasks and $k_c$ or $k$. Each curve in the figure corresponds to a model. 
More simple task examples surprisingly harm the composition performance, while more composite task examples help as expected. 
%See detailed results in \cref{app:impact-of-examples}. 
}
\label{fig:increasing-example}
\end{figure*}

\mypara{Results.}
\cref{fig:increasing-example} shows the \emph{change} of the accuracy with increasing $k$ or $k_c$, to highlight the trend (detailed accuracy results themselves are included in the appendix).
It reveals an unexpected pattern: while performance on composite queries improves when we increase the number of composite task examples ($k_c$), it surprisingly declines as we add more simple task examples ($k$).
% on average, the performance on composite queries improves as expected when the number of composite task examples $k_c$ increases, but it unexpectedly decreases when the number of simple task examples $k$ increases.
This implies, more examples of simple tasks can be harmful rather than helpful for the composite task. 
This counterintuitive result highlights the sophistication of the in-context composition abilities of large language models. 

More precisely,~\cref{fig:increasing-example}(a) shows that the accuracy of the models (averaged over all tasks and $k_c$ values) typically decreases when the number of simple task examples $k$ increases. For instance, the average accuracy of the model Llama-13B drops by 7.5\% when $k$ increases from 2 to 30. 
%\YL{add explanation about why Llama-2-70b-hf large model may not be that affected}
This trend is consistent across most settings (different models/tasks/numbers of examples); see additional details including breakdowns by individual task and $k_c$ in~\cref{app:impact-of-examples}. Deepseek-distill-qwen8B appears less affected, but a detailed look finds that this is because its accuracy saturates on some easier tasks for larger $k_c$; it can be negatively affected on some other tasks. 
% Appendix~\ref{app:impact-of-examples} includes more results, like the detailed results for each task and each $k_c$ value; 
%We provide specific examples in \cref{fig:increasing-k-example-selected-tasks}.
% Figure~\ref{fig:increasing-k-example-selected-tasks} shows some examples. 
%These results confirm that the trend occurs in most scenarios and is generally consistent across different tasks and numbers of composite examples. 
In contrast,~\cref{fig:increasing-example}(b) shows that more composite examples clearly improve the performance, which is also consistently in most settings.

% As well as increasing simple tasks, we also works on the case that given simple task examples, how would the accuracy of simple task query change when increasing amount of composited task examples. Figure~\ref{fig:increasing-example} shows that as well as simple task, composite task examples would also harm the effect of simple task queries.

These results suggest that the model fails to recognize the composition and cannot utilize simple task examples, treating them as interfering noise rather than useful signals. We further test on queries from simple tasks and find that more composite task examples also decrease the accuracy on these queries (see results in \cref{app:simple-query}), supporting our hypothesis that the models do not recognize the composition. This motivates our subsequent investigations into the mechanisms underlying how models extract and process information from in-context examples.

\subsection{The models may ignore the compositional structure} \label{sec:what-info}
The above observations suggest the model can utilize the composite examples properly but not the simple task examples. We would like to check what information from the examples is utilized and how. A detailed look into the outputs shows that the error is often by performing only one simple task on the composite queries. For example, for an opposition+swap task query, the model only performs the opposition task and generates the output.
To quantify this, we measure the outputs' correspondence to performing different tasks: the correspondence to the composite task (denoted as $t_0$) is the accuracy w.r.t.\ the answer by performing the composite task, the correspondence to simple task 1 (denoted as $t_1$) is the accuracy w.r.t.\ the answers by performing task 1, and similarly for task 2 (denoted as $t_2$). We then measure these correspondences for different task 1 example numbers $k_1$.

\ifdefined\isarxiv

\begin{figure*}
\centering
\subfloat[Mistral-7b model]{
\includegraphics[width=0.32\textwidth]{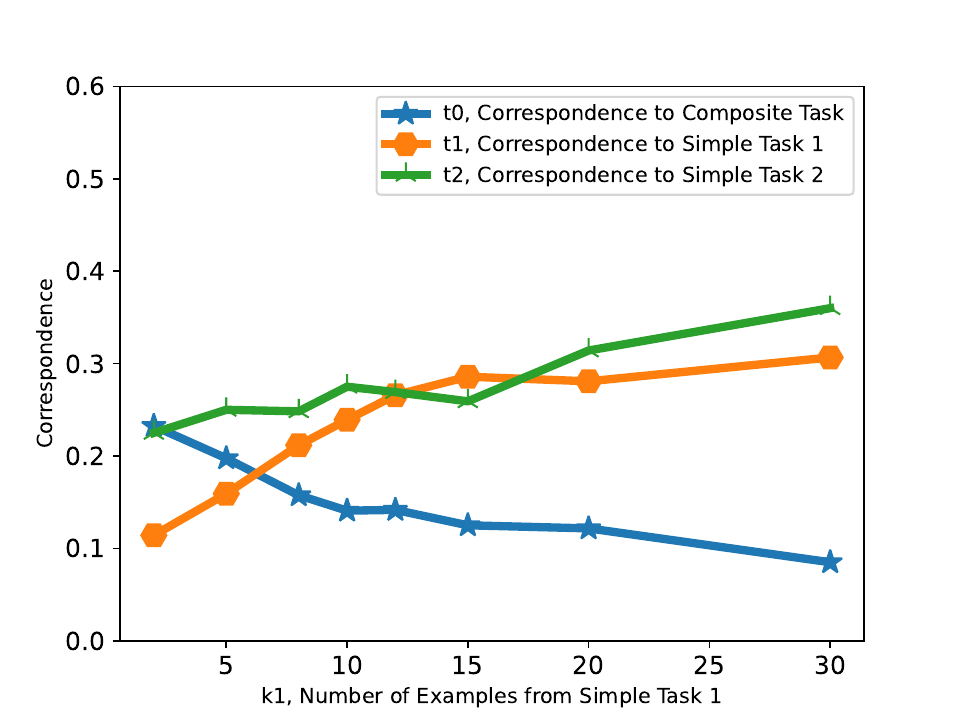}
}
\subfloat[Llama-2-13b model]{
\includegraphics[width=0.32\textwidth]{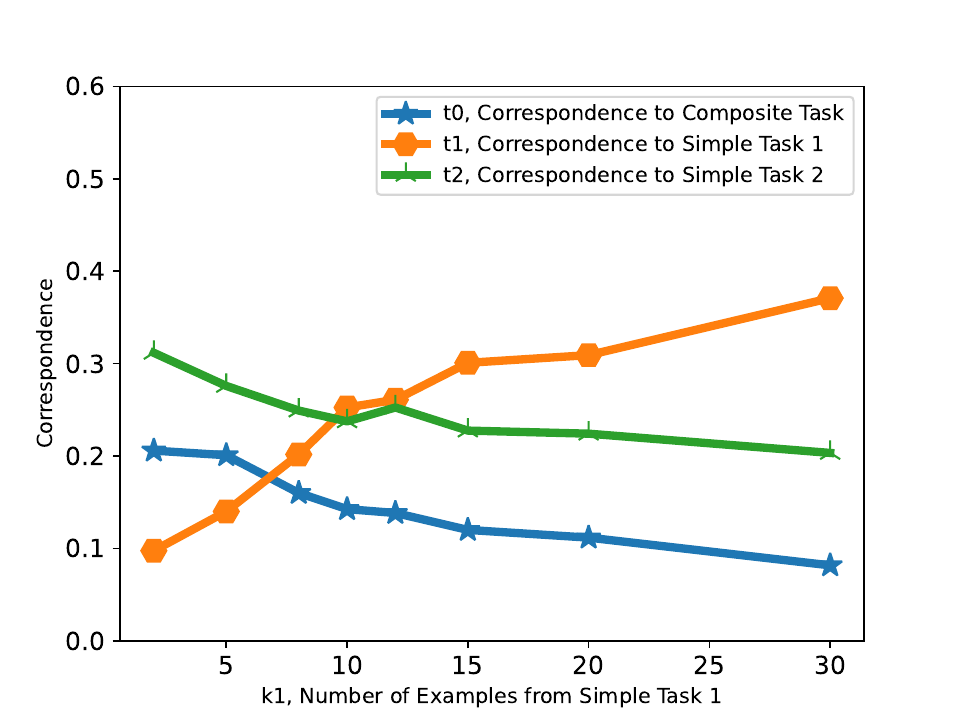}
}
\subfloat[Llama-65b model]{
\includegraphics[width=0.32\textwidth]{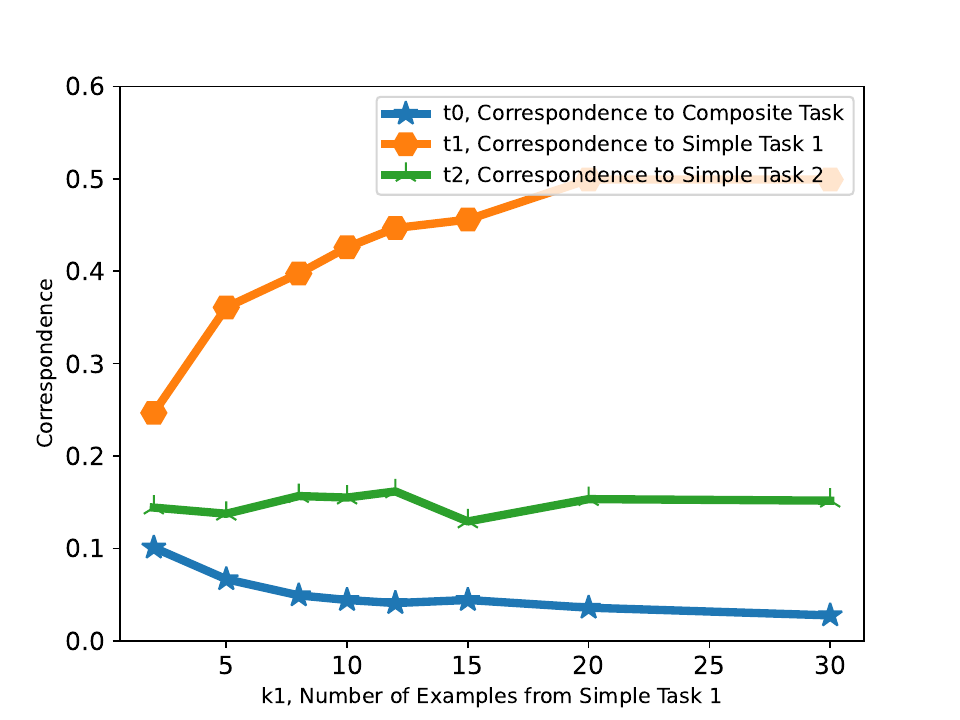}
}
\caption{The output distribution on opposition+swap for different numbers of task 1 examples $(k_1)$. 
%The curve $t_0$ is the correspondence for the composition task, $t_1$ for task 1, and $t_2$ for task 2. The results show that with more task 1 examples, 
The correspondence to task 1 increases, while those to task 2 and the composite task decrease. 
%This implies that the model may treat the composite query as a simple task query and utilize the simple task examples to generate the answer.
} \label{fig:output-distribution}
\vspace{-1em}
\end{figure*}

\else

\begin{figure*}


\centering
\subfloat[Mistral-7b model]{
\includegraphics[width=0.3\textwidth]{image/pic4.pdf}
}
\subfloat[Llama-2-13b model]{
\includegraphics[width=0.3\textwidth]{image/pic3.pdf}
}
\subfloat[Llama-65b model]{
\includegraphics[width=0.3\textwidth]{image/pic4_1.pdf}
}
\caption{The output distribution on opposition+swap for different numbers of task 1 examples $(k_1)$. 
%The curve $t_0$ is the correspondence for the composition task, $t_1$ for task 1, and $t_2$ for task 2. The results show that with more task 1 examples, 
The correspondence to task 1 increases, while those to task 2 and the composite task decrease. 
%This implies that the model may treat the composite query as a simple task query and utilize the simple task examples to generate the answer.
} \label{fig:output-distribution}
\vspace{-1em}
\end{figure*}

\fi

\mypara{Results.} 
\cref{fig:output-distribution} shows the results for three models on the {opposition+swap} task; see additional results in \cref{app:output-distribution}. 
With more task 1 examples, the correspondence w.r.t.\ the task 1 increases, while those for task 2 and the composite task decrease. This suggests that the models may match the composite query to in-context examples from any task, and the matching probability is proportional to the number of examples from each task. When the number of task 1 examples $k_1$ increases, the model becomes more likely to match the query to task 1 examples, explaining the increased task 1 correspondence.
For further verification, we conduct experiments with increasing $k_c$; see \cref{app:output-distribution}. With more composite examples, the correspondence to the composite task increases while the others decrease. Furthermore, we repeat the experiments without task 2 examples ($k_2=0$) and observe the same trend. These observations provide further support for our above analysis.

\subsection{The models largely utilize the operators rather than the content}  
\label{sec:content-operator}

%The observations that the models do not utilize the simple task examples properly should stem from how the models explore the examples, which 
Here we investigate how the models explore the examples by controlled experiments that alternate different parts of the examples.
As shown in~\cref{tab:dataset-example}, the examples contain two parts: (1) the operators like \strblk{*} and \strblk{\#} that denote the tasks to be accomplished, and (2) the content, i.e., the input and output text demonstrating the operation performed. We first introduce an irrelevant task, and then replace the content or operator in the composite task examples with that in the irrelevant task examples. This gives two ablation settings (illustrated in~\cref{tab:irrelevant-icl}): (1) Irrelevant content that replaces the content; (2) Irrelevant operator that replaces the operators. Evaluations in these two settings can reveal the influence of the content/operator on utilizing the examples.

\definecolor{lightblue}{rgb}{0.8,0.85,1.0}
\definecolor{lightorange}{rgb}{1.0,0.9,0.8}

% \definecolor{softblue}{HTML}{D0E6F7}
% \definecolor{lightyellow}{HTML}{FFF9CC}
% \definecolor{peach}{HTML}{FFE5B4}
% \definecolor{lavender}{HTML}{E6E6FA}
% \definecolor{lightgray}{gray}{0.85}

\begin{table}[h] 
\scriptsize
\centering
\begin{tabular}{lll|ll}
\toprule
                         \multicolumn{1}{l|}{}   & Original                                                        & Irrelevant Task                                                 & Irrelevant Content                                            & Irrelevant Operator                                              \\ 
\midrule
\multicolumn{1}{l|}{Prompt} & input:  * Dry Lie        & %\begin{tabular}[c]{@{}l@{}}input: \\ * * Dry Lie\end{tabular}    
&
\begin{tabular}[c]{@{}l@{}}
\rule{0pt}{10pt}input:  * Dry Lie\end{tabular}       & \begin{tabular}[c]{@{}l@{}}input:  * Dry Lie\end{tabular}        \\
\multicolumn{1}{l|}{}       & \begin{tabular}[c]{@{}l@{}}output:  Wet Stand\end{tabular}       & 
%\begin{tabular}[c]{@{}l@{}}output: \\ Wet Stand\end{tabular}   
& 
\begin{tabular}[c]{@{}l@{}}output:  Wet Stand\end{tabular}      & \begin{tabular}[c]{@{}l@{}}output: Wet Stand\end{tabular}       \\
\multicolumn{1}{l|}{}       & \begin{tabular}[c]{@{}l@{}}input:  Sad Less \#\end{tabular}       & 
%\begin{tabular}[c]{@{}l@{}}input: \\ Sad Less \#\end{tabular}   
& 
\begin{tabular}[c]{@{}l@{}}input:  Sad Less \#\end{tabular}      & \begin{tabular}[c]{@{}l@{}}input:  Sad Less \#\end{tabular}       \\
\multicolumn{1}{l|}{}       & \begin{tabular}[c]{@{}l@{}}output:  Less Sad\end{tabular}        &
%\begin{tabular}[c]{@{}l@{}}output: \\ Less Sad\end{tabular}    
& 
\begin{tabular}[c]{@{}l@{}}output:  Less Sad\end{tabular}       & \begin{tabular}[c]{@{}l@{}}output:  Less Sad\end{tabular}        \\
\multicolumn{1}{l|}{}       & \begin{tabular}[c]{@{}l@{}}input: \sethlcolor{lightblue}\hl{* Eager Proud \#}\end{tabular}   & \begin{tabular}[c]{@{}l@{}}input: \sethlcolor{lightorange}\hl{( Accept Low )}\end{tabular} & \begin{tabular}[c]{@{}l@{}}input: \sethlcolor{lightblue}\hl{* }\sethlcolor{lightorange}\hl{Accept Low}\sethlcolor{lightblue}\hl{ \#}\end{tabular}  & \begin{tabular}[c]{@{}l@{}}input: \sethlcolor{lightorange}\hl{( }\sethlcolor{lightblue}\hl{Eager Proud}\sethlcolor{lightorange}\hl{ )}\end{tabular}     \\
\multicolumn{1}{l|}{}       & \begin{tabular}[c]{@{}l@{}}output:  \sethlcolor{lightblue}\hl{Humble Listless}\end{tabular} & \begin{tabular}[c]{@{}l@{}}output:  \sethlcolor{lightorange}\hl{ACCEPT LOW}\end{tabular} & \begin{tabular}[c]{@{}l@{}}output:  \sethlcolor{lightorange}\hl{ACCEPT LOW}\end{tabular}    & \begin{tabular}[c]{@{}l@{}}output:  \sethlcolor{lightblue}\hl{Humble Listless}\end{tabular} \\
\multicolumn{1}{l|}{}       & \begin{tabular}[c]{@{}l@{}}input: \sethlcolor{lightblue}\hl{* Rich Humble \#}\end{tabular}  & \begin{tabular}[c]{@{}l@{}}input: \sethlcolor{lightorange}\hl{( Rich Humble )}\end{tabular} & \begin{tabular}[c]{@{}l@{}}input: \sethlcolor{lightblue}\hl{* }\sethlcolor{lightorange}\hl{Rich Humble}\sethlcolor{lightblue}\hl{ \#}\end{tabular} & \begin{tabular}[c]{@{}l@{}}input: \sethlcolor{lightorange}\hl{( }\sethlcolor{lightblue}\hl{Rich Humble}\sethlcolor{lightorange}\hl{)}\end{tabular}     \\ \hline 
\multicolumn{1}{l|}
{Answer}
% {
% \begin{tabular}[c]{@{}l@{}}
% \rule{0pt}{10pt}
% Answer\end{tabular}} 
& \begin{tabular}[c]{@{}l@{}}output:  \sethlcolor{lightblue}\hl{Proud Poor}\end{tabular}      & \begin{tabular}[c]{@{}l@{}}output:  \sethlcolor{lightorange}\hl{RICH HUMBLE}\end{tabular} & \begin{tabular}[c]{@{}l@{}}output:  \sethlcolor{lightorange}\hl{RICH HUMBLE}\end{tabular}    & \begin{tabular}[c]{@{}l@{}}output:  \sethlcolor{lightblue}\hl{Proud Poor}\end{tabular}    \\
\bottomrule
\end{tabular}
\medskip
\caption{Illustrations of the two ablation settings for opposition+swap. The irrelevant content/operator setting replaces the \sethlcolor{lightblue}\hl{original} content/operator with that from the \sethlcolor{lightorange}\hl{irrelevant} task capitalization. 
}
\label{tab:irrelevant-icl}
\vspace{-1em}
\end{table}

\begin{figure*}
\centering
\subfloat[Irrelevant Content]{
\includegraphics[width=0.4\textwidth]{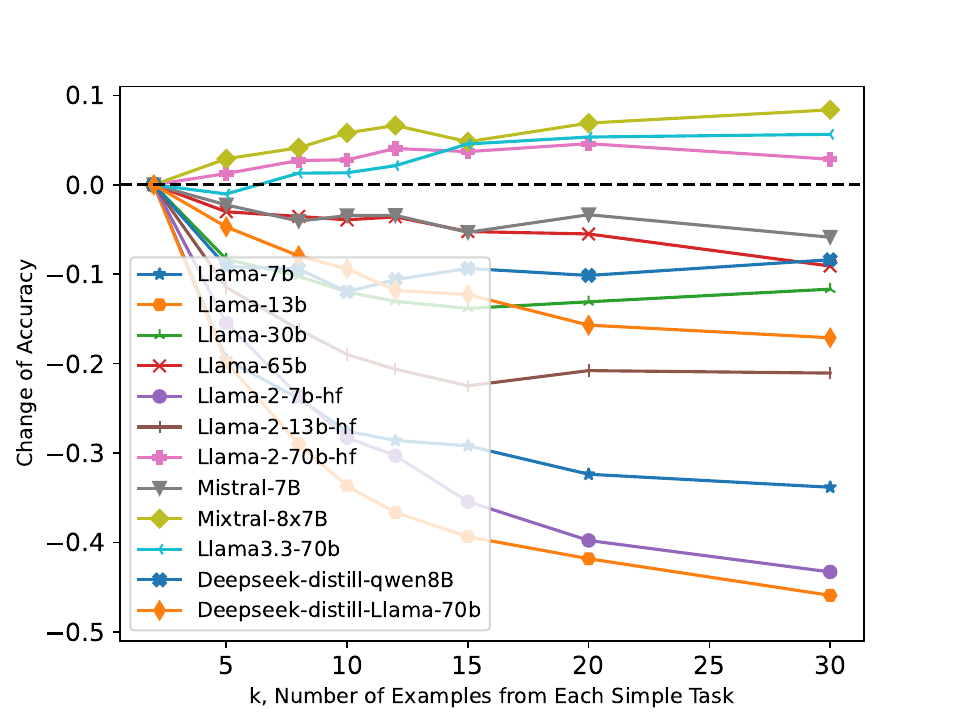}
}
\subfloat[Irrelevant Operator]{
\includegraphics[width=0.4\textwidth]{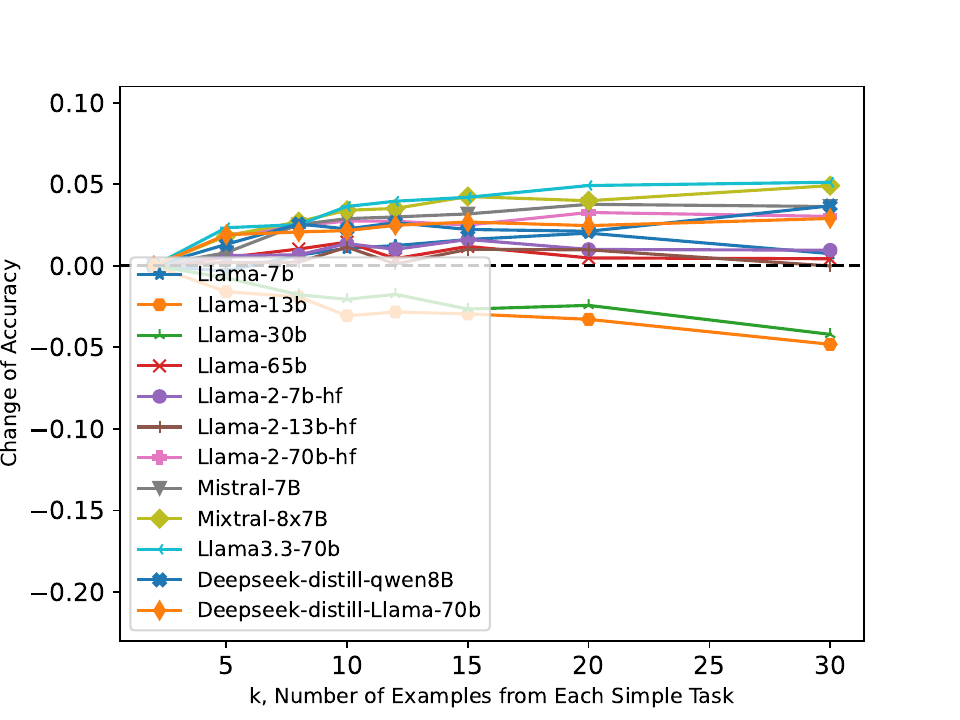}
}
\caption{Results for ablating the content/operators in the composite task examples. Increasing $k$ still affects the performance after ablating the content, but has little impact after ablating the operators. %suggesting that the utilization of the examples largely relies on the operators.
} \label{fig:irrelevant}
\vspace{-1em}
\end{figure*}

\mypara{Results.}
\cref{fig:irrelevant} shows that in the irrelevant operator setting, increasing $k$ has little impact after ablating the operators. This suggests that the model almost ignores the simple examples after using a different operator, i.e., the utilization of the examples is largely based on the operators. While in the irrelevant content setting, increasing $k$ still affects the performance after ablating the content. Note that the performance trends of some models are changed after ablating the content, so the content still plays a role in utilizing the examples, but the role is less clear and significant. More detailed results in \cref{app:content-operator} provide further support.

\subsection{Inner attentions may not distinguish the simple and composite tasks} \label{sec:inner}

%In this section, we investigate the inner workings of the model to further verify our insight obtained. We focus on the opposition+swap task, and the Layer 12, 15, and 18 of Mistral-7B model.

\mypara{Similarities between attentions for simple and composite queries.} 
We compare the inner attentions of the model facing simple or composite task queries. Following~\cite{hong2024transformers}, we consider oppopistion+swap and choose 100 test prompts with simple task queries and 100 test prompts with composite task queries. Then we choose a layer, extract the attention output for each query. Finally, we compute the pairwise cosine similarities between the attentions, giving a 200 by 200 similarity matrix.  
To generate the prompts, we consider a fixed context with simple/composite task examples ($k = 10$ and $k_c = 5$), and then add different queries (100 simple and 100 composite ones).

\ifdefined\isarxiv

\begin{figure}[h]
\centering 
\includegraphics[width=0.33\linewidth]{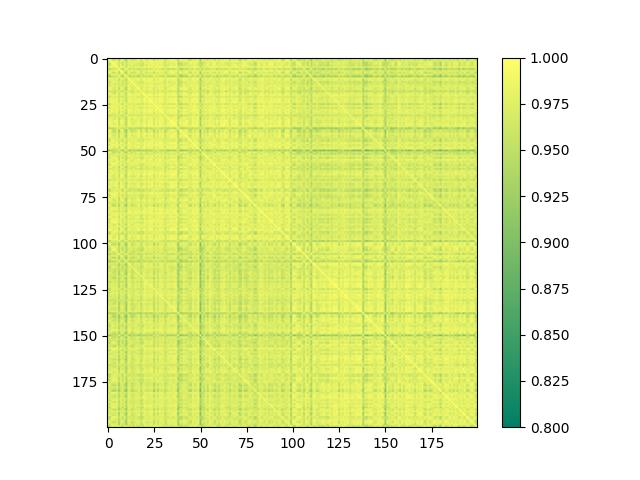}
\includegraphics[width=0.33\linewidth]{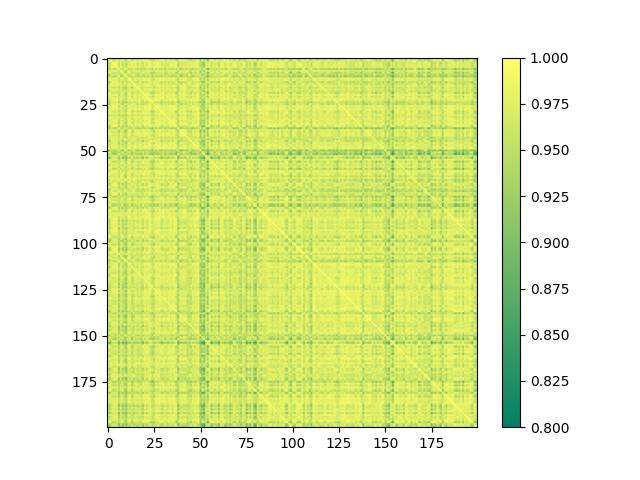}
\includegraphics[width=0.33\linewidth]{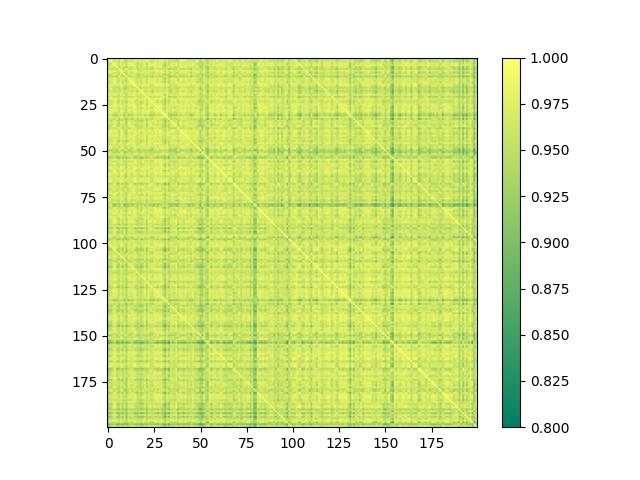}
\caption{Similarities of the attentions between 100 composite queries (first 100 rows/columns) and 100 simple queries (last 100 rows/columns). Attentions are from Layer 15, 17, and 19 of Mistral-7B.}
\label{fig:similarity-low-acc}
\end{figure}

\else

\begin{figure}[h]


\centering 
\includegraphics[width=0.29\linewidth]{image/layer15_high.jpg}
\includegraphics[width=0.29\linewidth]{image/layer17_high.jpg}
\includegraphics[width=0.29\linewidth]{image/layer19_high.jpg}
\caption{Similarities of the attentions between 100 composite queries (first 100 rows/columns) and 100 simple queries (last 100 rows/columns). Attentions are from Layer 15, 17, and 19 of Mistral-7B.}
\label{fig:similarity-low-acc}
\end{figure}

\fi

\cref{fig:similarity-low-acc} shows that the inner attentions have high similarities between the simple task queries and composite task queries. In fact, the similarities between the two groups are at the same level of those within each group.
This suggests the model cannot distinguish between simple and composite tasks. Appendix~\ref{app:similarity} includes results for another composite task with high accuracy, where the attentions have significantly lower similarities. This confirms that the low performance can be contributed to the failure of the model to recognize the composition structure of the task.

\mypara{Average attention of the query.}
We compare the average attention from tokens in the query to tokens in the three groups of examples (simple task 1, task 2, and the composite task).
\cref{fig:metatoken-detail} in the appendix shows that more attention is paid to the composite examples but roughly the same order is also paid to simple ones. So the models likely use both simple/composite task examples for the query.

\subsection{Chain-of-Thought examples may not help}\label{sec:CoT}

\ifdefined\isarxiv

\begin{wrapfigure}{r}{0.5\textwidth} 
%\vspace{-2em}
\centering
\includegraphics[width=0.9\linewidth]{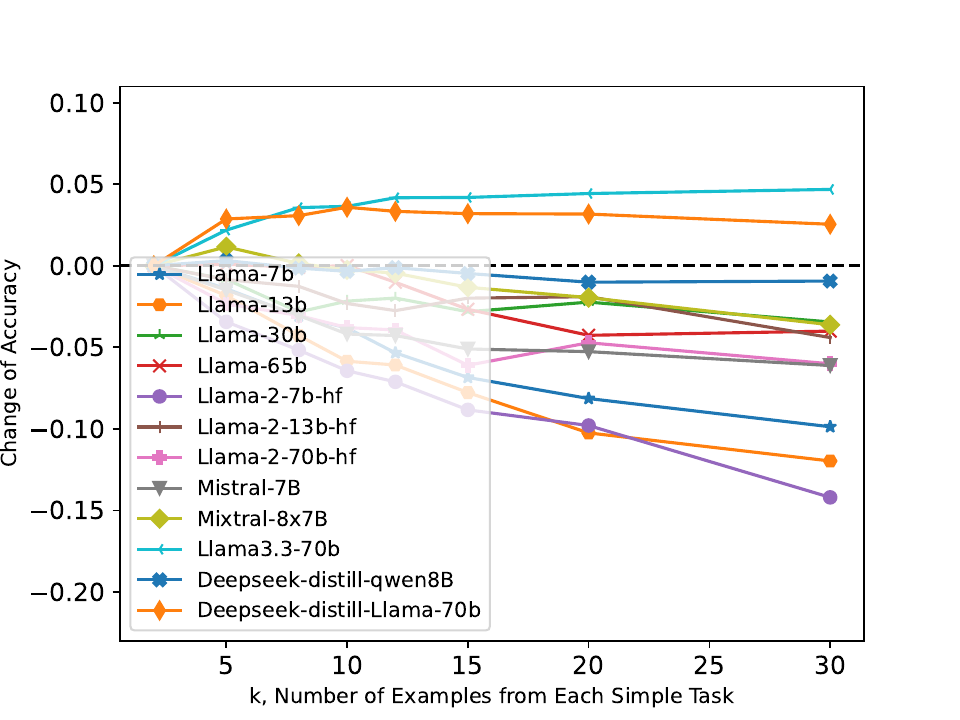}
\caption{CoT cannot help mitigate the negative impact from adding more simple task examples.}
\label{fig:naive-CoT} 
\end{wrapfigure}

Chain-of-Thought (CoT) is popular for multi-hop reasoning and fits our setting, i.e., spliting the composite task examples into steps, each performing only one simple task. Consider an example \strblk{* Rich Humble \# -> Proud Poor} where \strblk{*} denotes opposition and \strblk{\#} denotes swap. Then the corresponding CoT example is \strblk{* Rich Humble \# -> Poor Proud \# -> Proud Poor}, which now includes the intermediate output \strblk{Poor Proud \#}. We replace all the composite examples with their CoT version and redo the experiment in Section~\ref{sec:impact-of-examples}, to examine if CoT can help in-context composition.

\noindent\textbf{Results.} 
\cref{fig:naive-CoT} shows larger $k$'s still lead to worse performance, i.e., CoT does not help mitigate the negative impact of simple task examples, and thus does not help utilize the examples. Furthermore, for a fixed $k$, the performance may not improve over that without CoT (shown in \cref{tab:expcot} below).  
We check the details of the output, and find the model typically performs a Chain-of-Thought and generates two step outputs, but in the intermediate step it may match with wrong examples and perform wrong operations as before. Consider an opposition+swap query \strblk{* Grow Respect \#}. The expected output is \strblk{* Grow Respect \# -> Shrink Disrespect \# -> Disrespect Shrink}. However, the model outputs \strblk{* Grow Respect \# -> Shrink Disrespect \# -> Respect Grow}. In particular, in the second step, the model incorrectly performs both tasks opposition+swap, while it should perform only swap. So applying CoT na\"ively may not help recognize the composition and align basic skills properly. 
%We will design a novel variant of CoT in Section~\ref{sec:expcot} which improves the performance.

\else 

\begin{wrapfigure}{r}{0.5\textwidth} 
\vspace{-2em}
\centering
\includegraphics[width=0.4\textwidth]{}
\caption{CoT cannot help mitigate the negative impact from adding more simple task examples.}
\label{fig:naive-CoT} 
\end{wrapfigure}

Chain-of-Thought (CoT) is popular for multi-hop reasoning and fits our setting, i.e., spliting the composite task examples into steps, each performing only one simple task. Consider an example \strblk{* Rich Humble \# -> Proud Poor} where \strblk{*} denotes opposition and \strblk{\#} denotes swap. Then the corresponding CoT example is \strblk{* Rich Humble \# -> Poor Proud \# -> Proud Poor}, which now includes the intermediate output \strblk{Poor Proud \#}. We replace all the composite examples with their CoT version and redo the experiment in Section~\ref{sec:impact-of-examples}, to examine if CoT can help in-context composition.

\mypara{Results.} 
\cref{fig:naive-CoT} shows larger $k$'s still lead to worse performance, i.e., CoT does not help mitigate the negative impact of simple task examples, and thus does not help utilize the examples. Furthermore, for a fixed $k$, the performance may not improve over that without CoT (shown in \cref{tab:expcot} below).  
We check the details of the output, and find the model typically performs a Chain-of-Thought and generates two step outputs, but in the intermediate step it may match with wrong examples and perform wrong operations as before. Consider an opposition+swap query \strblk{* Grow Respect \#}. The expected output is \strblk{* Grow Respect \# -> Shrink Disrespect \# -> Disrespect Shrink}. However, the model outputs \strblk{* Grow Respect \# -> Shrink Disrespect \# -> Respect Grow}. In particular, in the second step, the model incorrectly performs both tasks opposition+swap, while it should perform only swap. So applying CoT na\"ively may not help recognize the composition and align basic skills properly. 
%We will design a novel variant of CoT in Section~\ref{sec:expcot} which improves the performance.

\fi

\section{Theoretical Analysis}\label{sec:theory}

% motivation and general setup
% The empirical study reveals the crucial role of recognizing the composition and matching the basic skills with corresponding steps in the composition. This section further provides some theoretical analysis that supports the empirical findings. In particular, we focus on whether it is feasible for the model to accomplish the composite task with or without recognizing the composition and matching the basic skills. More formally, we study the error bound achievable by the model given the in-context examples. Since large language models have universal expressive power for various algorithms (e.g.,~\cite{giannou2023looped,malach2024auto}), we reduce the research question to: Given the data examples form the simple and composite tasks, whether there exist learning algorithms that can achieve small errors on the composite task?
The empirical study reveals the crucial role of recognizing the composition and matching the basic skills with corresponding steps in the composition. This section further provides some theoretical analysis on whether it is feasible for the model to accomplish the composite task given the in-context examples. Since large language models have universal expressive power to represent various algorithms (e.g.,~\cite{giannou2023looped,malach2024auto}), we reduce the research question to the existence of learning rules that can achieve small errors on the composite task queries given simple and composite task examples.

\mypara{Theoretical setup.}
A sequence-to-sequence task on a finite vocabulary of tokens $\Sigma$ is associated with an input distribution $\mathcal{D}$ over the input $\mathcal{X} \subseteq \Sigma^*$, and a target function $f: \Sigma^* \rightarrow \Sigma^* $ where $f \in \mathcal{H}$ for some model class $\mathcal{H}$. 
%An example from the task is in the form of $(x, y)$, where $y=f(x)$.
A composite task with the target function $f \in \mathcal{H}^T$ can consist of $T$ steps $f_1, f_2, \dots, f_T \in \mathcal{H}$, such that $f(x) = f_T \circ \cdots \circ f_2 \circ f_1(x)$. 
%The corresponding CoT example is then $(z^{1}, z^{2}, \ldots, z^{T+1})$ where $z^1=x$ and $z^{t+1}=f_t(z^t)$ for $t \in [T]$. 
For simplicity, assume $\mathcal{H}$ is finite, and it includes the identity mapping so that $\mathcal{H} \subseteq \mathcal{H}^T$. 
% The tasks are assumed to be distinguishable: 
% \begin{itemize} \label{ass:distinguishable}
%     \item[\textbf{(D)}] There exists $\epsilon_0 > 0$, such that for any $f \neq g \in \mathcal{H}^T$, $\Pr[f(x) \neq g(x)] > \epsilon_0$.
% \end{itemize}  

\mypara{Learning on composite task examples.} We first consider the case when $k_c$ composite task examples $\mathcal{S}_0 = \{(x_i, y_i): i \in[k_c]\}$ are given, where $x_i$ are i.i.d.\ from $\mathcal{D}$ and $y_i=f(x_i)$ for some $f \in \mathcal{H}^T$. Since $\mathcal{H}^T$ is finite, applying a standard generalization argument on $\mathcal{H}^T$ can show that a large enough $k_c$ allows accomplishing the task (proof in Appendix~\ref{app:theory}).
\begin{proposition} \label{prop:composite-task-examples}
% There exists a learning rule $\mathcal{M}: (\mathcal{X} \times \Sigma^*)^* \rightarrow \Sigma^{\mathcal{X}}$ such that for any distribution $D$ over $\mathcal{X}$ and any $f \in \mathcal{H}$, for every $0 < \epsilon, \delta < 1$, if $k_c \ge \frac{2}{\epsilon}(T \ln |\mathcal{H}| + \ln(1/\delta))$, then with probability at least $1- \delta$ over $\mathcal{S}_c$, we have $\Pr_{x \sim \mathcal{D}}[\mathcal{M}(\mathcal{S}^c)(x) \neq f(x)] \le \epsilon$.
There exists a learning rule $\mathcal{M}: (\mathcal{X} \times \Sigma^*)^* \rightarrow \Sigma^{\mathcal{X}}$ such that for any distribution $\mathcal{D}$ over $\mathcal{X}$ and any $f \in \mathcal{H}^T$, for every $0 < \delta < 1$, we have with probability at least $1- \delta$ over $\mathcal{S}_0$, 
$
\Pr_{x \sim \mathcal{D}}[\mathcal{M}(\mathcal{S}_0)(x) \neq f(x)] \le \frac{1}{k_c}\left(T \ln |\mathcal{H}| + \ln\left(\frac{1}{\delta}\right) \right).
$
%Consequently, under Assumption~(\ref{ass:distinguishable}), if $k_c \ge \frac{1}{\epsilon_0}(T \ln |\mathcal{H}| + \ln(1/\delta))$, then with probability at least $1- \delta$ over $\mathcal{S}_c$, $\mathcal{M}(\mathcal{S}_c) = f$.
\end{proposition}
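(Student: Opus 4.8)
The plan is to treat this as a standard realizable PAC-learning bound for a finite hypothesis class, where the relevant class is the composite class $\mathcal{H}^T$. First I would define the learning rule $\mathcal{M}$ to be a \emph{consistent learner}: given $\mathcal{S}_0$, it returns any hypothesis $h \in \mathcal{H}^T$ satisfying $h(x_i) = y_i$ for all $i \in [k_c]$. Such an $h$ always exists because the data are generated by some $f \in \mathcal{H}^T$, so the target itself is consistent and the set of consistent hypotheses is nonempty (realizability). At this stage I would also record the cardinality estimate $|\mathcal{H}^T| \le |\mathcal{H}|^T$: every element of $\mathcal{H}^T$ arises as $f_T \circ \cdots \circ f_1$ for some tuple $(f_1,\dots,f_T)$, and there are at most $|\mathcal{H}|^T$ such tuples, so the composition map is a surjection from a set of size $|\mathcal{H}|^T$.

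Next I would carry out the usual union bound over the ``bad'' hypotheses. Call $h \in \mathcal{H}^T$ bad if $\Pr_{x \sim \mathcal{D}}[h(x) \neq f(x)] > \epsilon$. For a fixed bad $h$, a single draw satisfies $\Pr_{x}[h(x) = f(x)] \le 1 - \epsilon$, so by independence of the $k_c$ examples the probability that $h$ is consistent with all of $\mathcal{S}_0$ is at most $(1-\epsilon)^{k_c} \le e^{-\epsilon k_c}$. Taking a union bound over the at most $|\mathcal{H}|^T$ bad hypotheses, the probability that $\mathcal{M}(\mathcal{S}_0)$ returns any bad hypothesis is at most $|\mathcal{H}|^T e^{-\epsilon k_c}$. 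Setting this quantity equal to $\delta$ and solving for $\epsilon$ yields $\epsilon = \frac{1}{k_c}\left(T \ln|\mathcal{H}| + \ln(1/\delta)\right)$, exactly the claimed rate; on the complementary event, which has probability at least $1-\delta$, the hypothesis returned by the consistent learner has error at most $\epsilon$, completing the argument.

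This is essentially the textbook Occam / realizable finite-class bound, so there is no genuinely hard step; the points that need care are (i) verifying realizability so that the consistent learner is well defined, and (ii) the cardinality estimate $|\mathcal{H}^T| \le |\mathcal{H}|^T$, which is what converts a generic $\ln|\mathcal{H}^T|$ into the $T\ln|\mathcal{H}|$ appearing in the statement. The only conceptual obstacle, such as it is, is notational: ensuring the learning rule operates over the composite class $\mathcal{H}^T$ rather than $\mathcal{H}$, since it is the compositional depth $T$ that drives the sample complexity and produces the factor of $T$ in the bound. This also foreshadows the contrast with the subsequent analysis, where learning from simple-task examples alone does not directly give access to the correct element of $\mathcal{H}^T$.
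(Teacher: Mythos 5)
Your proposal is correct and follows essentially the same argument as the paper: a consistent learner over the composite class $\mathcal{H}^T$, a union bound over hypotheses with error exceeding $\epsilon$ using $(1-\epsilon)^{k_c} \le e^{-\epsilon k_c}$, and the cardinality bound $|\mathcal{H}^T| \le |\mathcal{H}|^T$ to obtain the factor $T\ln|\mathcal{H}|$. Your explicit remarks on realizability and the surjection argument for the cardinality estimate are points the paper leaves implicit, but they do not change the substance of the proof.
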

This ignores the compositional structure of the task, so the error increases fast with the number $T$ of steps in the composition: it increases linearly with $T$. Furthermore, to build general intelligent systems for various composite tasks (in exponential number $|\mathcal{H}|^T$), it is infeasible to learn from scratch on each individually. We thus aim at the compositional ability: break all composite tasks into simple tasks (i.e., small $|\mathcal{H}|$), learn basic skills on simple task examples, and learn how to compose for a target composite task with a few composite task examples. We next turn to such scenarios.

\mypara{Learning on examples from simple and composite tasks.} 
%Now consider the case with examples from multiple tasks (simple and composite tasks). 
Suppose $\mathcal{S}_t$ is a set of examples from the $t$-th task $(\mathcal{D}_t, f_t) (0 \le t \le T)$. 
Suppose a method $\mathcal{M}(\mathcal{S}_0; \mathcal{S}_1, \ldots, \mathcal{S}_T)$ can focus on the $0$-th task with the help of examples from the other tasks. Formally, we say $\mathcal{M}$ is \emph{focusing} if its expected error on the $0$-th task is no worse than that on any other task, i.e., for any $j \in [T]$,
\begin{align} 
\mathcal{L}_0(\mathcal{M}; (\mathcal{D}_t, f_t)_{t=0}^T) \le \mathcal{L}_j(\mathcal{M}; (\mathcal{D}_t, f_t)_{t=0}^T)
\end{align} 
where $\mathcal{L}_j(\mathcal{M}; (\mathcal{D}_t, f_t)_{t=0}^T) := 
\mathbb{E}_{\mathcal{S}_t \sim (\mathcal{D}_t, f_t), 0\le t \le T } \Pr_{x \sim \mathcal{D}_j}[\mathcal{M}(\mathcal{S}_0; \mathcal{S}_1, \ldots, \mathcal{S}_T)(x) \neq f_j(x)]$ is the expected error of $\mathcal{M}$ on the $j$-th task. When $f_0$ is a composite task composing $f_1, \ldots, f_T$, ideally, the method should use simple task examples to help learn each step in the composition. However, empirical studies show that the method may not distinguish between the two types of examples. Formally, we say that $\mathcal{M}$ does \emph{not distinguish examples from different tasks}, if it is symmetric w.r.t.\ the data sets $\mathcal{S}_t$'s, i.e., for any permutation $\sigma$ on $\{0,1,\ldots,T\}$, the distribution of $\mathcal{M}(\mathcal{S}_{\sigma(0)}; \mathcal{S}_{\sigma(1)}, \ldots, \mathcal{S}_{\sigma(T)})$ is the same as that of $\mathcal{M}(\mathcal{S}_0; \mathcal{S}_1, \ldots, \mathcal{S}_T)$. We derive a lower bound for its error: 
\begin{proposition} \label{prop:confusion}
Suppose there exist $g_1, \ldots, g_T \in \mathcal{H}$ with pairwise difference at least $\Delta$ for some $\mathcal{D}$, i.e., $\min_{i\neq j}$ $ \Pr_{x \sim \mathcal{D}}[g_i(x) \neq g_j(x)] \ge \Delta$. For any $\mathcal{M}$ that is focusing but does not distinguish between examples from different tasks, there exist $f_1, \ldots, f_T \in \mathcal{H}$, $f_0 = f_T \circ \cdots f_2 \circ f_1$, and $\mathcal{D}_t (0\le t\le T)$'s, such that 
$\mathbb{E}_{\{\mathcal{S}_t\}} \Pr_{x \sim \mathcal{D}_0}[\mathcal{M}(\mathcal{S}_0; \mathcal{S}_1, \ldots, \mathcal{S}_T)(x) $ $\neq f_0(x)] = \Omega(\Delta)$.
\end{proposition}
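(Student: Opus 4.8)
The plan is to exhibit a single adversarial instance on which any focusing, non-distinguishing $\mathcal{M}$ is forced to ``hedge'' its prediction across the $T+1$ tasks, and then to convert this hedging into an $\Omega(\Delta)$ lower bound using the pairwise separation of $g_1, \ldots, g_T$. The guiding intuition, matching the empirical findings, is that permutation-invariance forbids $\mathcal{M}$ from using the slot index to learn that the $0$-th data set is the composite one; if in addition the \emph{content} of the composite examples is statistically indistinguishable from that of the simple examples, then $\mathcal{M}$ cannot tell which task it is being asked to solve and must essentially guess.

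First I would build the instance. I would draw the simple targets $f_1, \ldots, f_T$ from the separated family $g_1, \ldots, g_T$ (padding with the identity map, which lies in $\mathcal{H}$, so as to realize $f_0 = f_T \circ \cdots \circ f_1$), and then choose the input distributions $\mathcal{D}_0, \ldots, \mathcal{D}_T$ so that the joint law of the example sets $(\mathcal{S}_0, \mathcal{S}_1, \ldots, \mathcal{S}_T)$ is \emph{exchangeable}, i.e.\ invariant under permuting the slots. The purpose of exchangeability is that, combined with the symmetry of $\mathcal{M}$, it forces all per-task errors to coincide: $\mathcal{L}_0 = \mathcal{L}_1 = \cdots = \mathcal{L}_T$. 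Then the focusing inequalities $\mathcal{L}_0 \le \mathcal{L}_j$ hold automatically with equality, so it remains only to lower bound the common value, equivalently the average $\frac{1}{T+1}\sum_{j=0}^{T} \mathcal{L}_j$. At the same time I would arrange that, on $\mathcal{D}_0$, the target $f_0$ and the competing simple targets are pairwise at distance $\ge \Delta$, so that no single predictor can be accurate for more than one of them at once.

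Next I would run the counting step. Writing $h = \mathcal{M}(\mathcal{S}_0; \mathcal{S}_1, \ldots, \mathcal{S}_T)$ for the (random) predictor produced by $\mathcal{M}$, I would note that at any input $x$ where the candidate targets are pairwise distinct, $h(x)$ equals at most one of them, so $h$ errs on all but one task at $x$. Summing the pairwise bounds $\Pr_{\mathcal{D}}[h \neq g_i] + \Pr_{\mathcal{D}}[h \neq g_j] \ge \Pr_{\mathcal{D}}[g_i \neq g_j] \ge \Delta$ over all pairs and rearranging shows $\sum_{j} \Pr[h \neq \text{target}_j] = \Omega(T\Delta)$; taking expectations and dividing by $T+1$ gives $\mathcal{L}_0 = \frac{1}{T+1}\sum_{j} \mathcal{L}_j = \Omega(\Delta)$, which is the claim.

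The hard part will be the construction in the second step: making the composite data set genuinely exchangeable with the simple data sets while (i) keeping $f_0 = f_T \circ \cdots \circ f_1$ with all factors in the \emph{fixed} class $\mathcal{H}$, and (ii) preserving $\Delta$-separation of the targets on the query distribution. The tension is that a data set's (input, output) pairs reveal its target on the sampled inputs, so identically distributed data sets tend to force identical targets on the example support. I expect to resolve this by decoupling the example inputs from the separation: choosing each $\mathcal{D}_t$ so that on the \emph{example} inputs all tasks look alike (e.g.\ supported where the relevant maps agree, or coupled through a common template so the $T+1$ example sets are i.i.d.), while letting the disagreement between $f_0$ and the simple targets manifest on the evaluation inputs. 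Verifying that such distributions exist for the given $g_i$'s, and that the resulting instance still satisfies the compositional constraint, is the delicate technical core; the symmetrization and the counting argument around it are routine.
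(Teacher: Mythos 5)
Your proposal contains the right low-level pieces (the choice $f_t = g_t$, the use of $\mathcal{M}$'s permutation symmetry, and the pairwise triangle-inequality counting step, which is exactly the paper's Claim), but the step you flag as the ``delicate technical core'' --- constructing $\mathcal{D}_t$'s that make $(\mathcal{S}_0, \ldots, \mathcal{S}_T)$ exchangeable while keeping the targets $\Delta$-separated --- is not merely delicate: it is impossible in this setting. Exchangeability forces $\mathcal{S}_0$ and $\mathcal{S}_j$ to have the same law; with equal sample sizes and a countable input space this forces $\mathcal{D}_0 = \mathcal{D}_j$ and then $f_0(x) = f_j(x)$ for $\mathcal{D}_0$-almost every $x$. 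Your intended escape --- put the example inputs where the maps agree and let the disagreement ``manifest on the evaluation inputs'' --- is unavailable, because in the definition of $\mathcal{L}_j$ the evaluation distribution is the \emph{same} $\mathcal{D}_j$ that generates task $j$'s examples; there is no separate query distribution to decouple. So whenever you achieve exchangeability you have destroyed exactly the separation on $\mathcal{D}_0$ that your counting step needs, and the lower bound collapses to $0$ (indeed, outputting the common restriction of the targets is then a symmetric, focusing rule with zero error). Two further gaps compound this: the lemma ``exchangeability of the data plus symmetry of $\mathcal{M}$ forces $\mathcal{L}_0 = \cdots = \mathcal{L}_T$'' is unjustified (symmetry pins down a single output law $\mu$ over predictors, but the $\mathcal{L}_j$ are expectations under $\mu$ of \emph{different} functionals $h \mapsto \Pr_{x\sim\mathcal{D}_j}[h(x) \neq f_j(x)]$ against separated targets, and nothing makes these coincide); and your plan makes no essential use of the focusing hypothesis, yet some hypothesis must rule out the symmetric rule $\mathcal{M}^*$ that identifies the composite data set from its \emph{content} (fit each $\mathcal{S}_t$, find which one is consistent with a composition of the others) and outputs it --- $\mathcal{M}^*$ is permutation-symmetric and achieves error $\approx 0$ on any fixed collection.

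The paper's proof shows no construction is needed at all and that focusing, which your plan reduces to a vacuous ``holds with equality,'' is the actual lever. Take $f_t = g_t$ and $\mathcal{D}_t = \mathcal{D}$ for every $t$; the data sets are then perfectly distinguishable by content, which is fine. Since $\mathcal{M}$ is symmetric, $\mathcal{M}(\mathcal{S}_{\sigma(0)}; \ldots, \mathcal{S}_{\sigma(T)})$ has the same law as $\mathcal{M}(\mathcal{S}_0; \ldots, \mathcal{S}_T)$ for every permutation $\sigma$. Now apply focusing --- read as a property holding for \emph{every} task collection, which is what the proposition's proof requires and what excludes $\mathcal{M}^*$ above --- to the permuted collection $(\mathcal{D}_{\sigma(t)}, f_{\sigma(t)})_{t=0}^T$ with $j = \sigma^{-1}(0)$: the error of $\mathcal{M}$ against $(\mathcal{D}_{\sigma(0)}, f_{\sigma(0)})$ is at most its error against $(\mathcal{D}_0, f_0)$, i.e., at most the quantity to be lower bounded. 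Averaging this inequality over uniform $\sigma$ turns the left side into $\frac{1}{T+1}\sum_{t=0}^{T}\mathbb{E}_{\{\mathcal{S}_t\}}\Pr_{x\sim\mathcal{D}}[\mathcal{M}(\mathcal{S}_0; \ldots, \mathcal{S}_T)(x) \neq f_t(x)]$, and your own counting argument bounds this average below by roughly $\frac{T-1}{T+1}\Delta$ using only the separation of $g_1, \ldots, g_T$. In short: keep your counting step and the choice $f_t = g_t$, discard the exchangeability scaffolding, and replace it with the observation that symmetry plus focusing-on-permuted-collections are jointly self-contradictory up to $\Omega(\Delta)$.
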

The result shows that when the model class is reasonably rich, there are always cases where the method fails (the error can be as large as the diameter of the model class). Intuitively, such a method may mistakenly confuse simple task examples as data for the whole composition, in which case the examples act as harmful noise as seen in our experiments. Similar observations for na\"ive CoT, i.e., composite task examples consisting of the intermediate outputs in the composition. So it is crucial to present the examples in a way that can let the method distinguish between simple and composite task examples and align the simple task examples with the proper step in the composition.

Now suppose the method knows that $\mathcal{S}_t (t\in[T])$ are examples for the $t$-th step of the composition. Furthermore, suppose $\mathcal{S}_0$ are CoT examples from the composite task, i.e., each example is in the form $(z^1, z^2, \ldots, z^{T+1})$ where $z^1$ is the input $x$ and $z^{t+1} = f_t(z^{t})$ are the intermediate output for $t \in [T]$. We show that such examples have the potential to help the composition.

\begin{theorem}\label{thm:expcot}
Suppose we are given $k_t$ examples $\mathcal{S}_t$ from $(\mathcal{D}_t, f_t)$ for $f_t \in \mathcal{H} (t\in [T])$ and $k_c$ examples $\mathcal{S}_0$ from $(\mathcal{D}_0, f_0)$ with $f_0 = f_T \circ \ldots \circ f_2 \circ f_1$.
Suppose $\mathcal{H}$ is distinguishable: for some $\epsilon_0 > 0$, for any $f \neq g \in \mathcal{H}$ and $\mathcal{D}_t (0 \le t \le T)$, $\Pr_{x \sim \mathcal{D}_t}[f(x) \neq g(x)] > \epsilon_0$.
There exists a learning rule $\mathcal{M}: ((\mathcal{X} \times \Sigma^*)^*)^{T+1} \rightarrow \Sigma^{\mathcal{X}}$ such that for every $0 < \delta < 1$, if 
$
\max(k_c, k_t) \ge \frac{1}{\epsilon_0}\left(\ln |\mathcal{H}| + \ln\frac{T}{\delta}\right), \forall t\in [T],
$
then with probability at least $1- \delta$ over $\{\mathcal{S}_t \}_{t=0}^T$, we have $\mathcal{M}(\mathcal{S}_0; \mathcal{S}_1, \ldots, \mathcal{S}_T) = f_0$. 
\end{theorem}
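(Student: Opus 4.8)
The plan is to exploit the one piece of information absent from the lower-bound setting of Proposition~\ref{prop:confusion}: here the learner knows that $\mathcal{S}_t$ labels the $t$-th step and that $\mathcal{S}_0$ is in CoT form. This turns the composite-learning problem into $T$ essentially independent single-step identification problems, which I would solve by exact recovery of each $f_t$ followed by composition. Concretely, for each step $t \in [T]$ I would have $\mathcal{M}$ build a pool of clean input-output pairs for $f_t$ from two sources: the simple-task set $\mathcal{S}_t = \{(x_i, f_t(x_i))\}$ (with $k_t$ pairs, inputs drawn i.i.d.\ from $\mathcal{D}_t$), and the $t$-th transitions $(z_i^t, z_i^{t+1}) = (z_i^t, f_t(z_i^t))$ read off the CoT examples in $\mathcal{S}_0$ (with $k_c$ pairs). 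The rule keeps whichever source is larger, so it holds at least $m_t := \max(k_c, k_t)$ labeled examples of $f_t$.

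Given these pairs, I would define $\hat f_t$ by \emph{consistency} (equivalently, zero-error ERM over the finite class $\mathcal{H}$): pick any $h \in \mathcal{H}$ that matches the label on every example of step $t$. Such an $h$ exists since $f_t$ itself is consistent, and the output of $\mathcal{M}$ is the composition $\hat f_T \circ \cdots \circ \hat f_1$. The whole argument then reduces to showing that, with high probability, the consistent hypothesis is \emph{unique} at every step and hence equals $f_t$.

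The uniqueness step is a standard realizable-PAC / version-space argument driven by distinguishability. Fix $t$ and a wrong hypothesis $g \in \mathcal{H}\setminus\{f_t\}$. Distinguishability gives $\Pr_z[g(z) \neq f_t(z)] > \epsilon_0$ under the input distribution governing the chosen source, so $g$ agrees with $f_t$ on a single fresh example with probability $< 1-\epsilon_0$, and on all $m_t$ i.i.d.\ examples with probability $< (1-\epsilon_0)^{m_t} \le e^{-\epsilon_0 m_t}$. A union bound over the fewer than $|\mathcal{H}|$ wrong hypotheses yields $\Pr[\hat f_t \neq f_t] < |\mathcal{H}|\, e^{-\epsilon_0 m_t}$, and the assumed bound $m_t \ge \frac{1}{\epsilon_0}(\ln|\mathcal{H}| + \ln(T/\delta))$ forces this to be at most $\delta/T$. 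A final union bound over the $T$ steps gives $\Pr[\exists t:\ \hat f_t \neq f_t] \le \delta$; on the complementary event every $\hat f_t = f_t$, so $\mathcal{M} = \hat f_T \circ \cdots \circ \hat f_1 = f_0$ as functions on $\mathcal{X}$, which is exactly the claim.

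The main obstacle is the input distribution feeding the CoT source at step $t$: it is not $\mathcal{D}_t$ but the pushforward $\mathcal{D}_0^{(t)} := (f_{t-1}\circ\cdots\circ f_1)_{\sharp}\mathcal{D}_0$ of the composite input distribution through the first $t-1$ steps (for $t=1$ it coincides with $\mathcal{D}_0$, but not in general). The disagreement probability $\Pr_{z\sim\mathcal{D}_0^{(t)}}[g(z)\neq f_t(z)]$ must therefore also exceed $\epsilon_0$, so I would read the distinguishability hypothesis as supplying a margin under \emph{every} relevant input distribution — including these induced ones — rather than only under the nominal $\mathcal{D}_0,\ldots,\mathcal{D}_T$. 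With that reading in place, the per-step bound applies verbatim to whichever source is used, and the rest is routine; making this distributional point explicit (or, alternatively, restricting to the larger source and checking distinguishability under exactly the distribution actually used) is the one place the proof needs care.
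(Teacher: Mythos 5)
Your proposal is correct and follows essentially the same route as the paper's own proof: per-step identification of each $f_t$ by a consistency (version-space) rule over the finite class, the bound $|\mathcal{H}|(1-\epsilon_0)^{\max(k_c,k_t)} \le \delta/T$ for each step via a union bound over wrong hypotheses, and a final union bound over the $T$ steps so that $\hat f_T \circ \cdots \circ \hat f_1 = f_0$ on the good event (the paper feeds the consistency rule the union $\mathcal{S}_t \cup \mathcal{S}_{0,t}$ rather than only the larger source, an immaterial difference). The one point where you are more careful is the pushforward issue for the CoT source at step $t$ — the paper dispatches it with ``a similar argument holds for the $k_c$ examples'' and only implicitly concedes in its closing remark that the step-$t$ CoT inputs follow $(f_{t-1}\circ\cdots\circ f_1)_{\sharp}\mathcal{D}_0$ rather than $\mathcal{D}_t$ — so your explicit reading of distinguishability as covering these induced distributions is exactly the clarification the argument needs.
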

The theorem shows that by exploiting the compositional structure, the sample size needed is logarithmic in $T$ (compared to linear in \Cref{prop:composite-task-examples}). Furthermore, the $k_t$ examples from simple task $t$ is now useful for the identification of the $t$-th step. This inspires a new method below. 
%This inspires us to design a new variant of CoT that aligns the simple task examples with steps in the composite task in the next section. 

% \mypara{Learning on expanded CoT examples.} 
% Consider the composition $f_0 = f_T \circ \ldots f_2 \circ f_1$. The outputs of each step are $(z^{1}, z^{2}, \ldots, z^{T+1})$ where $z^1=x$ and $z^{t+1}=f_t(z^t)$ for $t \in [T]$.
% Suppose we also have examples from the simple task $t$, i.e., $(x^t, y^t)$ where $y^t = f_t(x^t)$. Our method views $(x^t, y^t)$ as a composite task example $(z^{1}=?, z^{2}=?, \ldots, z^{t-1}=?, z^t=x^t, z^{t+1}=y^t, z^{t+2}=?, \ldots, z_{T+1}=?)$ where $?$ denotes missing entries.

\subsection{Verification of the insights: the expanded Chain-of-Thought method}\label{sec:expcot}

%Our theoretical analysis shows that if the simple task examples can be aligned with the corresponding step in the composition, then they have the potential to help improve the in-context composition. On the other hand, our empirical study shows that the model fails to distinguish between simple/composite task examples, let alone aligning them, even with na\"ive CoT examples. 
%Our empirical study in the previous sections showed the root of the failure in the in-context composition: the model fails to align the basic skills in the simple task examples to the proper steps in the composition to address the query. Our theoretical analysis confirmed this and further revealed that when the model knows which steps of the composition the examples correspond to, it is possible to exploit the examples and compose the skills demonstrated properly. 
Inspired by insights from our analysis, this section introduces a novel variant of CoT for improving the in-context composition. The main idea is to view the simple task examples as composite task examples with missing steps and expand them into the CoT format with missing steps marked by special symbols. This will explicitly align the examples for better utilization.

For description, recall the composite task consists of $T$ steps $f_1, f_2, \dots, f_T$. A CoT example on input $x$ is $(z^{1}, z^{2}, \ldots, z^{T+1})$ where $z^1=x$ and $z^{t+1}=f_t(z^t)$ for $t \in [T]$. We also have examples $(x^t, y^t)$ from the simple task $t$ where $y^t = f_t(x^t)$. Our method views $(x^t, y^t)$ as a composite task example $(z^{1}=???, \ldots, z^{t-1}=???, z^t=x^t, z^{t+1}=y^t, z^{t+2}=???, \ldots, z_{T+1}=???)$ where $???$ denotes missing entries. 
%Equivalently, we can add a string of \strblk{Step}+\textsc{str}($t$) to the front of $x^t$ and add \strblk{Step}+\textsc{str}($t+1$) to $y^t$, where \textsc{str}($\cdot$) converts an integer into its string. For the composite task example, we can add a string of \strblk{Step}+\textsc{str}($t$) to $z^{t}$. Then the steps across examples from different tasks are explicitly aligned. 
Algorithm~\ref{alg:expcot} formally describes the method. It goes over all examples and adds the strings of steps to each example. 
For illustration, consider the composite task opposition+swap. A CoT example \strblk{* Rich Humble \# -> Poor Proud \# -> Proud Poor} can be viewed as (\strblk{* Rich Humble \#}, \strblk{Poor Proud \#}, \strblk{Proud Poor}), which is converted by our method to (\strblk{Step1: * Rich Humble \#}, \strblk{Step2: Poor Proud \#}, \strblk{Step3: Proud Poor}). 
%The converted data point can be used as an in-context input sequence \strblk{Step1: * Rich Humble \# Step2: Poor Proud \# Step3: Proud Poor}. 
Similarly, an example from the opposition task \strblk{* Dry Lie -> Wet Stand} is converted to (\strblk{Step1: * Dry Lie}, \strblk{Step2: Wet Stand}, \strblk{Step3: ???}). An example from the swap task \strblk{Sad Less \# -> Less Sad} is converted to (\strblk{Step1: ???}, \strblk{Step2: Sad Less \#}, \strblk{Step3: Less Sad}).

\begin{algorithm}[!t]
\caption{Expanded Chain-of-Thought (\textsc{ExpCoT})}
\label{alg:expcot}
\begin{algorithmic}[1]
\REQUIRE{Chain-of-Thought examples $\mathcal{S}_0 = \{(z^1_{i}, z^2_{i}, \ldots, z^{T+1}_i): i \in [k_c]\}$ from the composite task of $T$ steps, and examples $\mathcal{S}_t = \{(x^{t}_i, y^{t}_i): j \in [k_t] \}$ from simple task $t \in [T]$} 
\FOR{$i \in [k_c], t \in [T+1]$}
\STATE $z^t_i \leftarrow$ \strblk{Step}{+}\textsc{str}($t$){+}\strblk{: }{+}$z^t_i$ \hfill $\rhd$ \textsc{str} converts an integer into a string
\ENDFOR
\FOR{$t \in [T], i \in [k_t]$}
%\STATE $x^{t}_i \leftarrow$ \strblk{Step} + \textsc{str}($t$) + \strblk{: }  + $x^{t}_i$, \quad $y^{t}_i \leftarrow$ \strblk{Step} + \textsc{str}($t+1$) + \strblk{: }  + $y^{t}_i$
\STATE Replace $(x^{t}_i,y^{t}_i)$ with $(v^{t,1}_{i}, \ldots, v^{t,T+1}_i)$, where $v^{t,j}_i \leftarrow$ \strblk{Step}{+}\textsc{str}($j$){+}\strblk{: ???} for $j\not\in \{t, t+1\}$, $v^{t,t}_i \leftarrow$ \strblk{Step}{+}\textsc{str}($t$){+}\strblk{: }{+}$x^{t}_i$, and $v^{t,t+1}_i \leftarrow$ \strblk{Step}{+}\textsc{str}($t+1$){+}\strblk{: }{+}$y^{t}_i$ 
\ENDFOR
\ENSURE The updated data $\mathcal{S}_c, \mathcal{S}_t$ for $t\in[T]$
\end{algorithmic}
\end{algorithm}

\begin{table}[th]
\setlength{\tabcolsep}{1pt}
\scriptsize
    %\hspace{-0.8cm}
    \centering
    \begin{tabularx}{\textwidth}{c | *{12}{X}}%{c|ccccccccccccc}
    \toprule
     & $\quad$ L-7B &  L-13B & L-30B & L-65B & L2-7B & L22-13B & L2-70B & M-7B & M-8x7B & L3-70b & D-8B & D-70B\\ 
    \midrule
    Vanilla & $\quad$ 32.6& 56.2& 67.6& 63.4& \textbf{49.6} & 68.7& 80.8& 66.1& 71.2& 77.2& 58.2& 71.3\\ 
    CoT &  $\quad$ 42.2& 51.2& 72.7& 64.0 &45.9& 65.7& 77.6& 64.9& 77.6 & \textbf{92.2} & 60.7& 85.9\\ 
    ExpCoT & $\quad$ \textbf{47.5} & \textbf{58.1} & \textbf{77.4} & \textbf{75.7}& 47.9 & \textbf{70.4} & \textbf{87.2} & \textbf{74.3} & \textbf{87.5} & 91.3 & \textbf{75.1}& \textbf{88.7} \\ 
    \bottomrule
    \end{tabularx}
    %\medskip
    \caption{The accuracy (\%) averaged over tasks ($k=30, k_c=2$). L: Llamma, L2/3: Llamma2/3.3, M: Mistral, D: Deepseek.  Best results are \textbf{boldfaced}.}
    \label{tab:expcot}
    \vspace{-2em}
\end{table}

\begin{figure*}[!h]
\centering
\subfloat[Vanilla (without ExpCoT)]{
\includegraphics[width=0.4\linewidth]{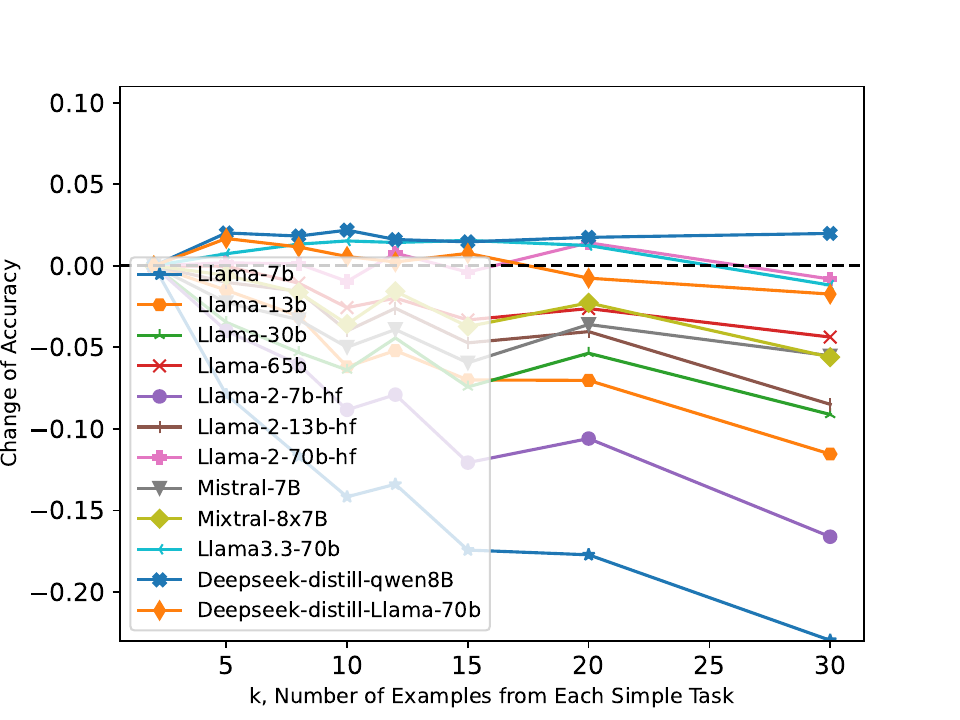}
}
\subfloat[ExpCoT]{
\includegraphics[width=0.4\linewidth]{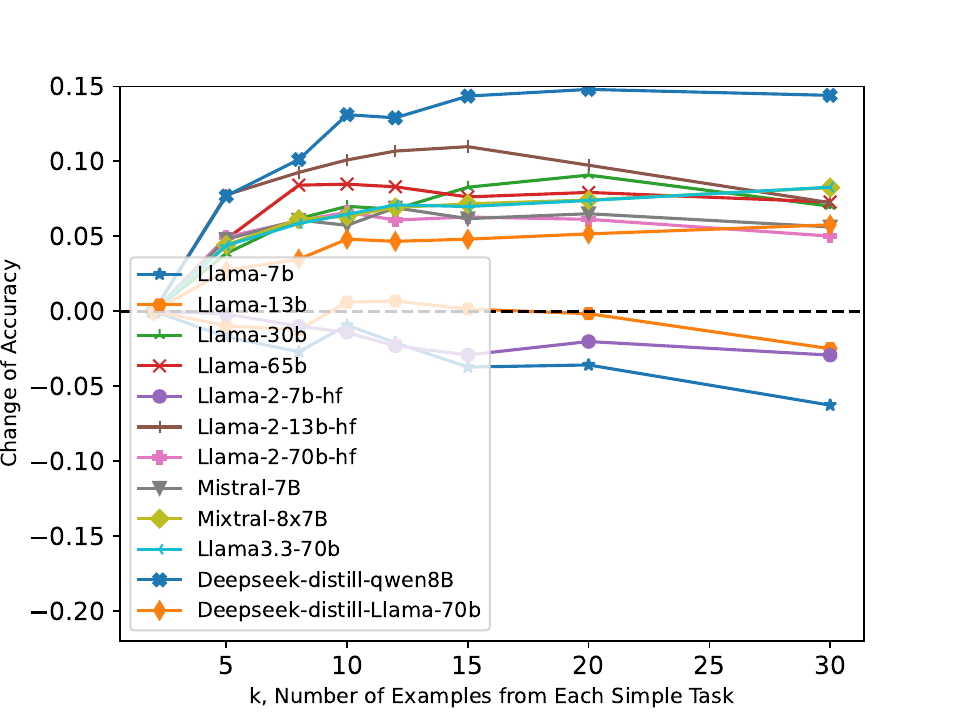}
} 
\caption{The impact of more simple task examples for without or with ExpCoT ($k_c=2$). 
%Left: without ExpCoT; Right: with ExpCoT. $x$-axis is the number of in-context examples from each simple task, and $y$-axis is the average drop of the accuracy compared to that for $k=2$ (averaged over all the tasks and $k_c=2$). Each curve in the figure corresponds to a model. 
%The results show that with ExpCoT, the models can now utilize simple task examples to help do the composition, except two small models. More detailed experimental results are presented in the appendix. 
}
\label{fig:expcot-increasing-k-example}
\vspace{-1.5em}
\end{figure*}

\mypara{Evaluation.}
We first use Algorithm~\ref{alg:expcot} on the examples and then redo the experiment in Section~\ref{sec:impact-of-examples}.  
\cref{tab:expcot} compares the average accuracy of without CoT, na\"ive CoT, and our ExpCoT, and shows that ExpCoT leads to significant improvement. 
We also compare the impact of simple task examples in the two cases of without and with ExpCoT in~\cref{fig:expcot-increasing-k-example}. With ExpCoT, the models can now utilize simple task examples better, except some small models likely because they still cannot identify the skills from the simple task examples due to the limited capacity. 
%All other models now can utilize the simple task examples and achieve increasing performance with more such examples, in sharp contrast to the decreasing performance in the case without ExpCoT. 
These results demonstrate that ExpCoT can help the model utilize the basic skills demonstrated in-context.

% \begin{figure*}[ht]
% \begin{center}
% \includegraphics[width=0.48\linewidth]{image/pic16.png}
% \includegraphics[width=0.48\linewidth]{image/pic17.png}
% \end{center}
% \caption{Two representative examples of our method ExpCoT. Note that for 2 small models, there is still a decreasing trend. However, we tend to see this trend as a result for too small models.
% }
% \label{fig:expcot_example}
% \end{figure*}
% \begin{figure*}[ht]
% \begin{center}
% \includegraphics[width=0.48\linewidth]{image/pic16.png}
% \includegraphics[width=0.48\linewidth]{image/pic17.png}
% \end{center}
% \caption{Two representative examples of expcot when $k_c = 0$.
% }
% \label{fig:expcot_example}
% \end{figure*}

%\paragraph{Ablation.} \YL{maybe add some ablation here}
%\YL{Note: The reason that expcot performs limited on some tasks when k is small is because for step1:step2:step3: format, because we only give out step1: xxx step2:, we tend to see that the model hallucinated step2 - > step3 with the empty step, that is, output step3:??? even step 2 is correct. RE: should do the expCot without step3:??? in the examples to see if we can improve it.}
\section{Conclusions and Limitations}\label{sec:conclusion}

This work studied the in-context composition ability of language models. Empirical studies of representative models on linguistic and logic tasks showed they in general have limited such ability due to the failure to recognize the composition and identify skills for the steps of the composition. Theoretical analysis showed that it is crucial to align skills from examples with the steps. 
%A new method ExpCoT was proposed to align examples, which achieved significant improvement.

%\mypara{Limitations.} 
Note that typical text data may already have annotations for the basic skills, e.g., ``by the Pythagorean Theorem'', which can act as the annotation ``Step1'' in our ExpCoT method. Our studies suggest that such annotations can be crucial for the success of composition, and adding more such annotations can improve the performance. While annotation can be expensive for web-scale datasets if done via human supervision, one alternative way is to use LLMs to do the annotations and use the annotated data for self-boosting. Furthermore, it suggests synthesizing data with annotations to help the model learn to compose. These are an interesting research directions, which we will leave for future work. Also note that due to resource limitations, our empirical studies do not include the most powerful models like GPT-5, nor consider complex tasks like those targeted by AI assistants. 
The results from this work hopefully pave the road for investigations in more sophisticated models and tasks. 

%\section{Limitations}\label{sec:limitation}

%\YL{Note that we haven't checked some of the recent open models like Deepseek Llama4 Qwen3 (\textbf{Qwen/Qwen3-32B}). We also haven't checked other types of composite tasks such as those in Sanjeev Arora's papers}

%\newpage
% \section*{Ethics Statement}
% Our work aims to improve the theoretical understanding of compositional tasks in in-context learning. Our paper is mostly academic in nature and we foresee no immediate negative ethical impact. We discover an unexpected phenomenon of LLM's compositional ability and provide detailed analyses of it, which may have a positive impact on the AI community. We hope our work will inspire effective algorithm design and promote a better understanding of the compositional ability of LLMs.

% \section*{Reproducibility Statement}
% For theoretical results in the \Cref{sec:theory}, a complete proof is provided in the \Cref{app:theory}. 
% For experiments in the \Cref{sec:empirical}, complete details and experimental
% results are provided in the \Cref{app:exp}. The source code with explanations and comments is
% provided in supplementary materials.

\bibliographystyle{alpha}
\bibliography{ref}

\newpage

\tableofcontents

\newpage
\appendix
\begin{center}
    \textbf{\LARGE Appendix }
\end{center}

\section{More Discussion on Related Work}\label{app:related} 

\subsection{Large language models} 

LLMs are often Transformer-based \citep{vaswani2017attention} equipped with tmassive parameter sizes and extensive pretraining data~\cite{openai2023gpt4,claude3,deepseekr1,qwen2.5,grattafiori2024llama}. The training pipeline of LLM often involce pretraining and post-training. LLMs commonly adopt auto-regressive pretraining strategies~\citep{radford2018improving, radford2019language, brown2020language}. Significant research has focused on post-training methods to adapt LLMs for various tasks, such as multitask finetuning~\citep{sanh2021multitask,wang2023multitask,xu2024towards}, instruction tuning~\cite{chung2022scaling,mishra2022cross,selfinstruct}, in-context learning~\citep{min2022rethinking,dong2022survey,yao2023tree}, and reinforcement learning from human feedback (RLHF)~\citep{ouyang2022training,rafailov2023direct,shao2024deepseekmath}. As LLMs continue to scale in size, numerous studies have focused on improving their deployment efficiency, including memory management \cite{xiao2024efficient,deepseekv2,dao2022flashattention,dao2023flashattention2} and inference acceleration~\citep{gll+24c,gls+24a,xu2024adainf,xu2025adallava}.

\subsection{In-context learning and Chain-of-Thought}

\mypara{In-context learning.}
LLM exhibits a remarkable ability for in-context learning (ICL)~\citep{brown2020language, openai2023gpt4,google2023gemini,claude3}, which allows pretrained LLMs to solve specific tasks by conditioning on a few prepended in-context examples, without requiring any updates to the model parameters. 
Several empirical studies investigate the behavior of ICLs. \cite{zhao2021calibrate,lu2022fantastically} formulate the problems and analyze the sensitivity of LLMs to in-context examples sequences. \cite{min2022rethinking,wei2023larger} investigate on how LLMs performance change react to the ground-truth label and text demonstrations within in-context examples. 
\cite{rubin2022learning,liu-etal-2022-makes, hongjin2022selective,wang2023large} propose methods to effective selection of in-context learning examples. 
\cite{chen-etal-2022-meta, min-etal-2022-metaicl} use meta training with an explicit in-context learning object to enhance performance. 
Theoretically, \cite{xie2022an} provide a bayesian framework to explain the working mechanism of in-context learning. ~\cite{garg2022can,pmlr-v202-von-oswald23a, rek2023what, mahankali2023one, zhang2023trained,shi2024why} investigate with linear models, showing how transformers can represent gradient descent and conduct linear regression. \cite{guo2024how} provide analysis on how ICL works in non-linear functions. Based on these works, we present an analysis demonstrating how LLMs can exhibit compositional capabilities in ICL tasks.

\mypara{Chain-of-Thought reasoning.}
Chain of thought (CoT) is widely used to solve multi-step reasoning questions~\cite{kojima2022large,wei2022chain}. CoT generates
an intermediate reasoning process in language before outputting the final answer. Typical CoT methods prompt LLMs to produce these intermediate steps either in zero-shot or few-shot settings. Zero-shot-CoT adds instructions such as "Let's think step by step" in prompts~\cite{kojima2022large} while few-shot-CoT provides several examples with step-by-step reasoning as in in-context demonstrations~\cite{brown2020language,wei2022chain}. Typical few-shot-CoT improves LLM's reasoning ability with manually designed demonstrations~\cite{khot2022decomposed,zhou2023leasttomost,li2023making,wang2023selfconsistency}. Another line of research focuses on automatically selecting demonstrations, eliminating the need for manual construction \cite{zhang2023automatic}. 
Several theoretical work have been proposed to analyze the effecitiveness of CoT. \cite{liu2023transformers} studies the expressiveness of shallow
transformers, \cite{feng2023towards,li2024chain} further shows CoT allows for performing more serial computations that a vanilla transformer without CoT, increasing the effective depth of a transformer. \cite{joshi2025theory} present a uniform framework that allows for universal representability and computationally tractable chain-of-thought learning. \cite{abedsoltan2025task} analyzes the task generalization enabled by composition. Our theoretical analysis is partially inspired by \cite{joshi2025theory,abedsoltan2025task} but considers a different setting with both simple/composite task examples, and also analyzes when the composition can fail. 

\subsection{Compositional task learning}

Solving complex tasks and reasoning of LLM is an active area of LLM research field~\cite {huang2022towards,sinha2024a}. There is a line of empirical works explored the compositional capabilities of LLMs in abstract reasoning tasks under ICL settings~\citep{kim-linzen-2020-cogs,levy2022diverse}. \cite{an2023does,an2023context} show LLMs are capable of learning abstract reasoning (e.g., grammar) to perform new tasks when finetuned or appropriate in-context examples. \cite{ye2023compositional, dziri2023faith,thomm2024limits,xu2024do} show that LLMs can handle simple sub-tasks, but often struggle with tasks that require composing multiple sub-tasks. \cite{press2023measuring} shows that such challenges can be mitigated through the use of chain-of-thought prompting. \cite{berglund2024the} reveals LLMs trained on relations like ``A is B'' fail to learn inverse relations  ``B is A''. \citep{zhao2024can} demonstrates that small-scale LLMs can learn and generalize compositional skills through fine-tuning on tasks involving skill combinations. \citep{song2025out,yang2024large,brinkmann2024mechanistic,guo2025llms,hong2024transformers} provide mechanistic analyses on how LLMs tackle compositional reasoning tasks. Our work conducts experiments on the tasks from~\cite{xu2024do} and finds that LLMs fail on composite tasks when logical steps are intermixed. Our work further provides empirical and theoretical analysis on why the composition can succeed or fail and introduces an improved method.

\section{Experimental Details and More Results} \label{app:exp}

\subsection{Details of the dataset and setup}\label{app:data}

We use the dataset from~\cite{xu2024do}, and construct 9 composite tasks for our experiments. The details can be found in~\cite{xu2024do}, while here we provide some illustrations for convenience.

The composite tasks are compositions using eight simple tasks listed in \cref{tab:logical_example_simple}. 
We use these simple tasks to construct the following composite tasks: opposition+swap (named 'oppopair swap' in the code), opposition+pastTense ('oppoverb'), pastTense+swap ('verbpair swap'), capitalization+swap ('upperswap'), swap+capitalization ('swapupper'), capitalization+twoSum ('upper twosum'), pastTense+plusOne ('verbsingle plusone'), pastTense+capitalization ('verbsingle upper'), plusOne+capitalization ('plusone upper'). 

%%===========%%%%%%%%===========%%%%%%%%===========%%%%%%%%===========%%%%%%%%===========%%%%%%%%===========%%%%%%%%===========%%%%%%%%===========%%%%%%%%===========%%%%%%%%===========
%============================================
\begin{table}[ht]
\begin{center}
\resizebox{0.7\linewidth}{!}
{
\begin{tabular}{clll}
\toprule
\textbf{ Tasks} & \textbf{ Task} & \textbf{ Input}   & \textbf{ Output}\\
\midrule[0.5pt]
\textbf{Words} & 
\footnotesize	{(A) Capitalization}
&
\begin{minipage}[t]{2.9cm}
\footnotesize	{apple}
\end{minipage}
&
\footnotesize	{APPLE}
\\ 
\cmidrule{2-4}
% \midrule
& 
\footnotesize	{(B) Swap}
&
\footnotesize	{bell ford}
&
\footnotesize	{ford bell}
\\ 
% \midrule
\cmidrule{2-4}
 & 
\footnotesize	{(C) Two Sum}
&
\footnotesize	{twenty @ eleven}
&
\footnotesize	{thirty-one}
\\ 
% \midrule
\cmidrule{2-4}
& 
\footnotesize	{(D) Past Tense}
&
\footnotesize	{pay}
&
\footnotesize	{paid}
\\
\cmidrule{2-4}
 & 
\footnotesize	{(E) Opposite}
&
\footnotesize	{Above}
&
\footnotesize	{Below}
\\ 
\midrule
\textbf{Numerical} & 
\footnotesize	{(F) Plus One}
&
\footnotesize	{435}
&
\footnotesize	{436} 
\\
\cmidrule{2-4}
& 
\footnotesize	{(G) Modular}
&
\footnotesize	{15 @ 6}
&
\footnotesize	{3} 
\\
\cmidrule{2-4}

& 
\footnotesize	{(H) Two Sum Plus One}
&
\footnotesize	{12 \# 5}
&
\footnotesize	{18} 
\\
\bottomrule
\end{tabular}
}
\end{center}
\caption{The collection of simple logical tasks. This table is adopted from~\cite{xu2024do}.
}
\label{tab:logical_example_simple}
\end{table}

% \mypara{Logical Tasks.} The logical composite tasks are compositions using eight simple tasks listed in \cref{tab:logical_example_simple}. 
% We use these simple tasks to construct the following composite tasks:
% capitalization+swap, twoSum+capitalization, capitalization+plusOne, swap+pastTense, opposition+swap, pastTense+plusOne, modular+twoSumPlus.  

% \input{import/logical_simple}

% \mypara{Linguistic Tasks.} The dataset has two linguistic composition tasks. 

% \textbf{(T1) Phrase Recombination with Longer Chain.} It composes two simple tasks: \textit{Phrase Recombination}: integrate a prepositional phrase (e.g., ``A in B'') into a specific grammatical role (e.g., ``subject'', ``object''); \textit{Longer Chain}: Extend the tail of the logical form in sentences. The composite task merges them to form a composite task: substitute the sentence subject in the Longer Chain task with a prepositional phrase from the Phrase Recombination task. 

% \textbf{(T2) Passive to Active and Object to Subject Transformation.} It composes two simple tasks: \textit{Passive to Active}: Transitioning sentences from passive to active voice. \textit{Object to Subject}: Changing the same object (a common noun) from objective to subjective. They are merged to form the composite task, where both transformations are applied simultaneously to the input sentence. 

\mypara{Experimental Setup.}
For each composite task, the test prompts are generated using the code from the dataset. Four random seeds are used; for each random seed, $n$ test prompts are generated and the in-context examples in each test prompt are randomly shuffled. The number of test prompts $n$ is set to 100 for most composite tasks, except for two composite tasks with a small amount of data: $n$ is set to the maximum number $78$ for opposition+pastTense, and $n$ is set to the maximum number $84$ for pastTense+plusOne and pastTense+capitalization. Four NVIDIA H800 GPUs are used for the experiments. 

\subsubsection{Shuffling v.s.\ no shuffling: the effect of the order of in-context examples}\label{app:order}

The order of in-context examples is known to affect the performance of in-context learning. Here we perform an experiment comparing the results in four settings. (1) Shuffling: all the examples (simple and composite task examples) are randomly shuffled, and average accuracy over 4 random seeds is reported. (2) Composition Last: the context consists of simple task 1 examples, followed by simple task 2 examples, and lastly the composite task examples. (3) Composition Middle: the context consists of simple task 1 examples, followed by the composite task examples, and lastly simple task 2 examples. (4) Composition First: the context consists of the composite task examples, followed by simple task 1 examples, and lastly simple task 2 examples. 

\cref{fig:shuffle} shows the results of two models Llama-2-7B and Mistral-7B on the opposition+swap task. The accuracies for the 4 settings are drastically different. This shows that the order of the examples indeed has a strong influence on the result. Such an influence can blur our investigation.  Therefore, we randomly shuffle the examples to remove such an influence. 

% We have done quite a lot of experiment on in-context composition. However, for these experiments, there is one setting in prompts: shuffling. What would happen if we stop shuffling our prompts?

% Accompanied with chain-of-thoughts, we try not to shuffle the prompt in expectation of better results. We delete the shuffle step in our setup, and we would like to see what will happen under this case.

\begin{figure*}[!h]
\centering
{
\includegraphics[width=0.48\linewidth]{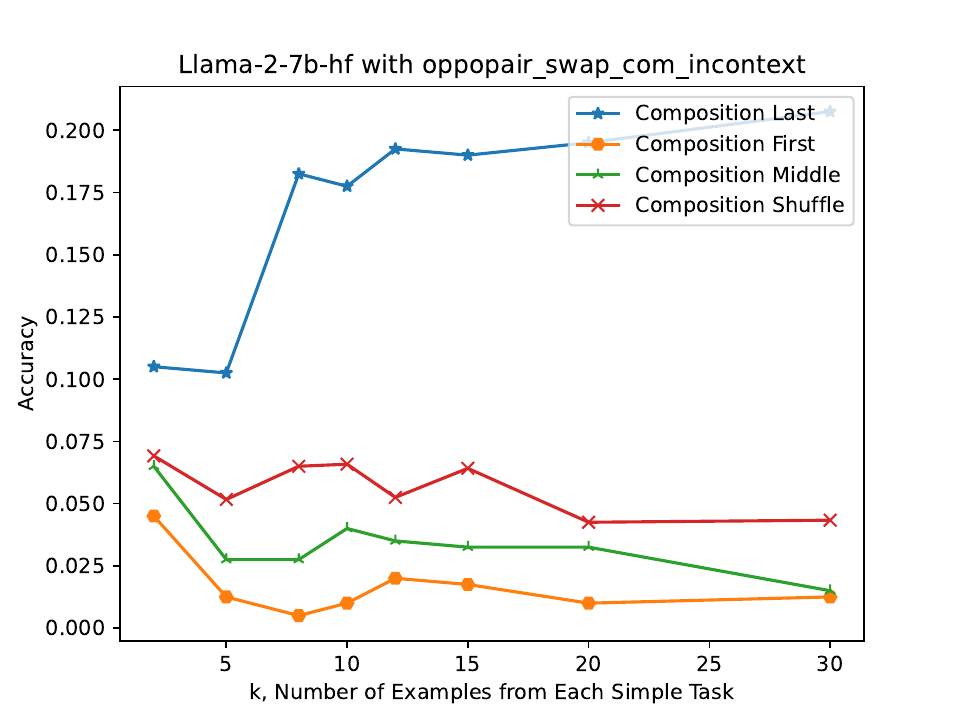}
\includegraphics[width=0.48\linewidth]{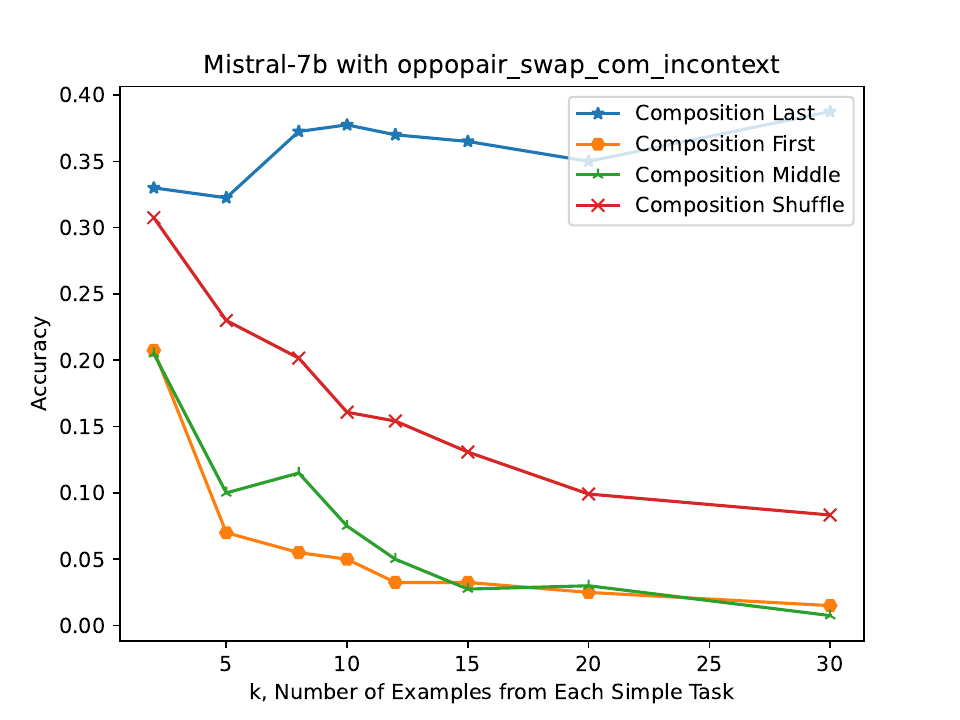}
}
\caption{The effects of shuffling v.s.\ no shuffling.}
\label{fig:shuffle}
\vspace{-1em}
\end{figure*}

% When accompanied with chain-of-thoughts, deleting shuffle seems to have some kinds of performance increase. However, we are surprised to see that when in well order, if the composite task are far away from the first token, the performance is very bad. Only when composite task examples are near the last token could they perform really well.

% We think that this happens because the model would do such things about distance learning. However, we still do not have any theory and other results to prove it. But we infer that distance matters in in-context learning. Because our original setting shuffles all the examples, making the distance being averaged, so there is no big difference via the prompts in the aspect of distance.

\subsection{Detailed results for the effect of in-context examples} \label{app:impact-of-examples}

In this section, we present detailed results from our experiments on the effect of in-context examples.

\mypara{In-context simple task examples.}
\cref{fig:increasing-k-example-selected-tasks_fullversion} shows the effect of in-context simple task examples for each $k_c$ and model (the reported accuracy is averaged over tasks). More precisely, we draw a subfigure for each model and draw a curve for each $k_c$; the $x$-axis is the number $k$ of examples from each simple task, and $y$-axis is the accuracy averaged over all the composite tasks.

\cref{fig:increasing-k-example-selected-tasks_fullversion_averageoverkc} shows the effect of in-context simple task examples for each task and model (the reported accuracy is averaged over $k_c$). More precisely, we draw a subfigure for each model and draw a curve for each task; the $x$-axis is the number $k$ of examples from each simple task, and $y$-axis is the accuracy averaged over all the $k_c$ values.

From the detailed results, we can see that, the larger models like Llamma-2-70B and Mixtral-8x7B achieve quite high accuracies on many tasks when $k_c$ is large. The high accuracy do not change much for different $k$ and thus the negative impact of more examples from simple tasks is not significant. However, on harder tasks like opposition+swap, the negative impact is again substantial. 

\begin{figure*}
\includegraphics[width=0.33\textwidth]{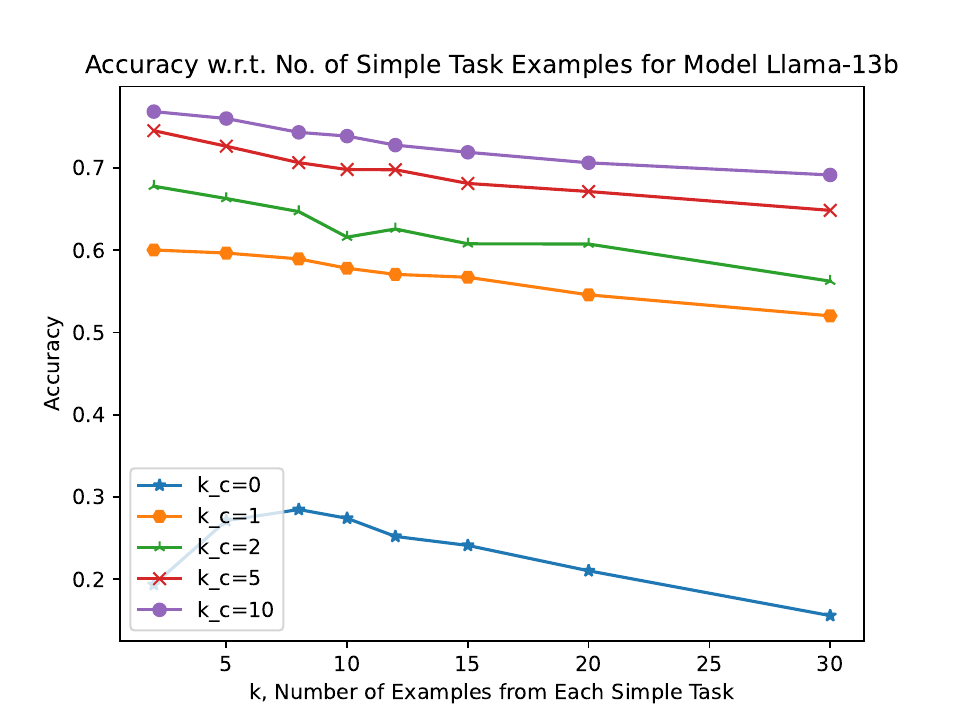}
\includegraphics[width=0.33\textwidth]{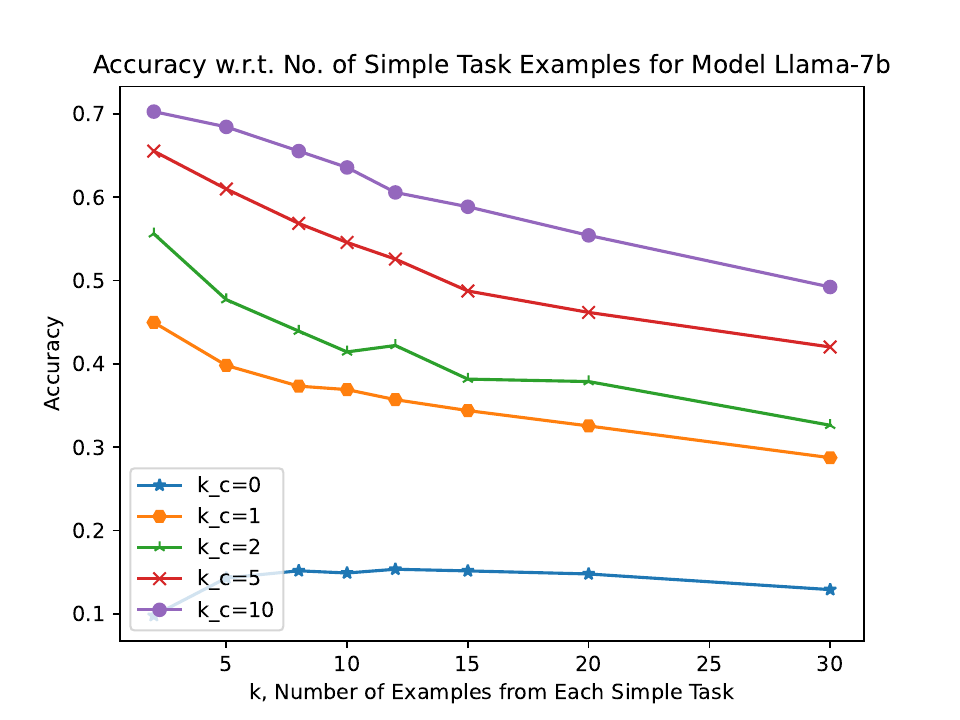}
\includegraphics[width=0.33\textwidth]{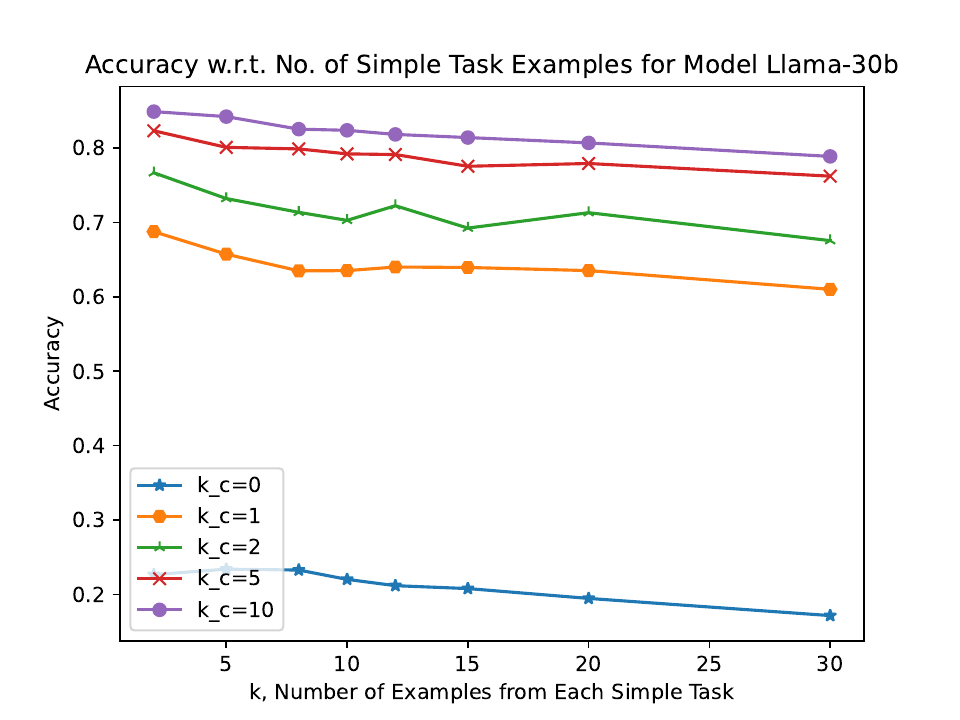}

\includegraphics[width=0.33\textwidth]{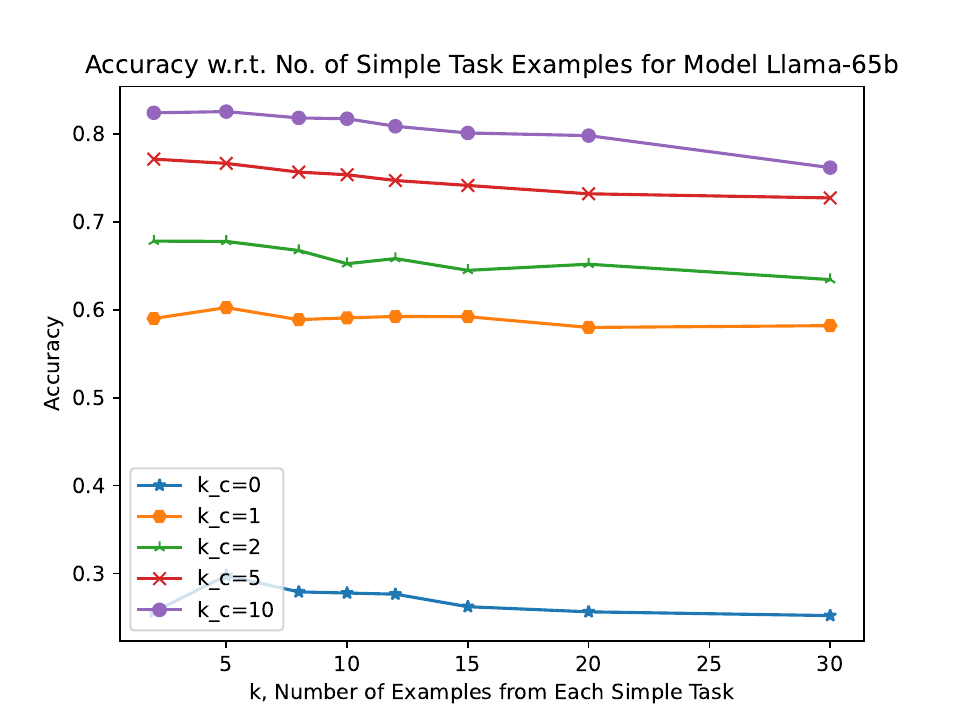}
\includegraphics[width=0.33\textwidth]{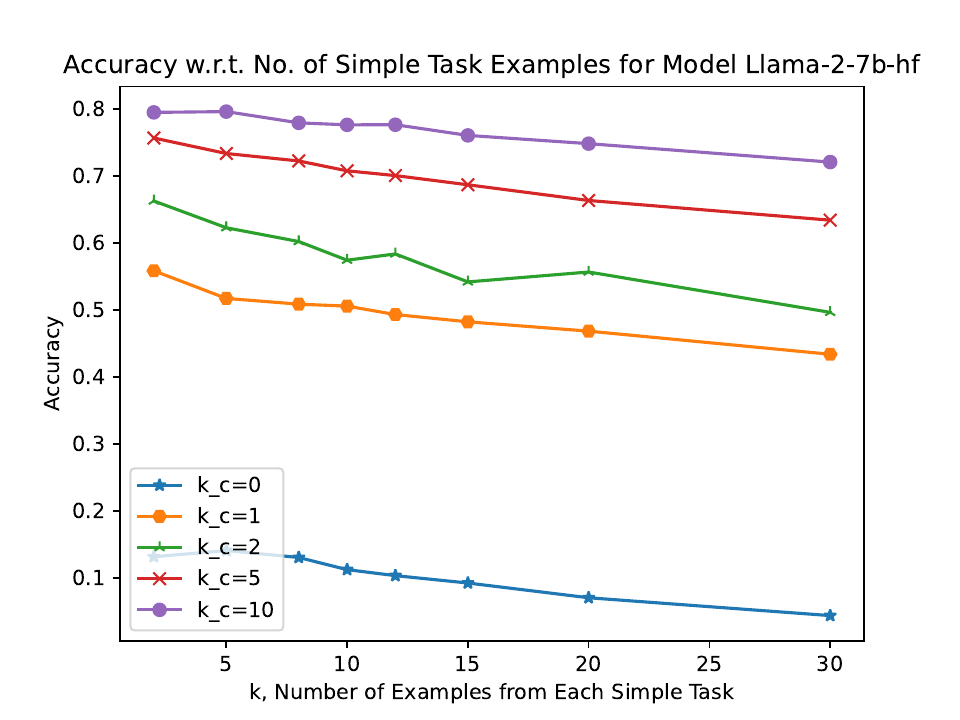}
\includegraphics[width=0.33\textwidth]{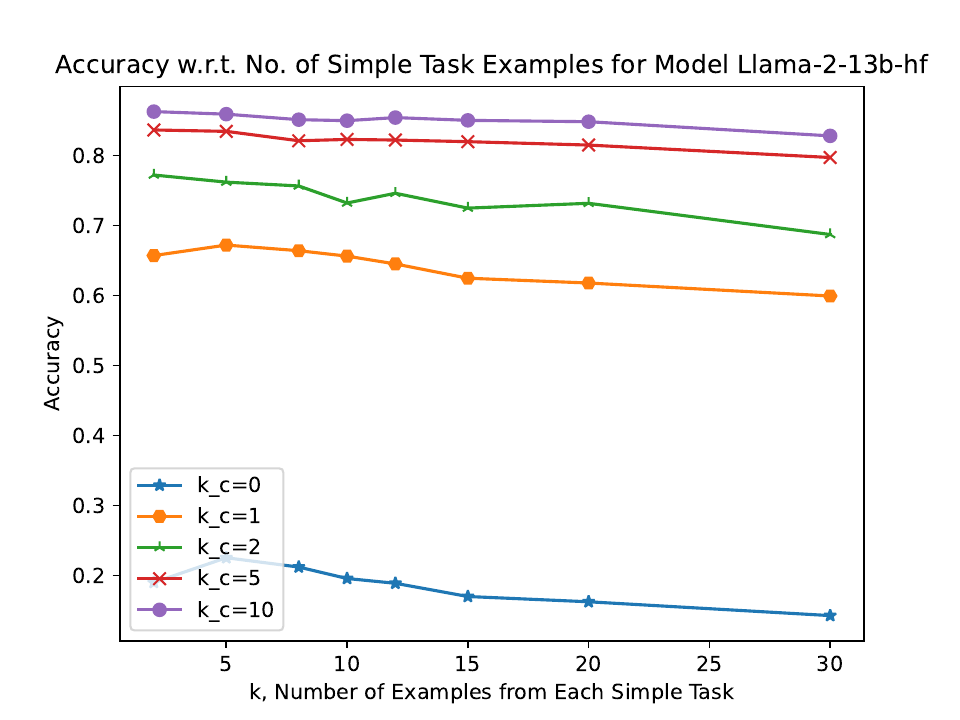}

\includegraphics[width=0.33\textwidth]{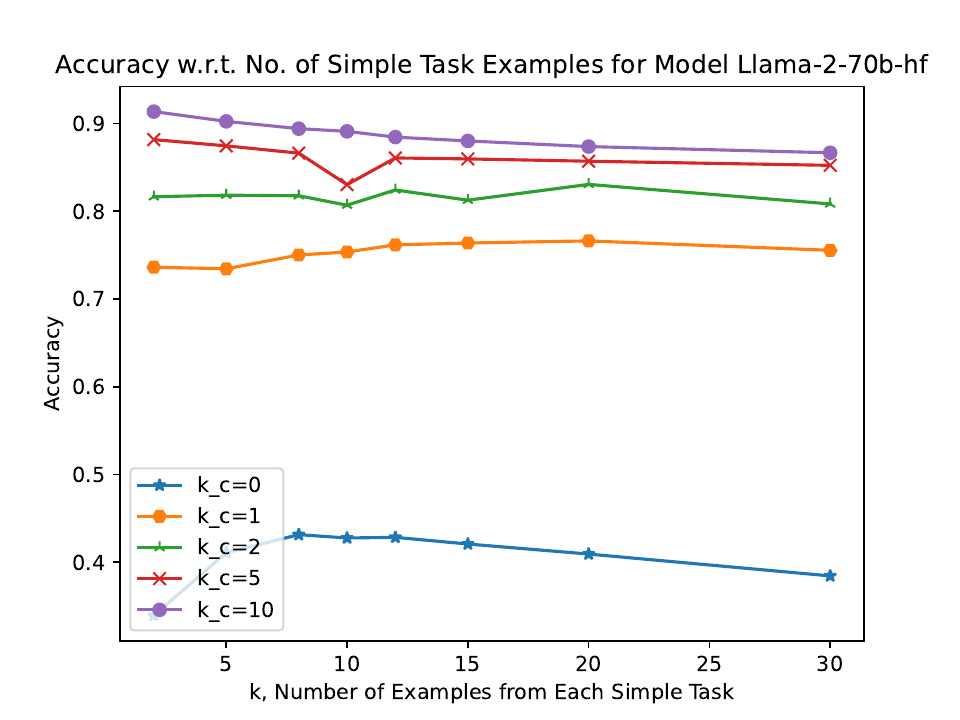}
\includegraphics[width=0.33\textwidth]{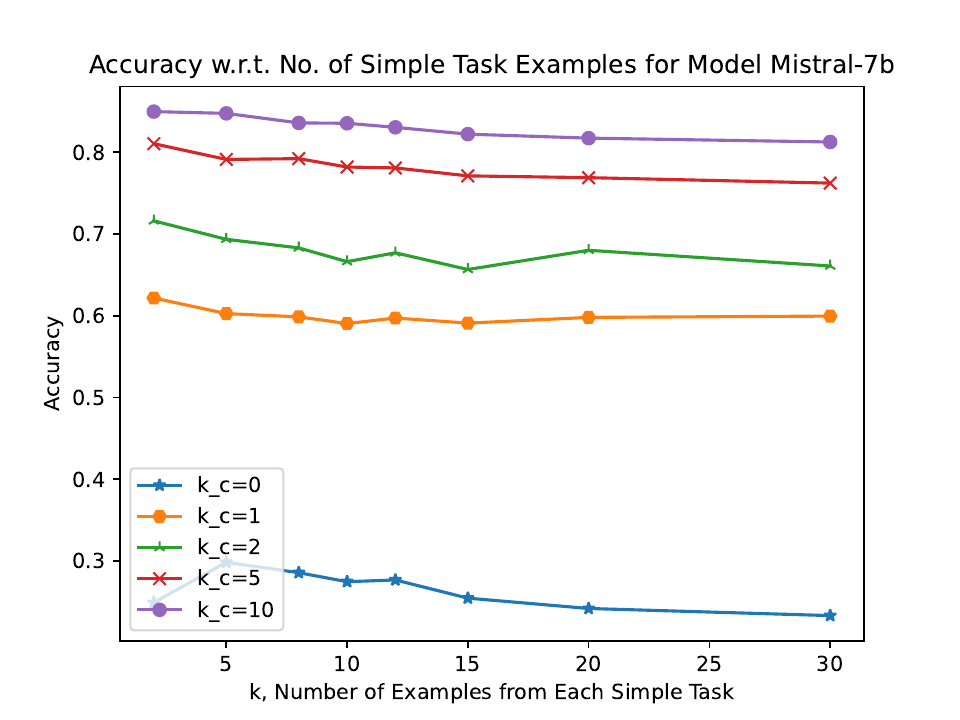}
\includegraphics[width=0.33\textwidth]{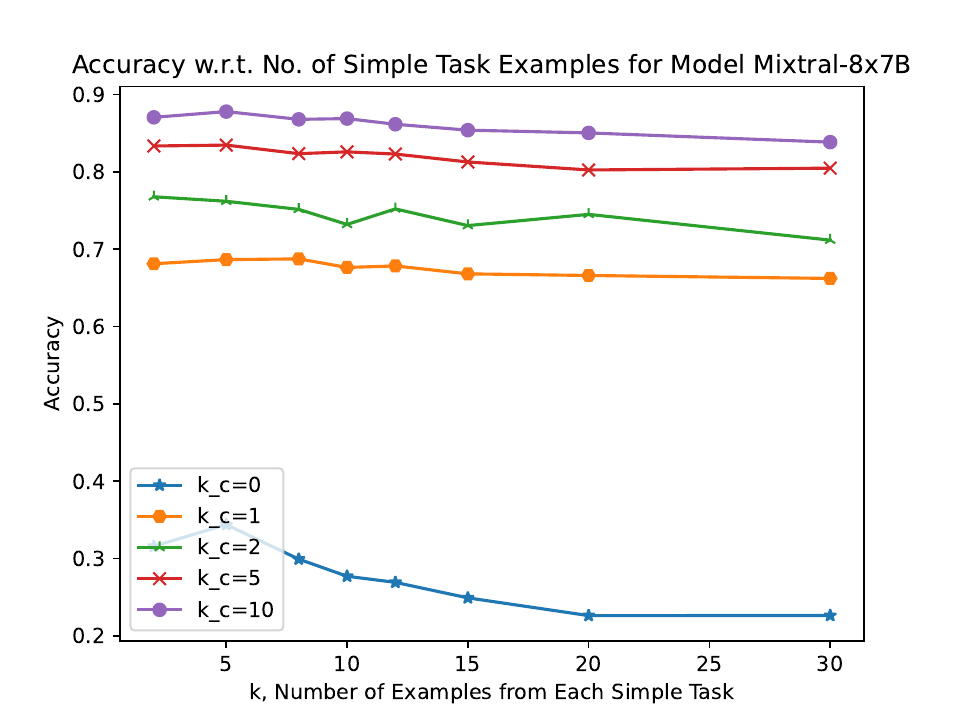}

\includegraphics[width=0.33\textwidth]{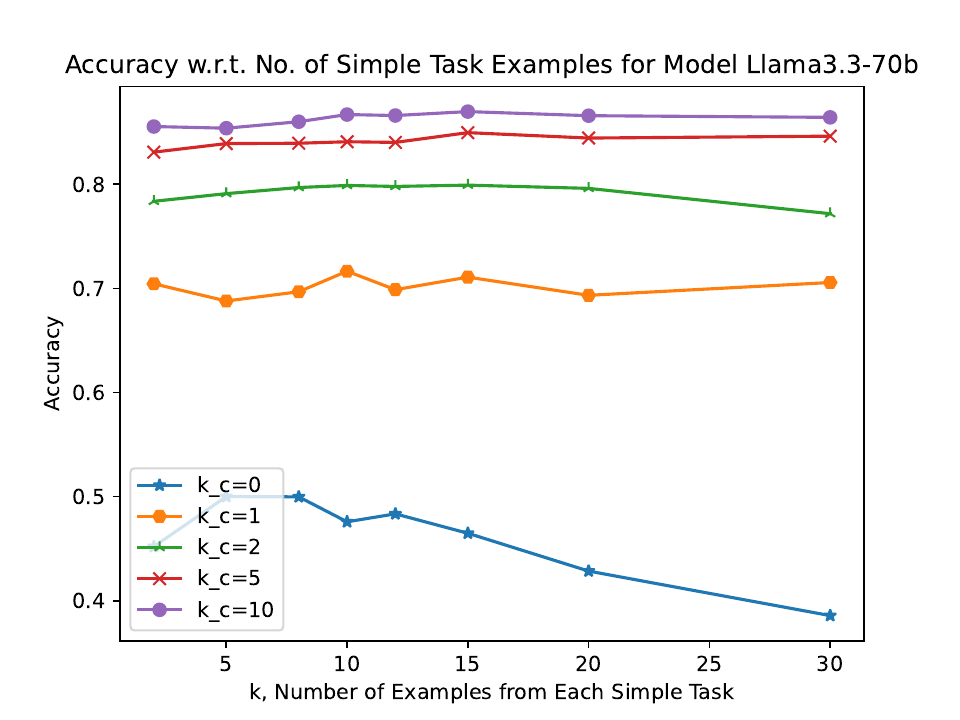}
\includegraphics[width=0.33\textwidth]{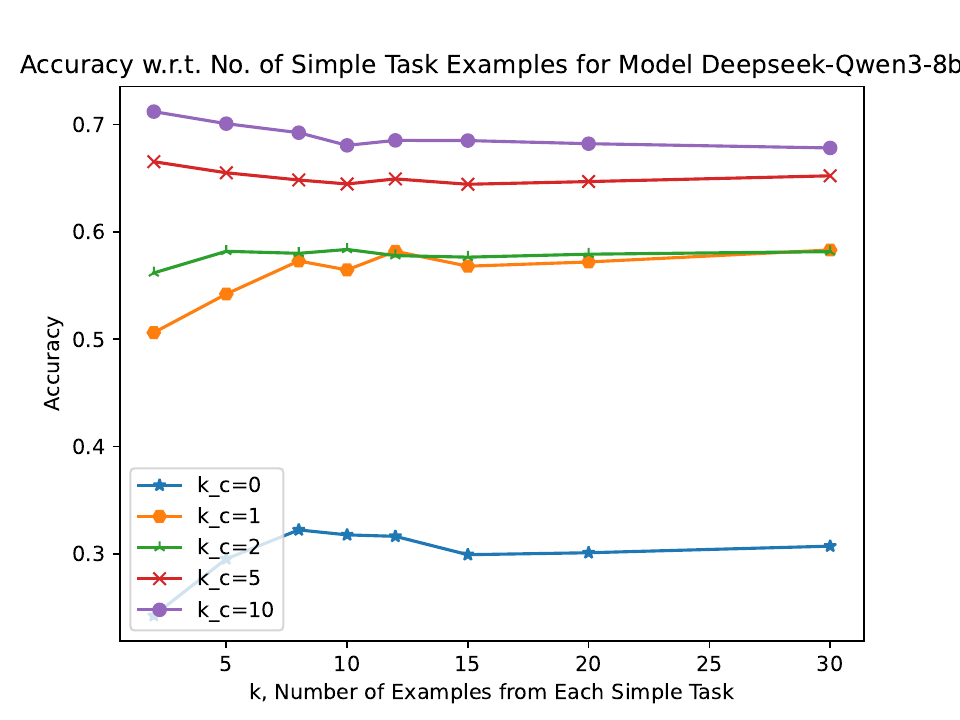}
\includegraphics[width=0.33\textwidth]{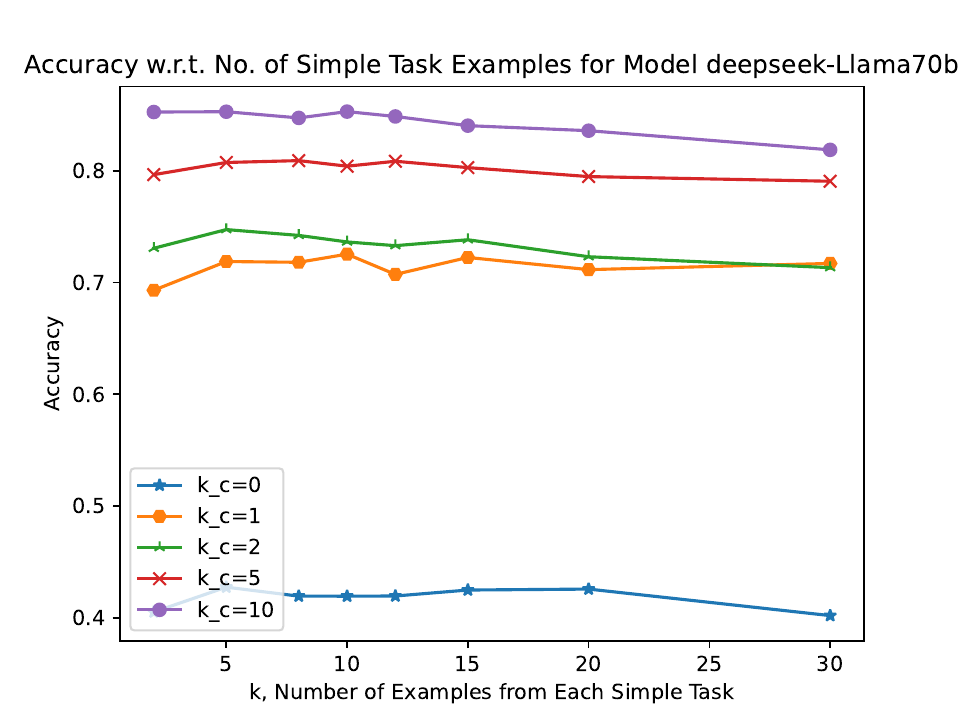}

\caption{The effect of in-context simple task examples for each model and $k_c$, averaged over tasks.
}
\label{fig:increasing-k-example-selected-tasks_fullversion}
\end{figure*}

\begin{figure*}
\includegraphics[width=0.33\textwidth]{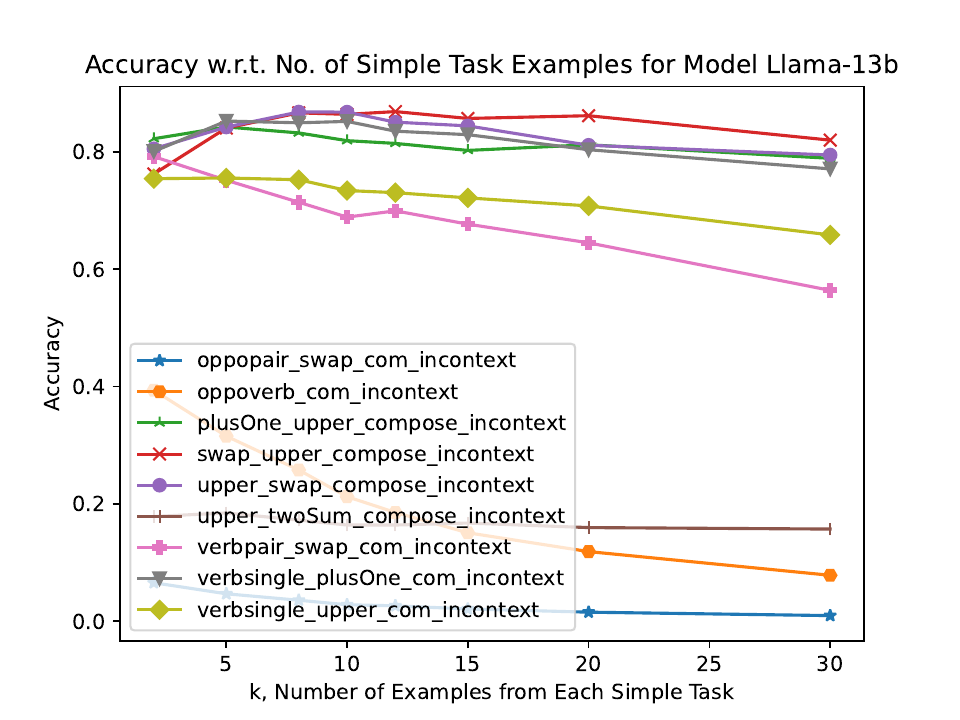}
\includegraphics[width=0.33\textwidth]{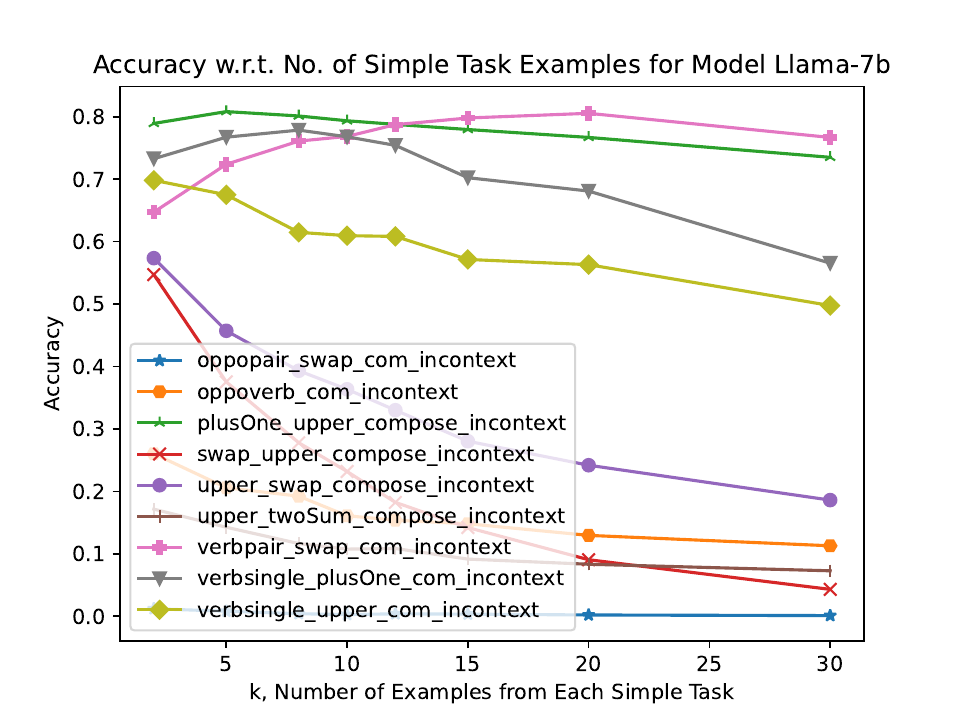}
\includegraphics[width=0.33\textwidth]{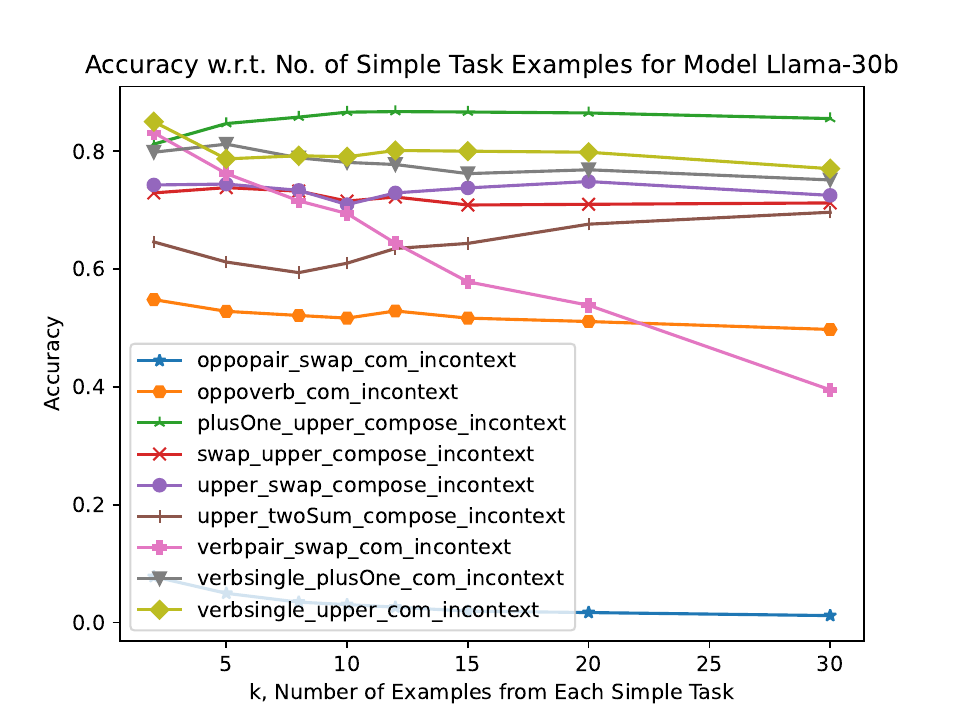}

\includegraphics[width=0.33\textwidth]{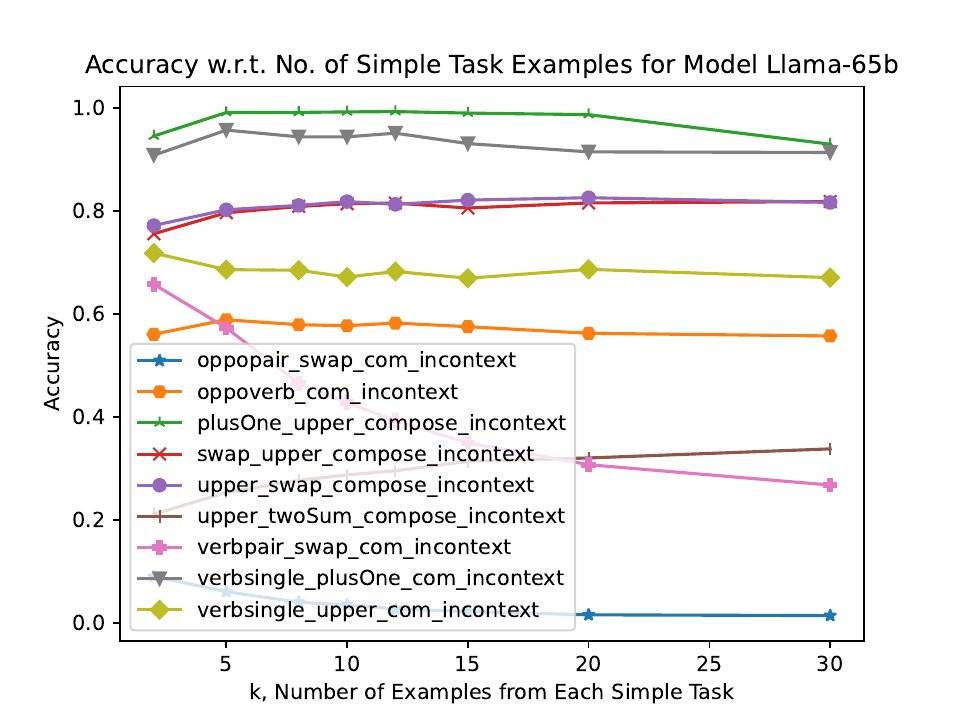}
\includegraphics[width=0.33\textwidth]{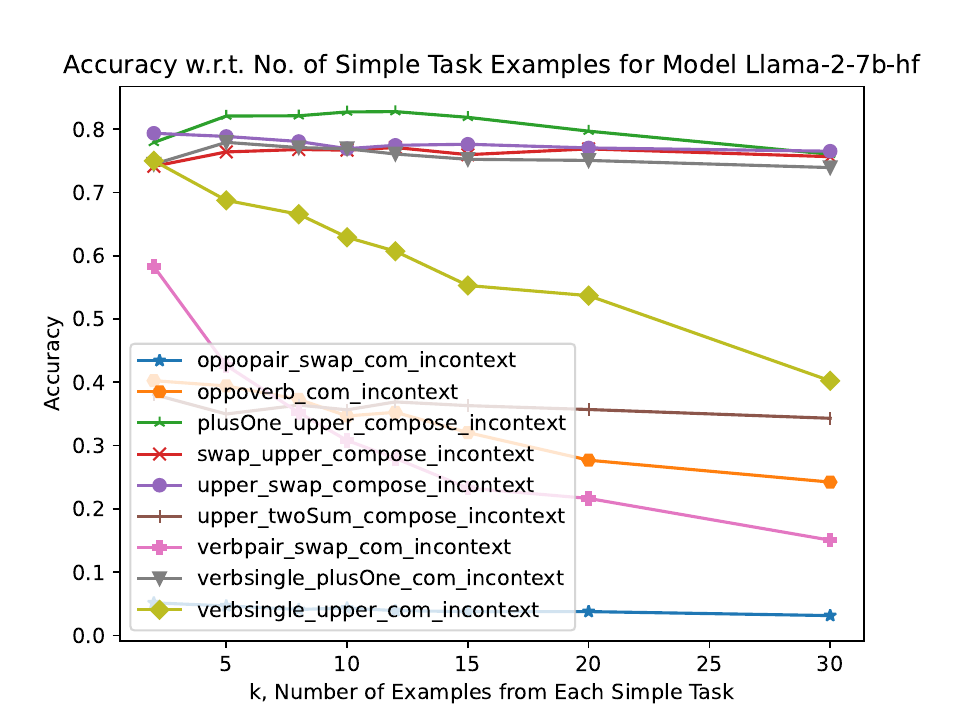}
\includegraphics[width=0.33\textwidth]{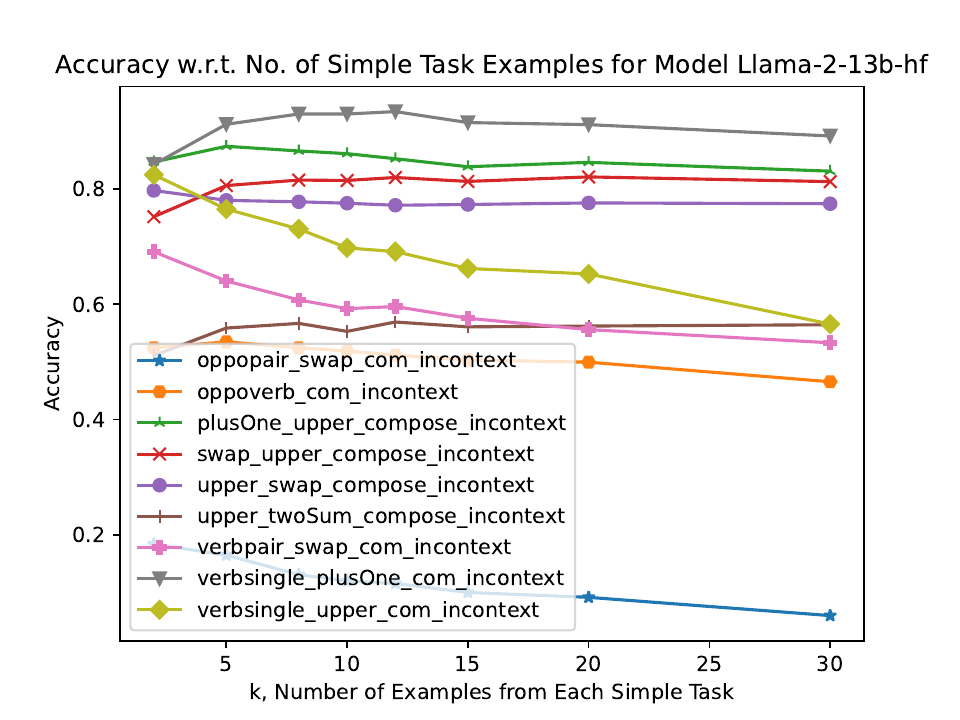}

\includegraphics[width=0.33\textwidth]{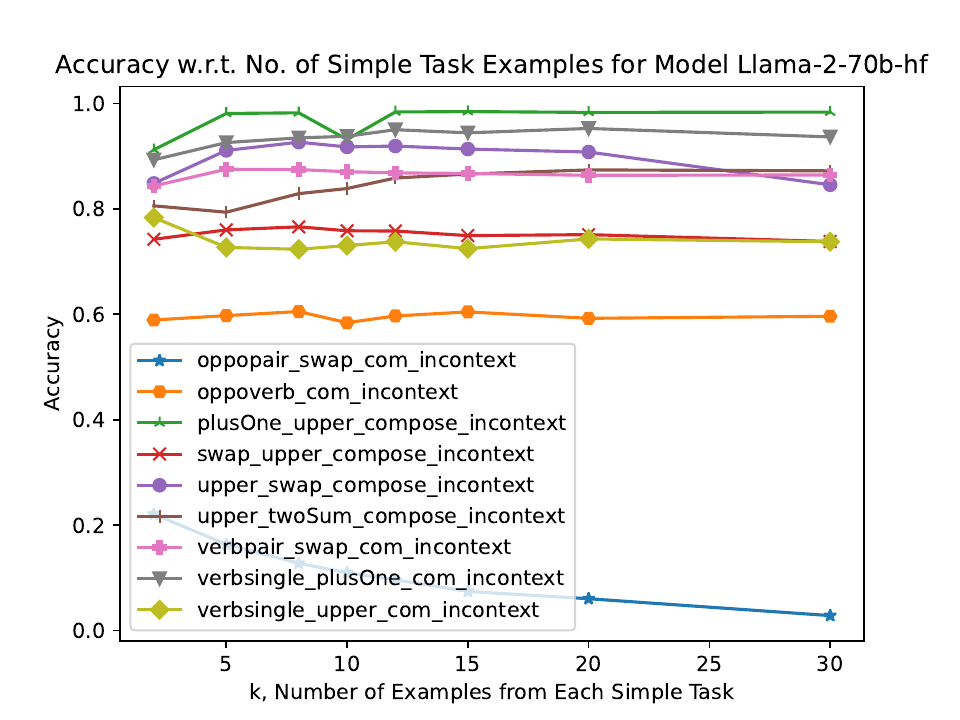}
\includegraphics[width=0.33\textwidth]{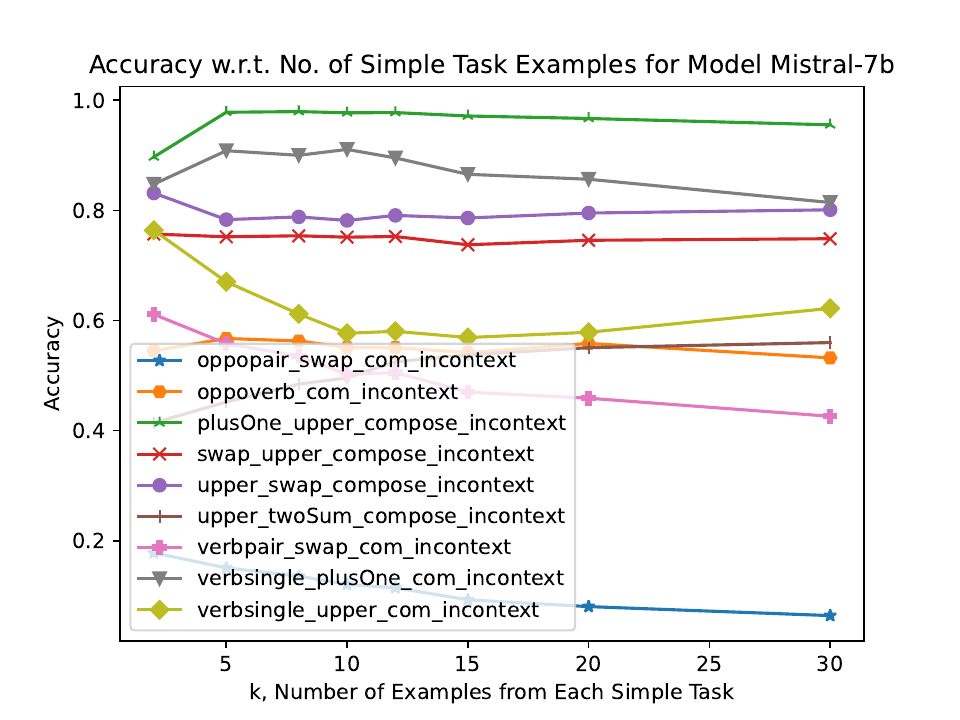}
\includegraphics[width=0.33\textwidth]{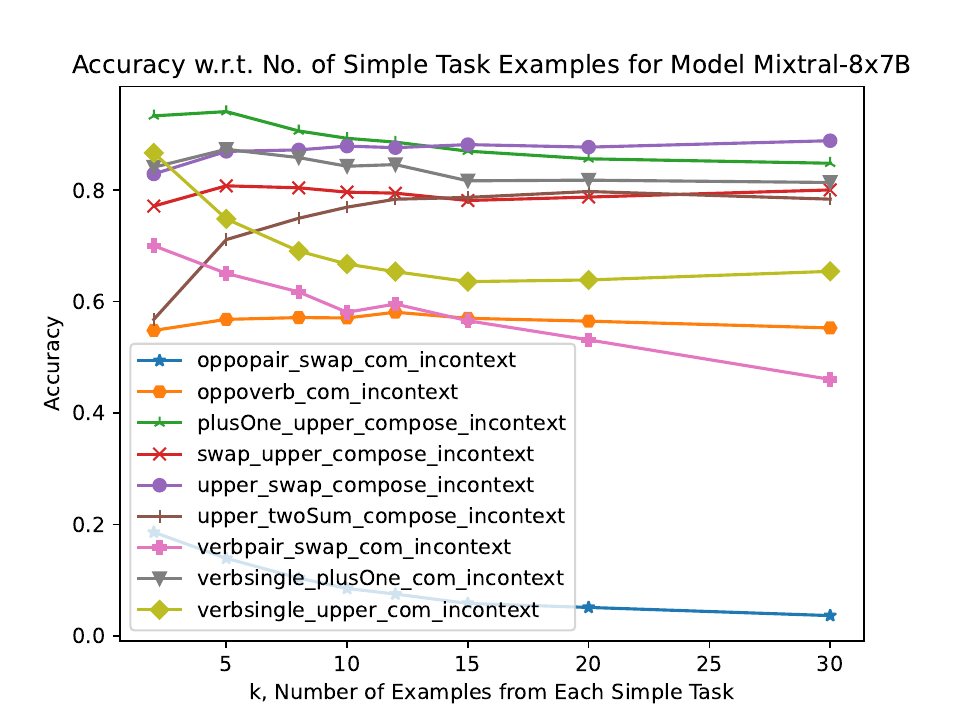}

\includegraphics[width=0.33\textwidth]{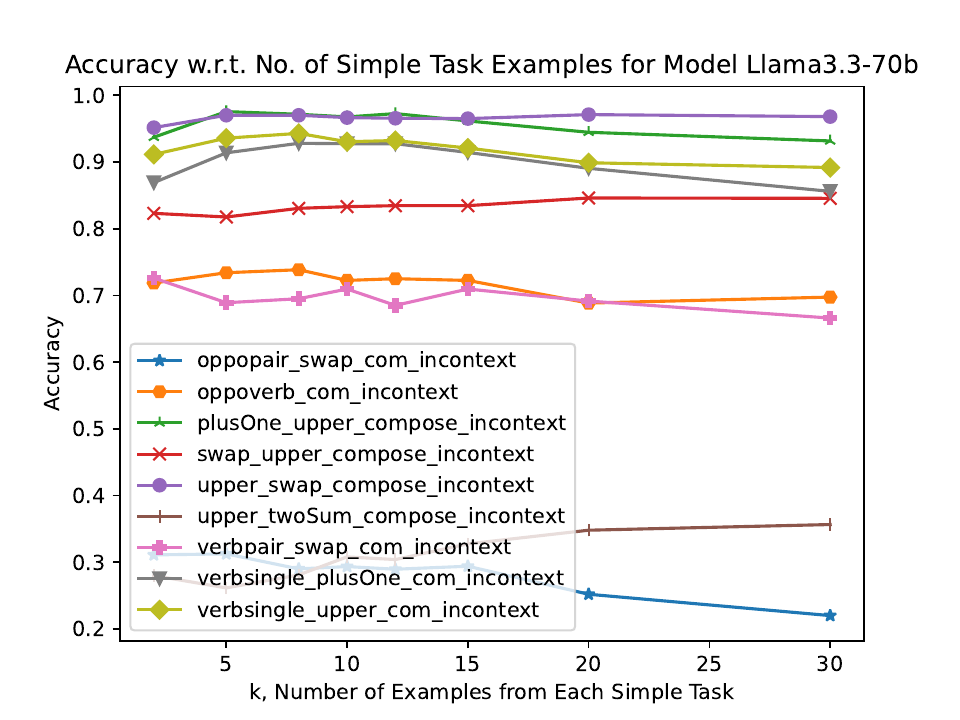}
\includegraphics[width=0.33\textwidth]{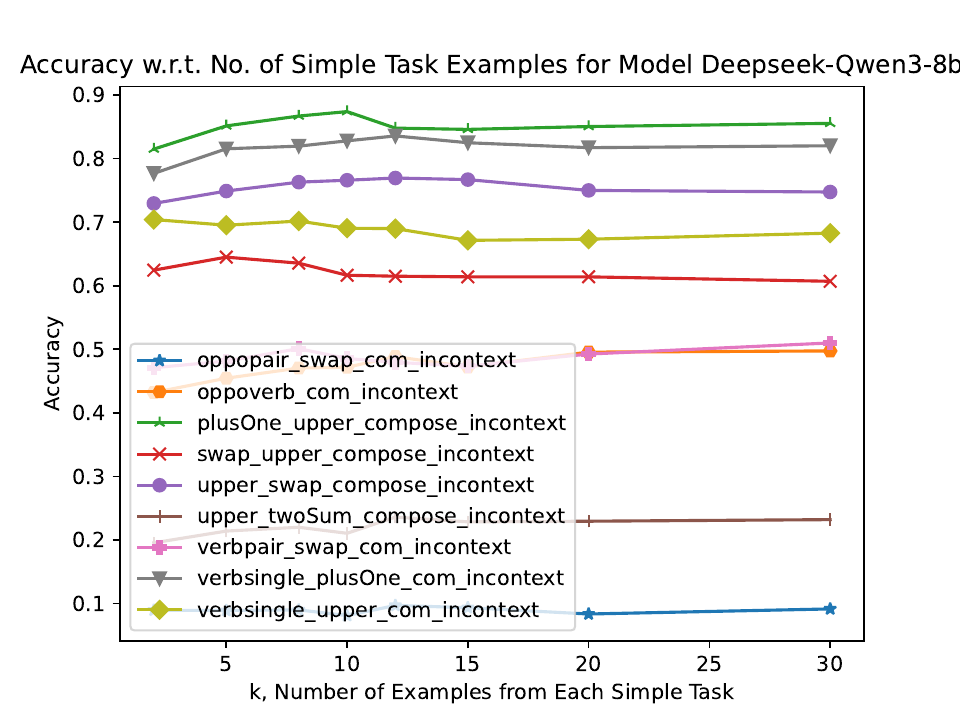}
\includegraphics[width=0.33\textwidth]{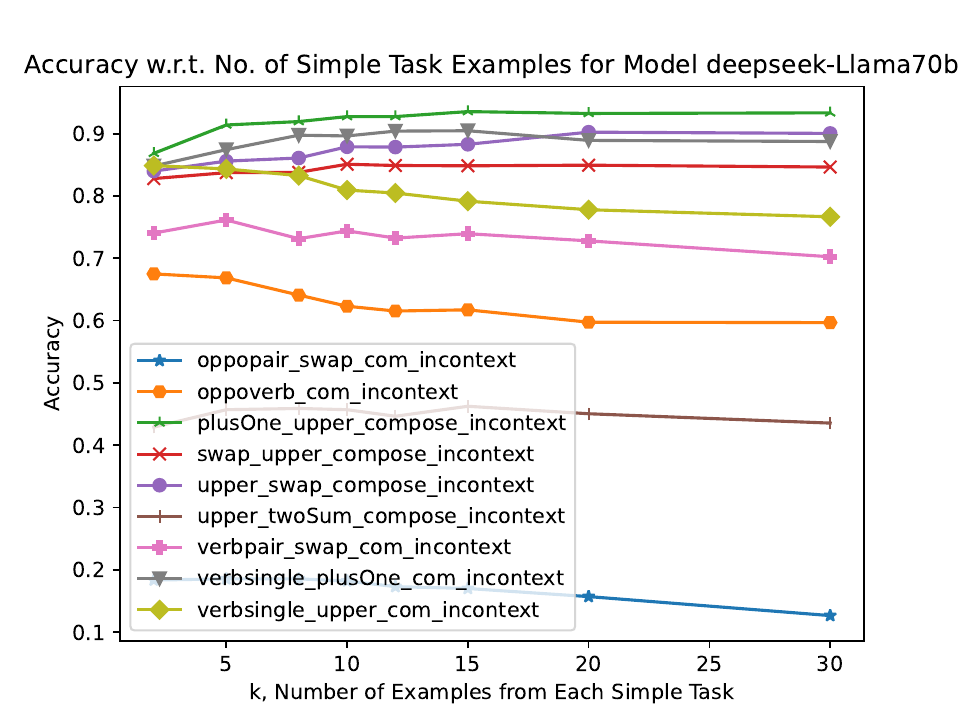}

\caption{The effect of in-context simple task examples for each model and task, averaged over $k_c$.
}
\label{fig:increasing-k-example-selected-tasks_fullversion_averageoverkc}
\end{figure*}

\mypara{In-context composite task examples.}
\cref{fig:increasing-k-example-selected-tasks_fullversion_forkc} shows the effect of in-context composite task examples for each $k$ and model (the reported accuracy is averaged over tasks). More precisely, we draw a subfigure for each model and draw a curve for each $k$; the $x$-axis is the number $k_c$ of examples from the composite task, and $y$-axis is the accuracy averaged over all the composite tasks.

The trend is consistent across different models and $k$'s: more composite task examples indeed help the performance on the composite queries as expected.

\begin{figure*}
\includegraphics[width=0.33\textwidth]{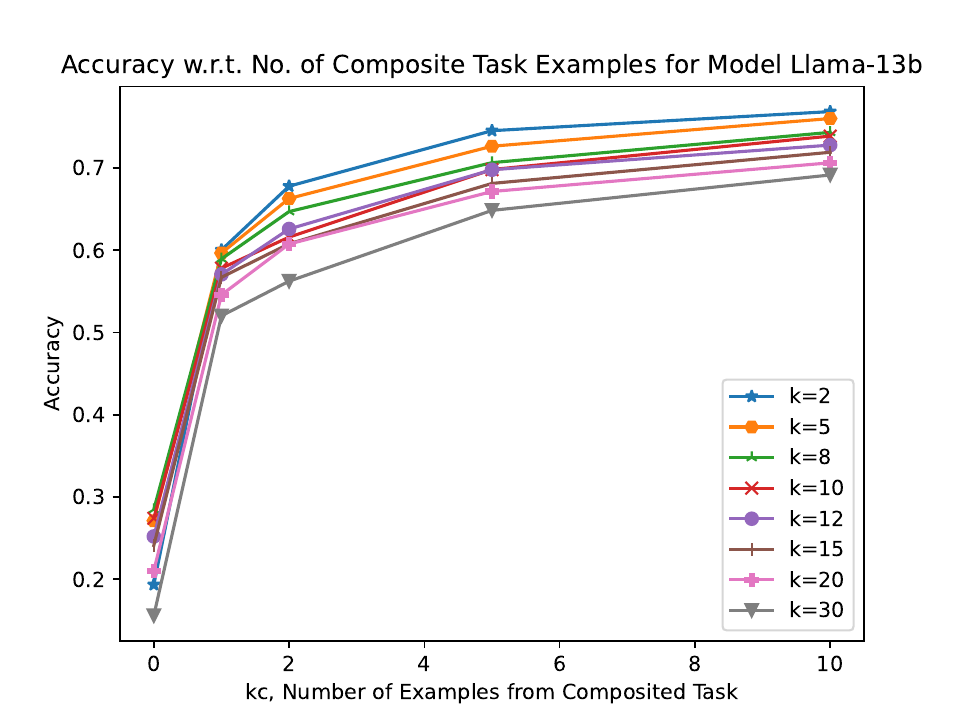}
\includegraphics[width=0.33\textwidth]{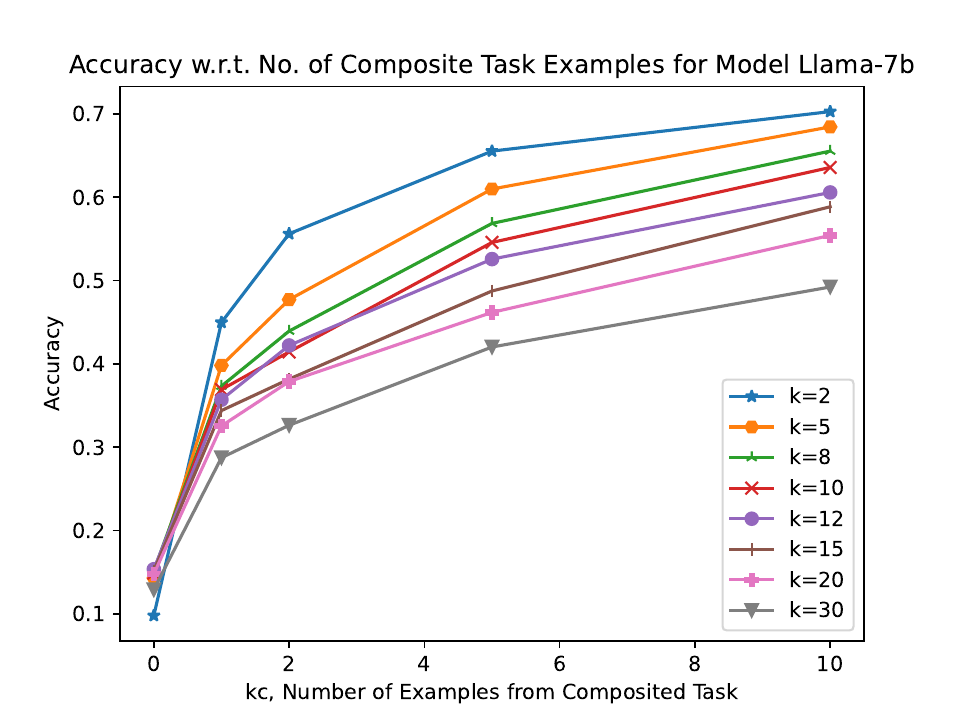}
\includegraphics[width=0.33\textwidth]{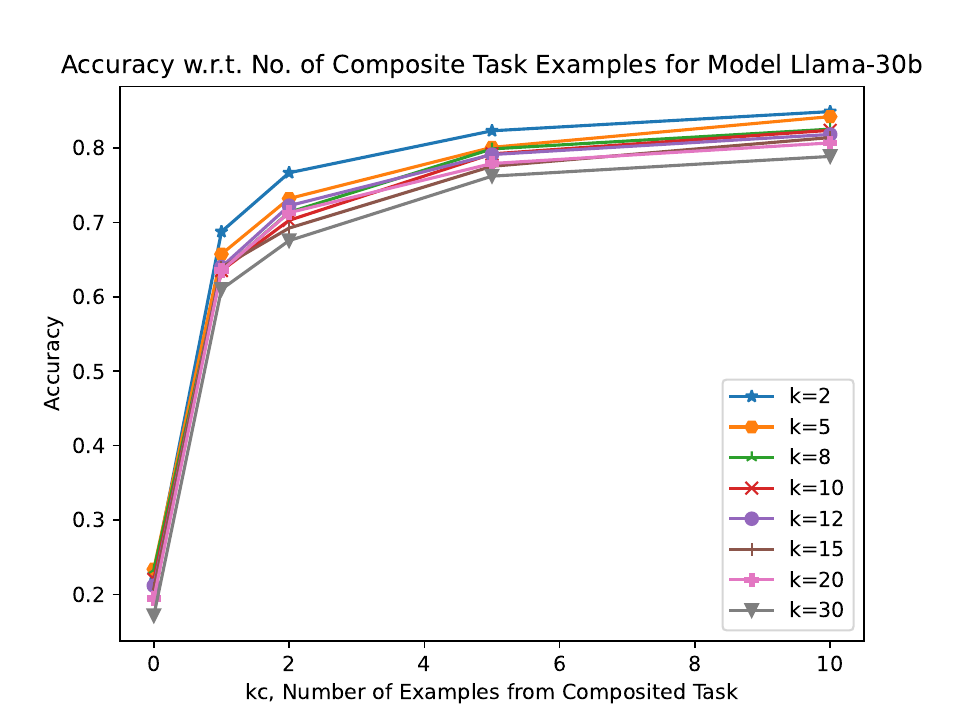}

\includegraphics[width=0.33\textwidth]{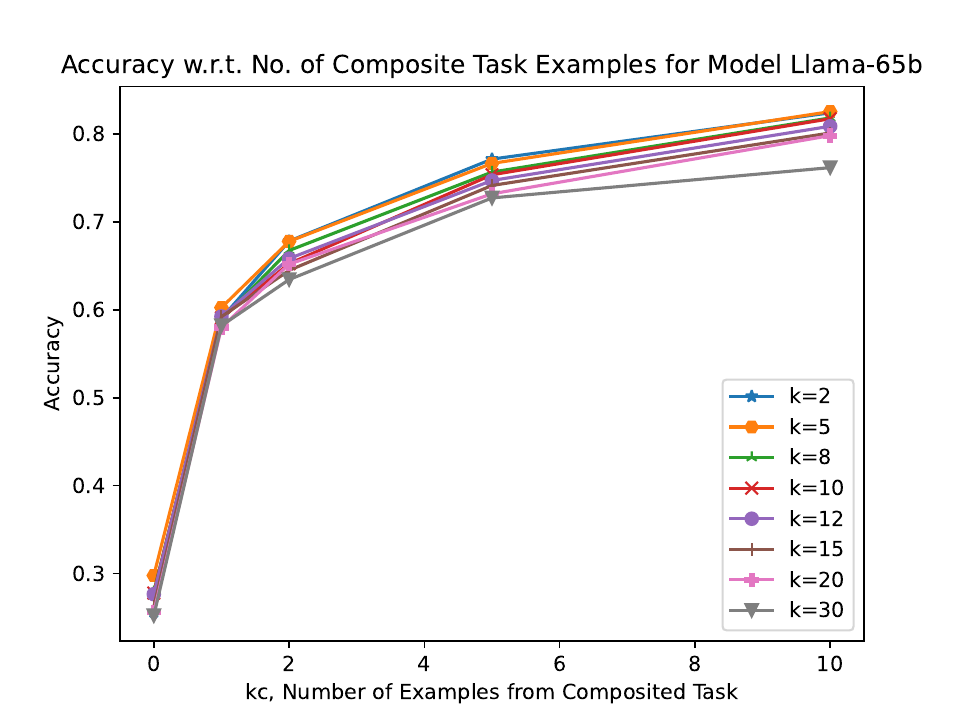}
\includegraphics[width=0.33\textwidth]{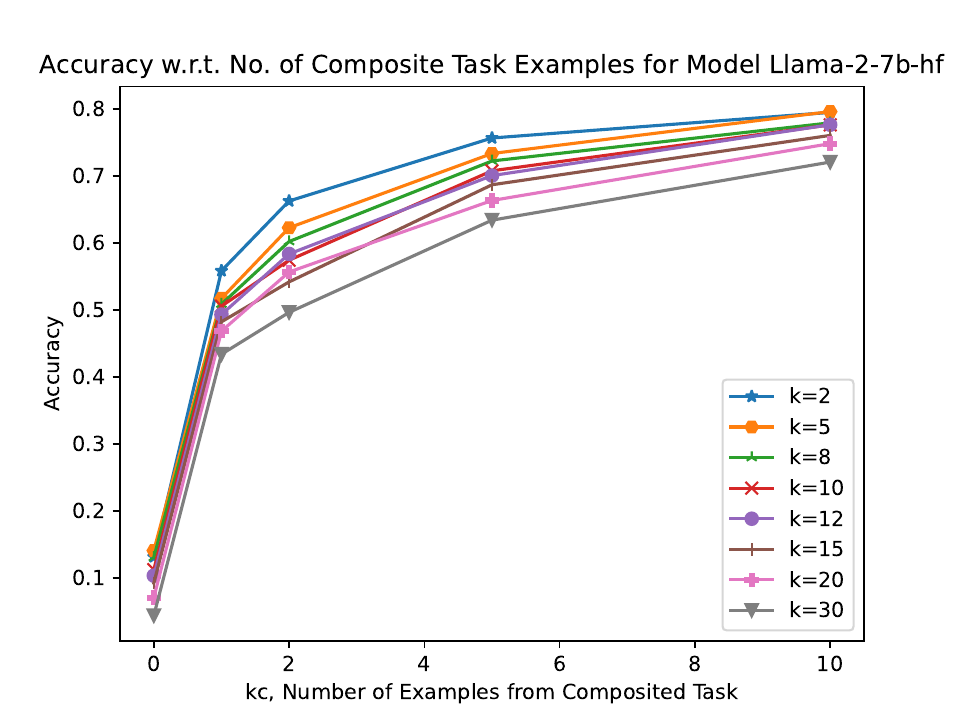}
\includegraphics[width=0.33\textwidth]{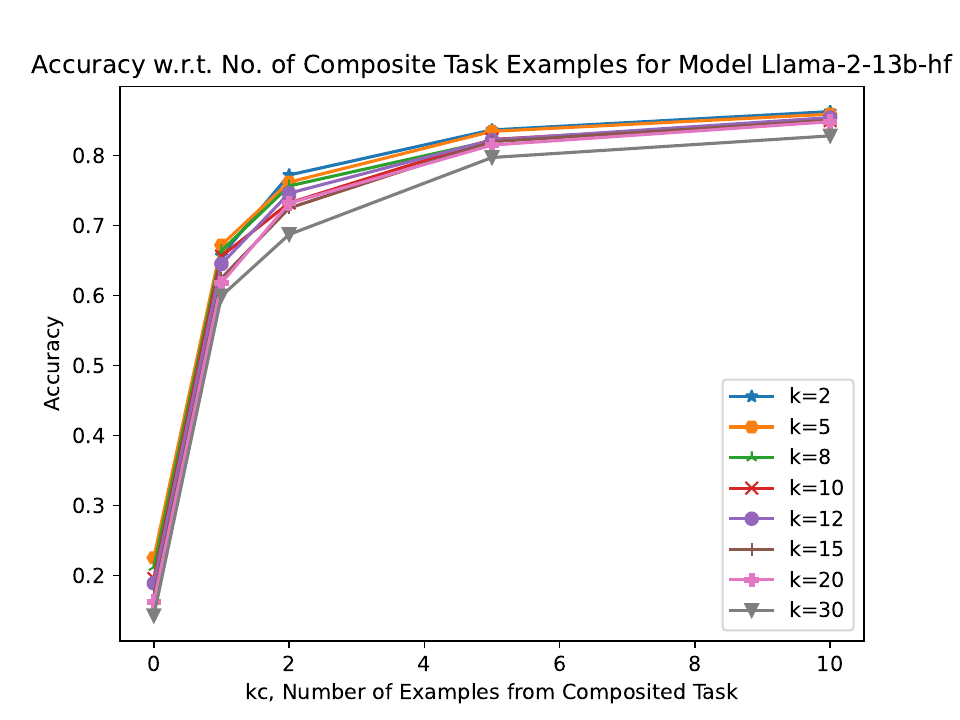}

\includegraphics[width=0.33\textwidth]{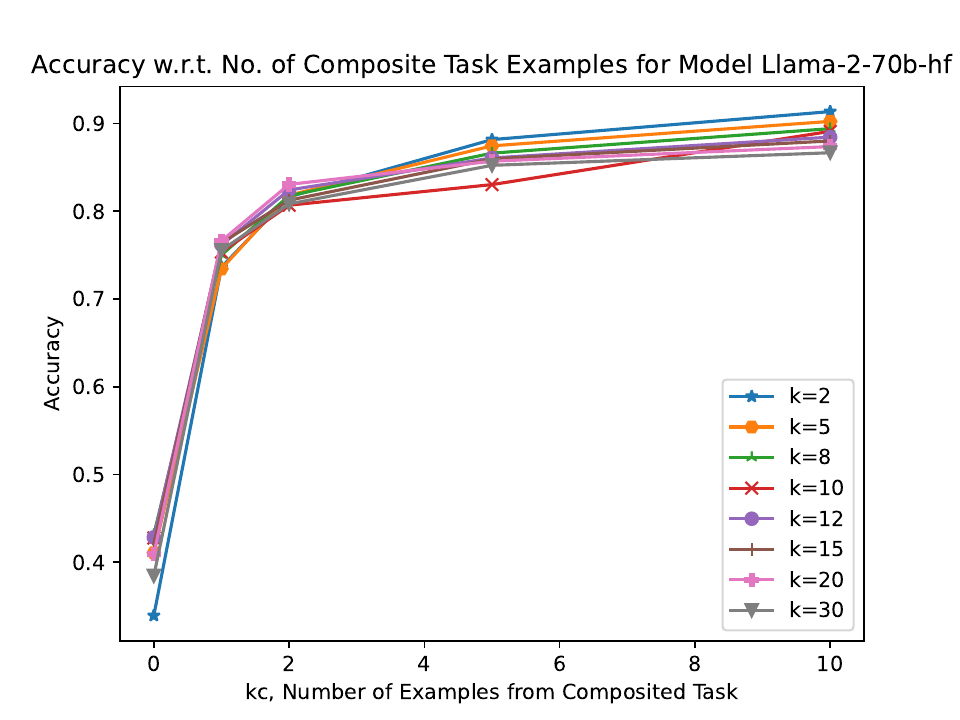}
\includegraphics[width=0.33\textwidth]{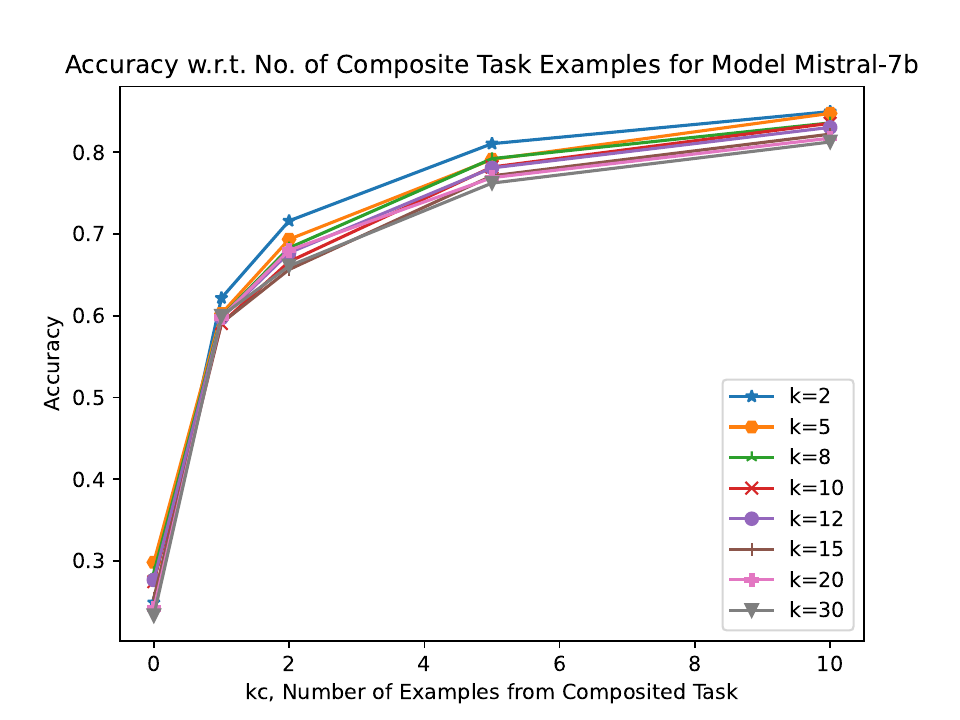}
\includegraphics[width=0.33\textwidth]{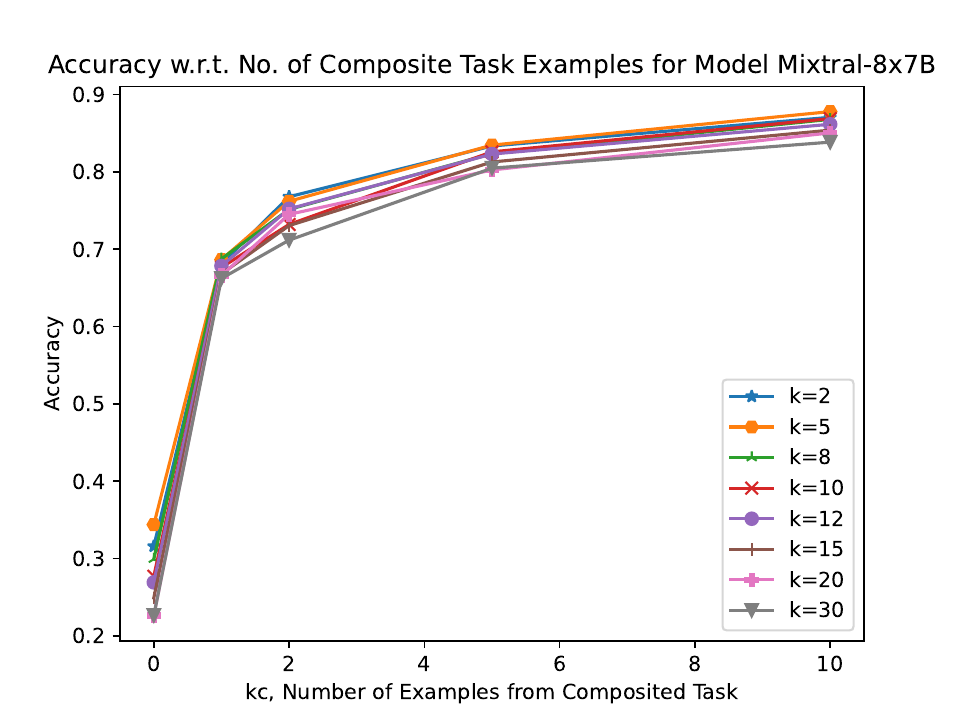}

\includegraphics[width=0.33\textwidth]{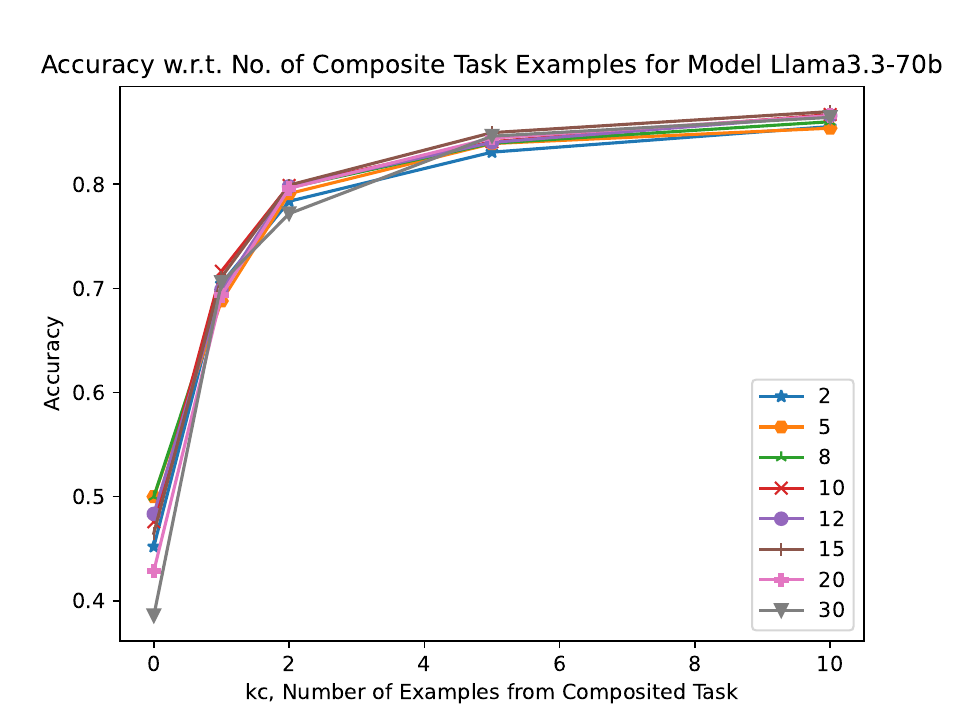}
\includegraphics[width=0.33\textwidth]{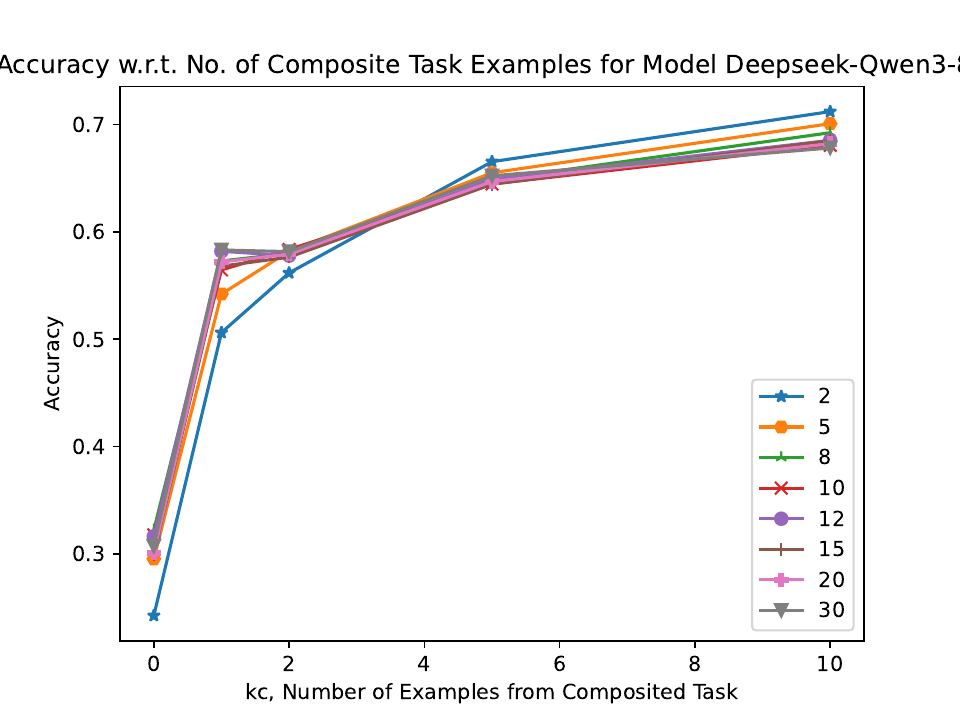}
\includegraphics[width=0.33\textwidth]{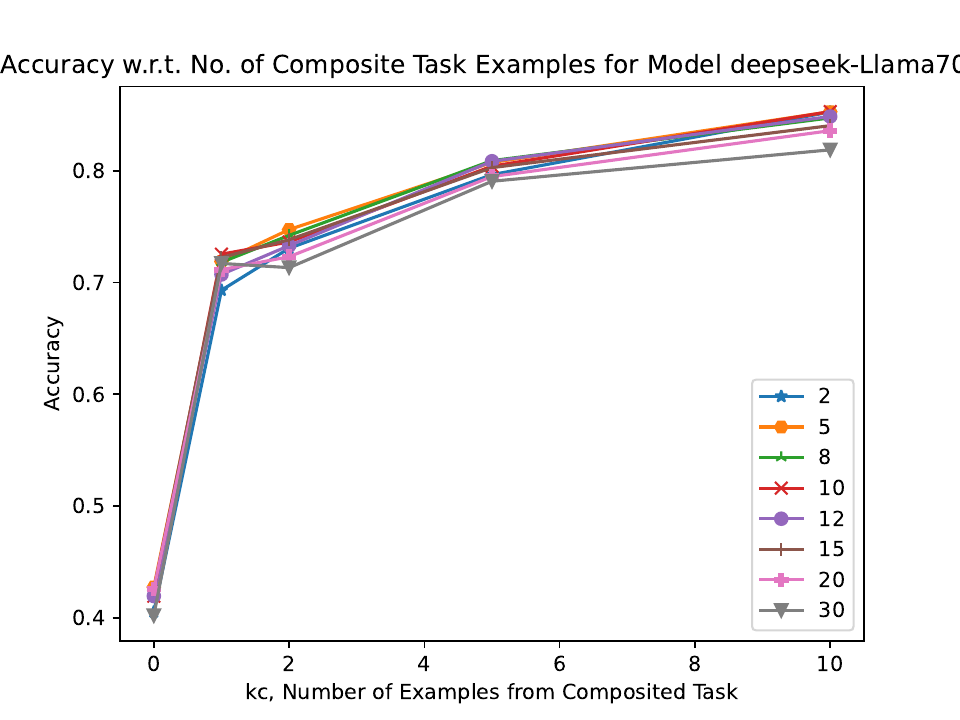}
\caption{The effect of in-context composite task examples for each model and $k_c$, averaged over tasks.
}
\label{fig:increasing-k-example-selected-tasks_fullversion_forkc}
\end{figure*}

\subsubsection{Composite task examples are harmful for simple task queries} \label{app:simple-query}

\cref{sec:impact-of-examples} presents the result that simple task examples have an negative impact on the performance of the model on composite task queries. Here we also investigate the impact of composite task examples on the performance of the model on simple task queries.

\cref{fig:kcincrease} shows the change of accuracy on simple task queries when the number $k_c$ of composite task examples increases. This again confirms that the models does not correctly distinguish between composite and simple task examples for addressing the query. 

\begin{figure*}[!h]
\centering
{
\includegraphics[width=0.45\linewidth]{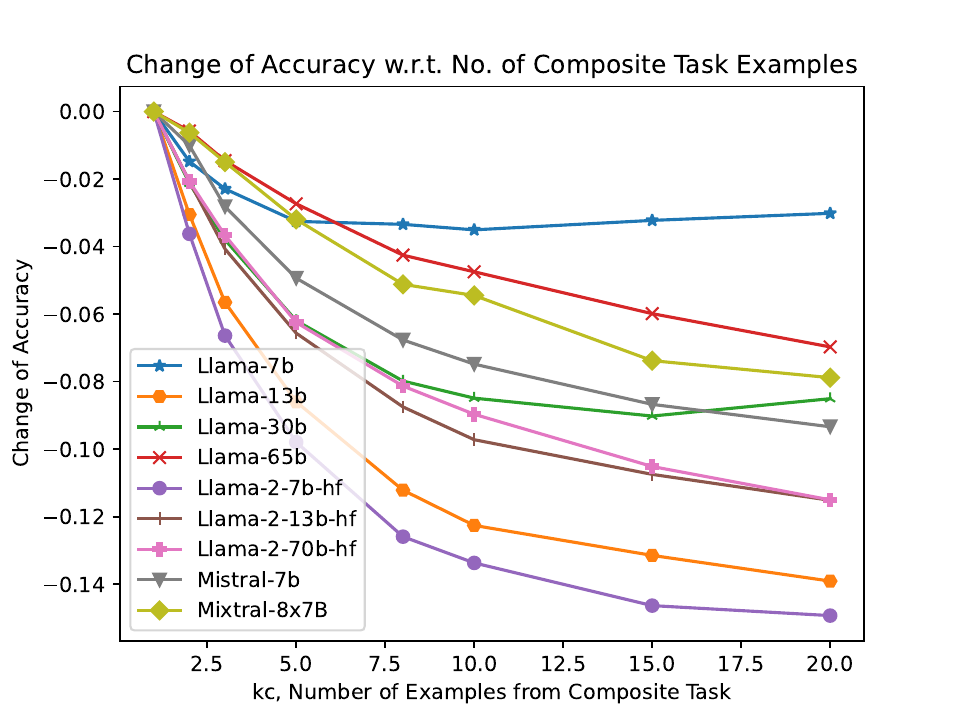}
}
\caption{The effect of composite task examples on the performance on simple task queries, averaged over $k$ and tasks. }
\label{fig:kcincrease}
%\vspace{-1em}
\end{figure*}

% \begin{figure*}
% \includegraphics[width=0.33\textwidth]{image_appendix/Llama-13b_allfigure_kc.pdf}
% \includegraphics[width=0.33\textwidth]{image_appendix/Llama-7b_allfigure_kc.pdf}
% \includegraphics[width=0.33\textwidth]{image_appendix/Llama-30b_allfigure_kc.pdf}

% \includegraphics[width=0.33\textwidth]{image_appendix/Llama-65b_allfigure_kc.pdf}
% \includegraphics[width=0.33\textwidth]{image_appendix/Llama-2-7b-hf_allfigure_kc.pdf}
% \includegraphics[width=0.33\textwidth]{image_appendix/Llama-2-13b-hf_allfigure_kc.pdf}

% \includegraphics[width=0.33\textwidth]{image_appendix/Llama-2-70b-hf_allfigure_kc.pdf}
% \includegraphics[width=0.33\textwidth]{image_appendix/Mistral-7b_allfigure_kc.pdf}
% \includegraphics[width=0.33\textwidth]{image_appendix/Mixtral-8x7B_allfigure_kc.pdf}
% \caption{The effect of the in-context examples from composite tasks. 
% }
% \label{fig:increasing-kc-example-selected-tasks_fullversion}
% \end{figure*}

%The result shows that as well as the effect of simple task examples on composite task queries, the composite task examples also affect simple task queries. This tells us that the confusion allies not only for composite queries, but for the whole system.

\subsection{More results for the output distribution} \label{app:output-distribution}
In the main body we examine how increasing the number $k_1$ of simple task 1 examples affects the output distribution on the opposition+swap task, with $k_2 = 10$ and $k_c = 5$.
Here we examine how increasing the number $k_c$ of composite task examples affects the output. More precisely, we use the Llama-2-70B model on the opposition+swap task and use the Llama-2-7B model on the pastTense+swap task, and vary the number of composite task examples $k_c$ and fix $k_1 = k_2 = 10$. 

\cref{fig:patternkc} shows the results. As expected, when $k_c$ increases, the correspondence to the composite task increases while those to the simple tasks decreases. This again supports that the model does not distinguish between composite and simple task examples when utilizing them to address the query. 

\begin{figure*}[!h]
\centering
{
\includegraphics[width=0.45\textwidth]{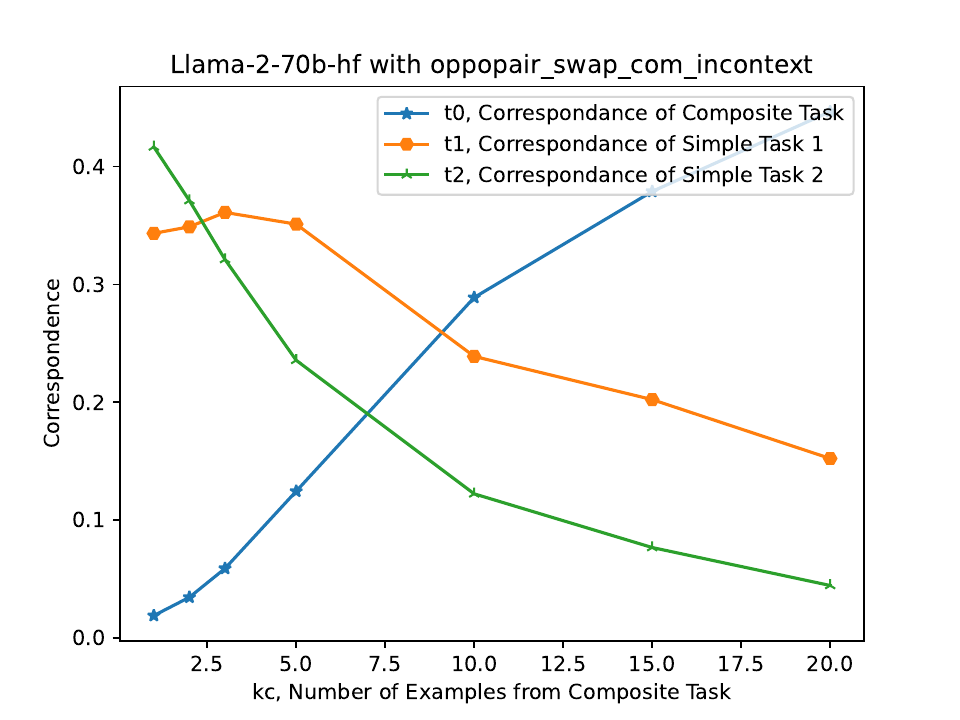}
\includegraphics[width=0.45\textwidth]{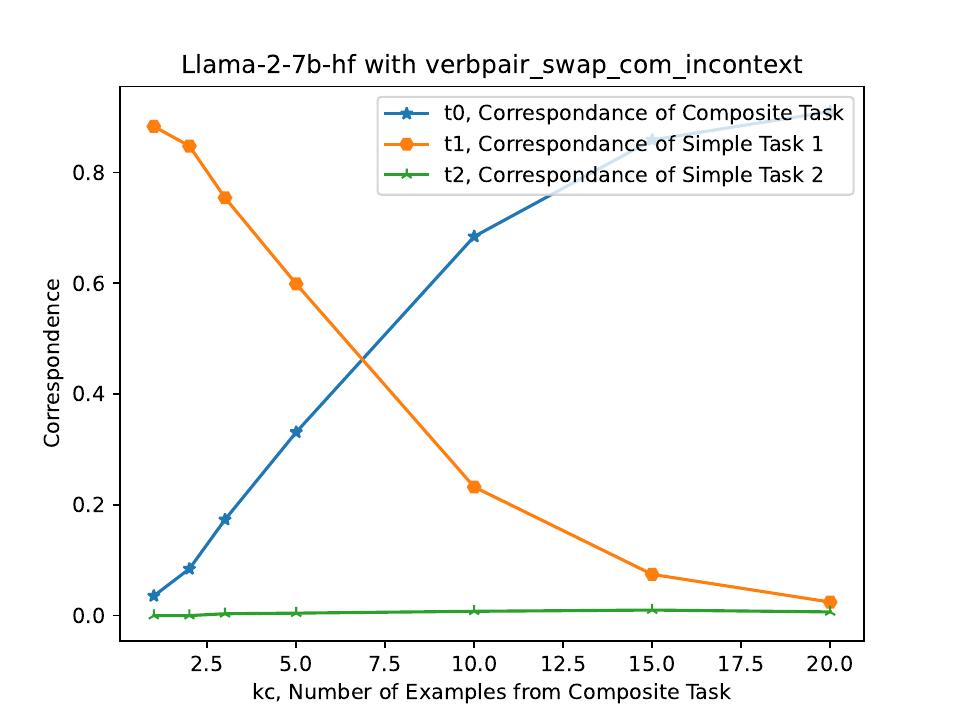}
}
\caption{The output distribution for different numbers of composite task examples ($k_c$).}
\label{fig:patternkc}
%\vspace{-1em}
\end{figure*}

\subsection{Detailed results for irrelevant content/operator} \label{app:content-operator}

Here we present detailed results for ablating the content or the operator in the composite task examples. We choose representative settings: $k_c=2,5$ for the tasks including
opposition+pastTense, opposition+swap, pastTense+swap, pastTense+plusOne, and pastTense+capitalization. We report the accuracy average over the tasks (and random shuffling). 

\cref{fig:increasing-kc-example-selected-tasks-irrlcontent} shows that after ablating the content, the trend of performance decreasing with larger $k$ is roughly the same as before. This suggests that the content may not be the main factor here.
\cref{fig:increasing-kc-example-selected-tasks-irrloperator} shows that after ablating the operator, the negative impact of larger $k$ is not as significant. This suggests that the operator may play an important role in how the model utilizes the examples.  

\begin{figure*}
\includegraphics[width=0.33\textwidth]{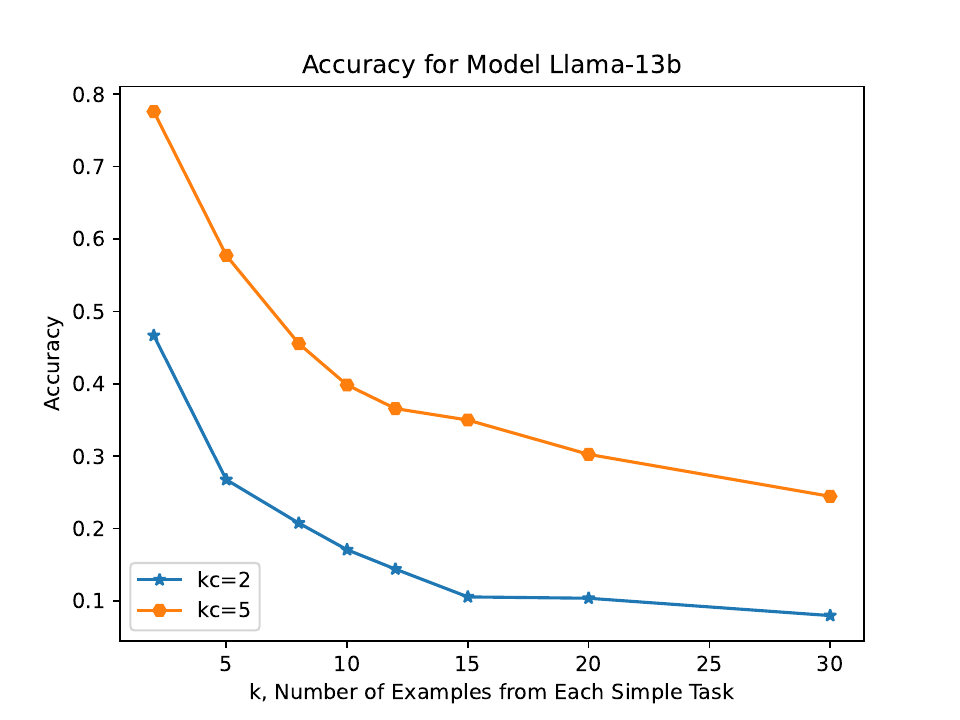}
\includegraphics[width=0.33\textwidth]{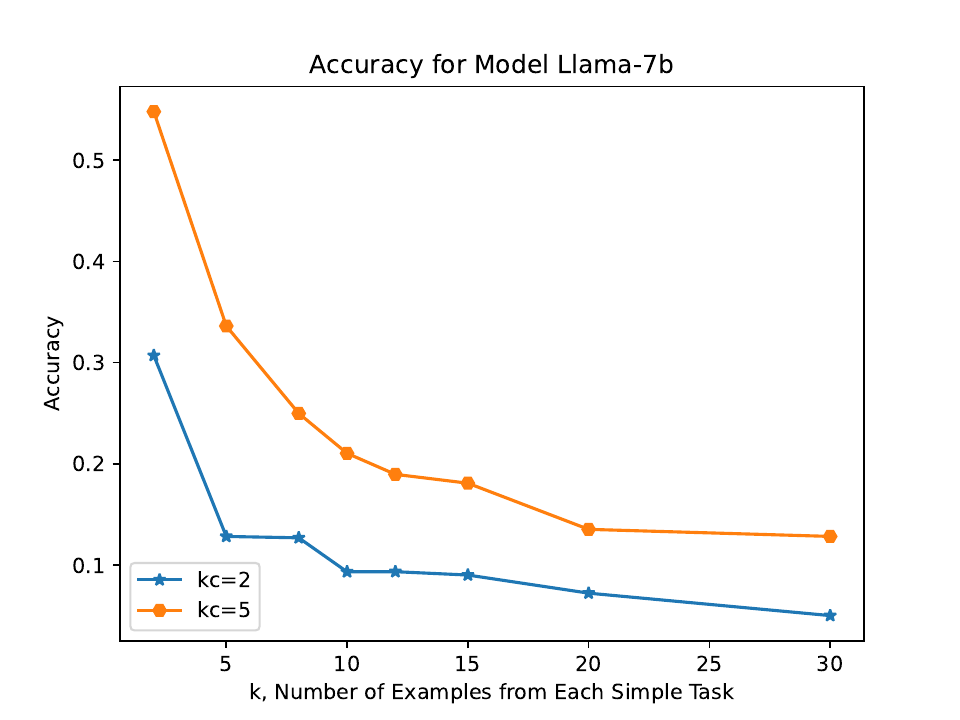}
\includegraphics[width=0.33\textwidth]{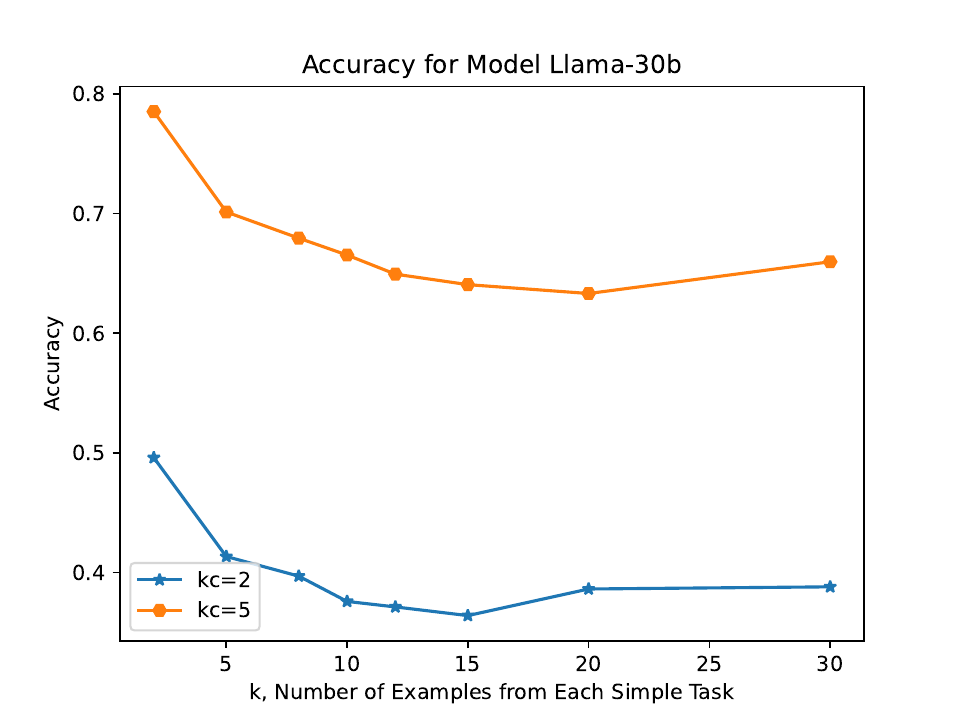}

\includegraphics[width=0.33\textwidth]{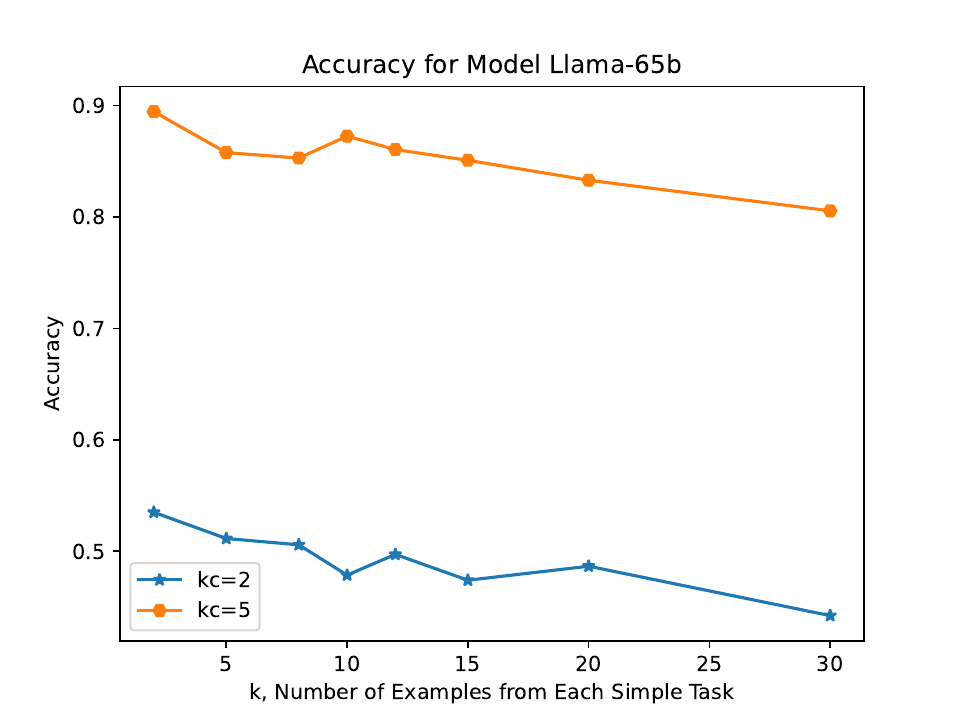}
\includegraphics[width=0.33\textwidth]{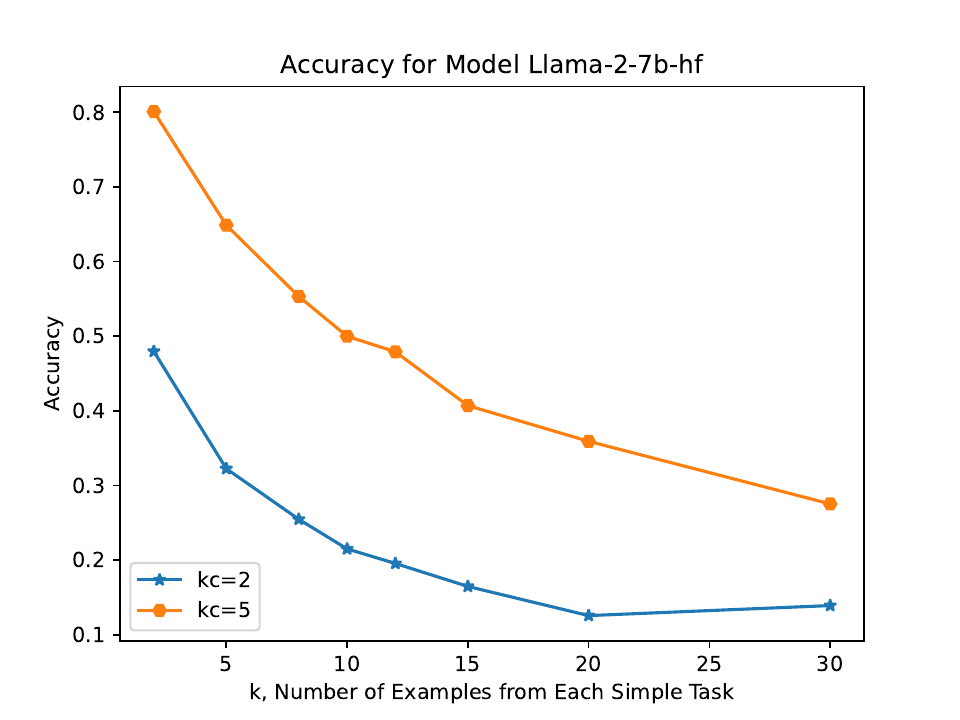}
\includegraphics[width=0.33\textwidth]{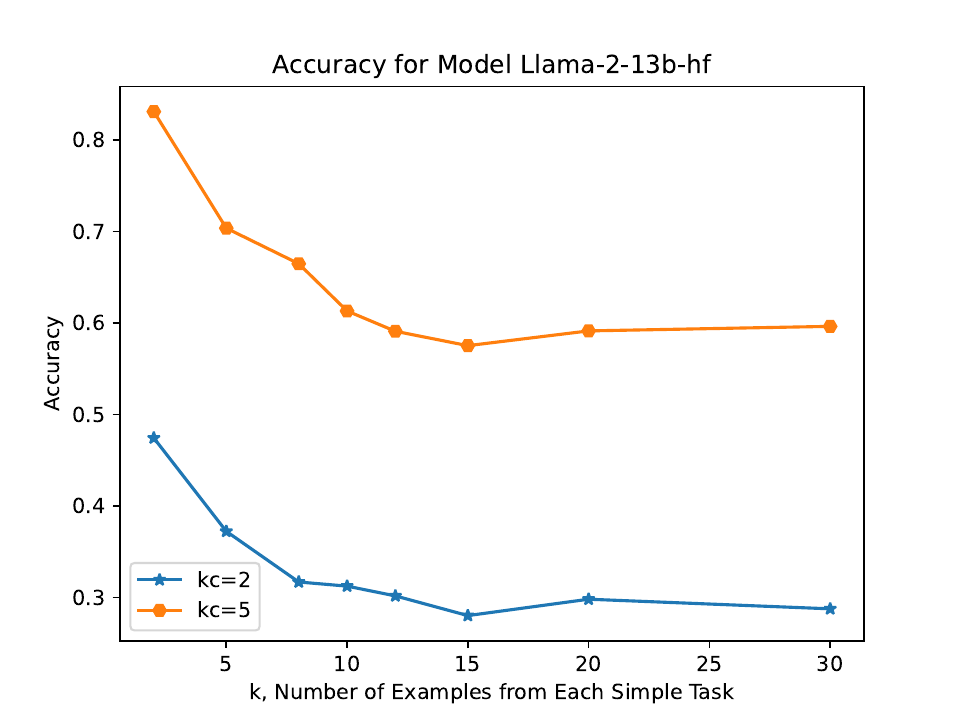}

\includegraphics[width=0.33\textwidth]{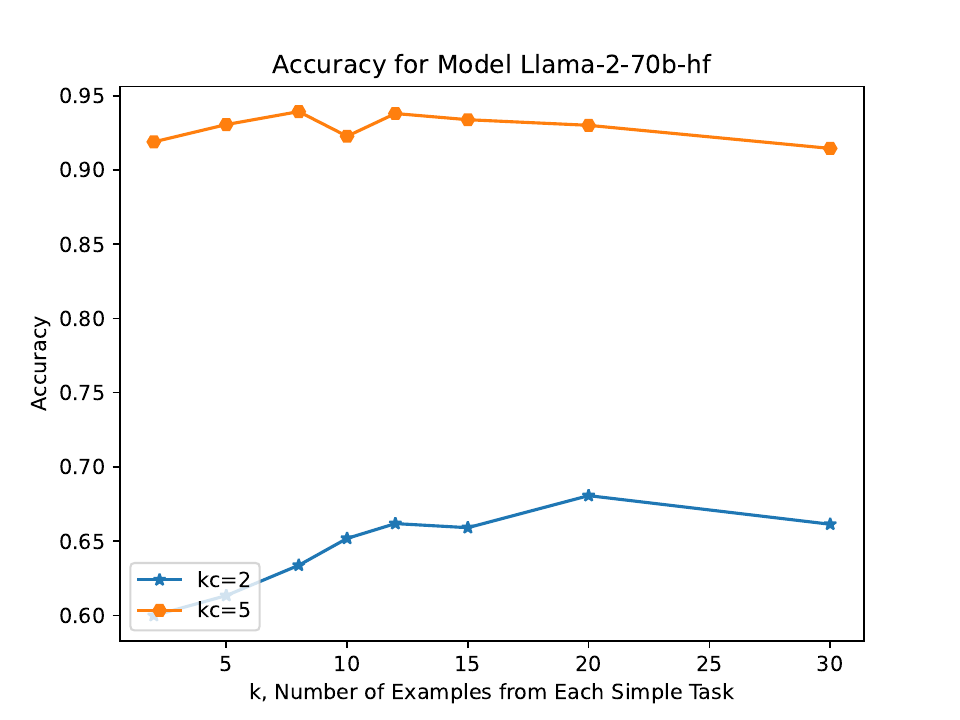}
\includegraphics[width=0.33\textwidth]{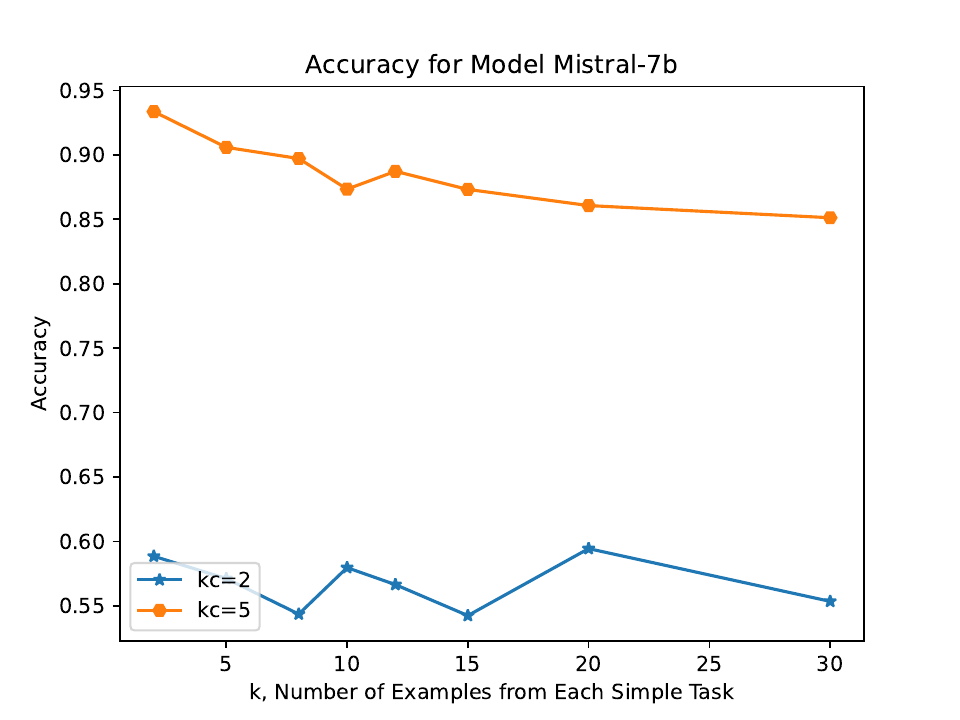}
\includegraphics[width=0.33\textwidth]{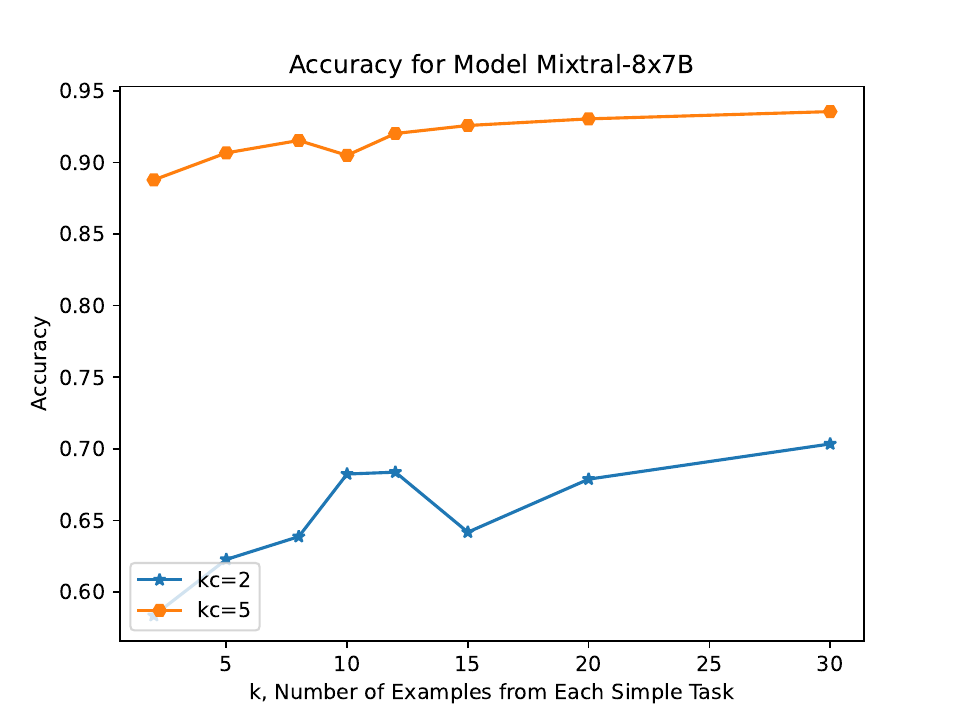}

\includegraphics[width=0.33\textwidth]{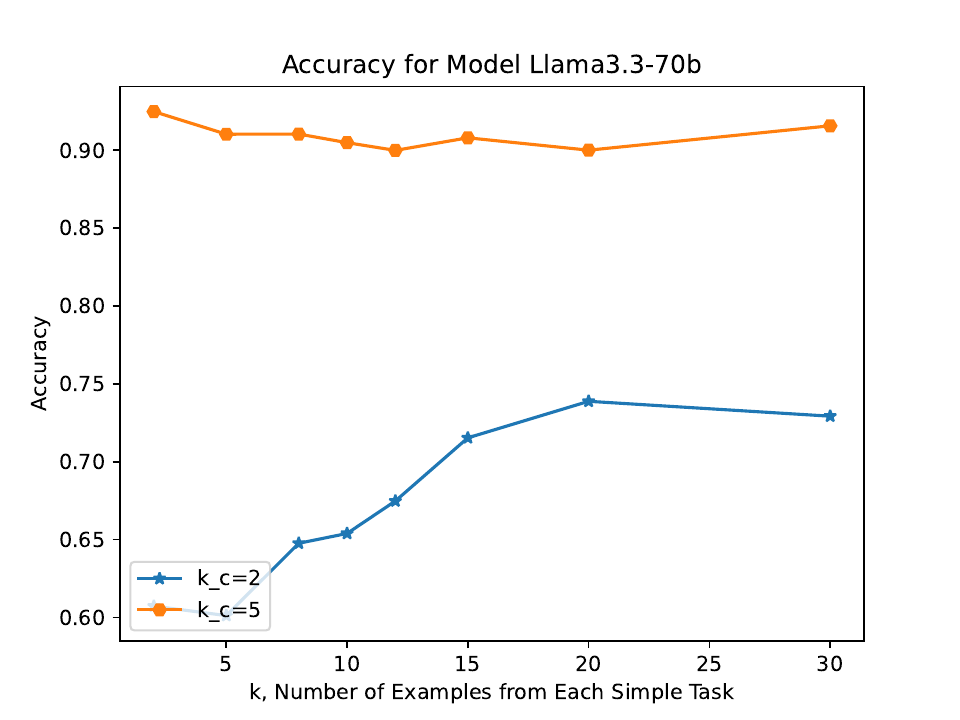}
\includegraphics[width=0.33\textwidth]{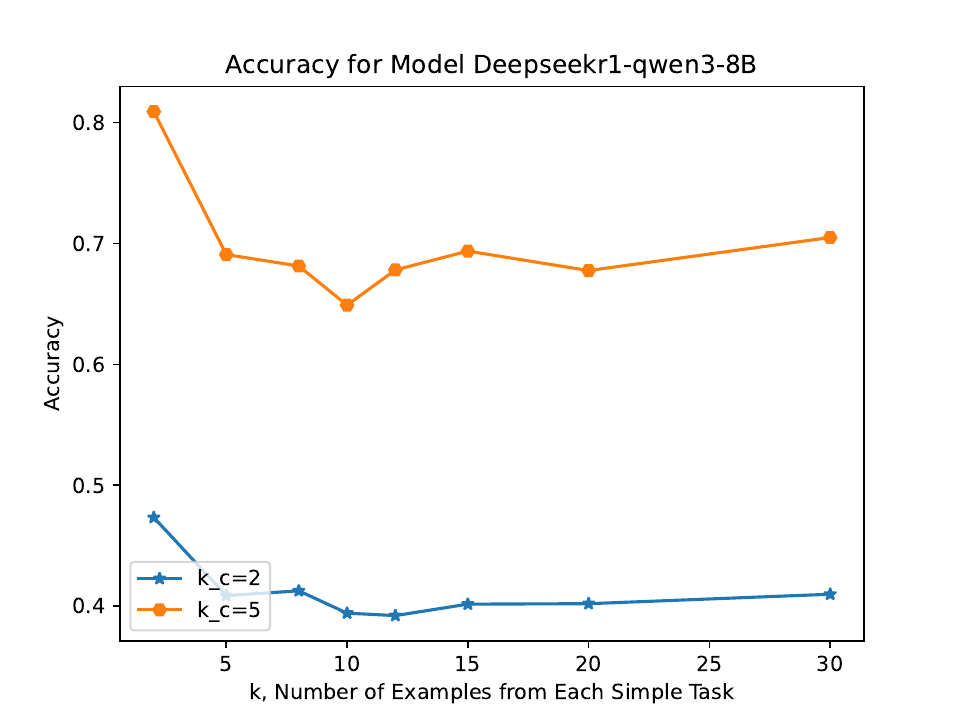}
\includegraphics[width=0.33\textwidth]{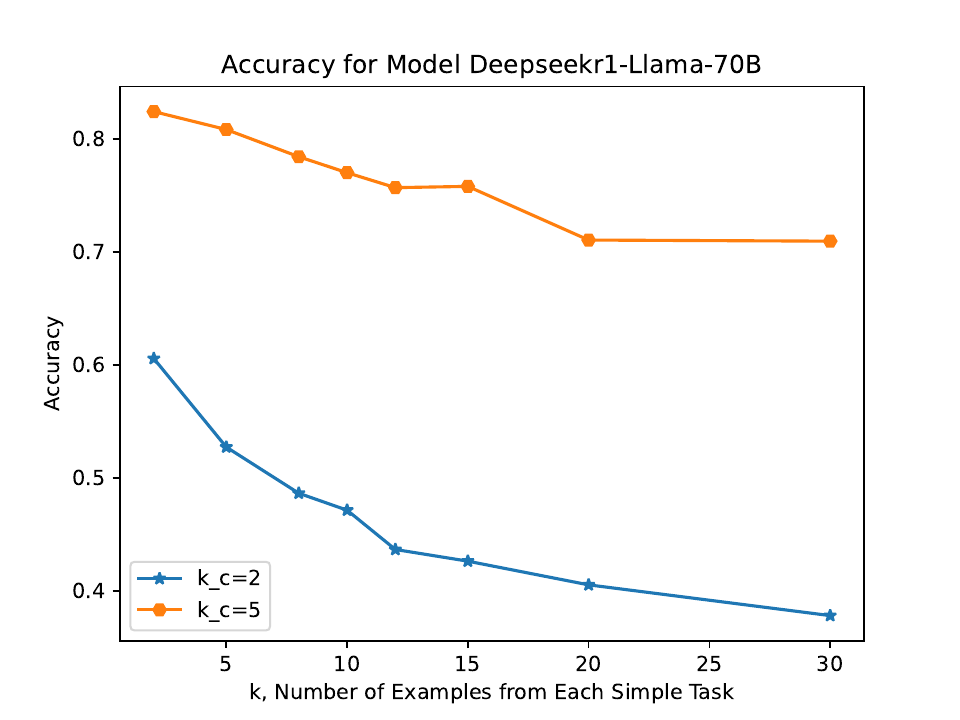}

\caption{Results after ablating the content in the composite task examples.
}
\label{fig:increasing-kc-example-selected-tasks-irrlcontent}
\end{figure*}

\begin{figure*}
\includegraphics[width=0.33\textwidth]{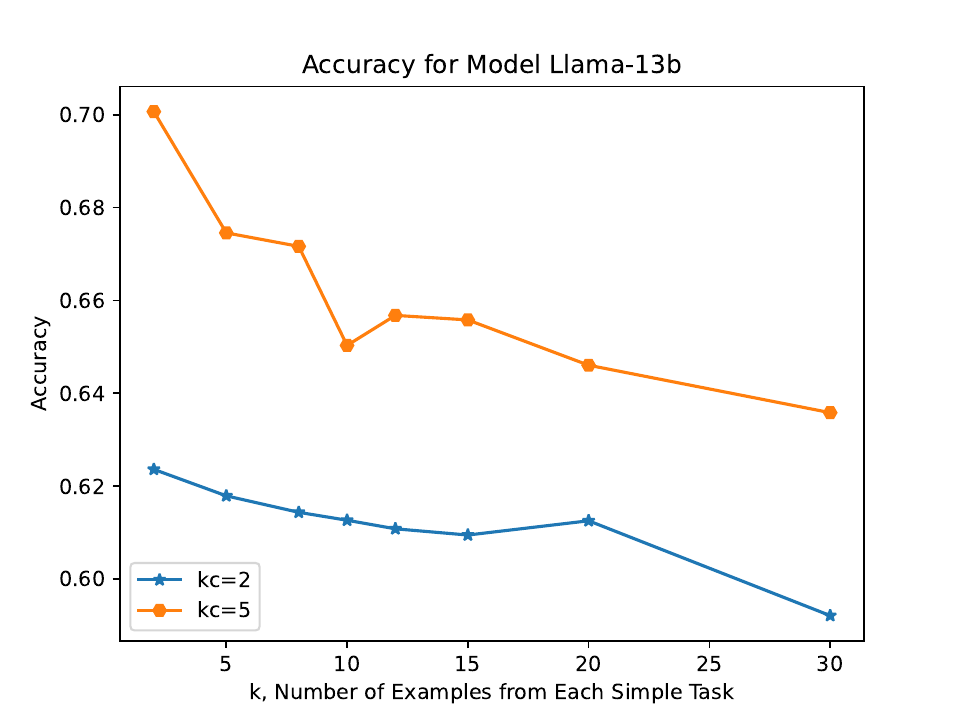}
\includegraphics[width=0.33\textwidth]{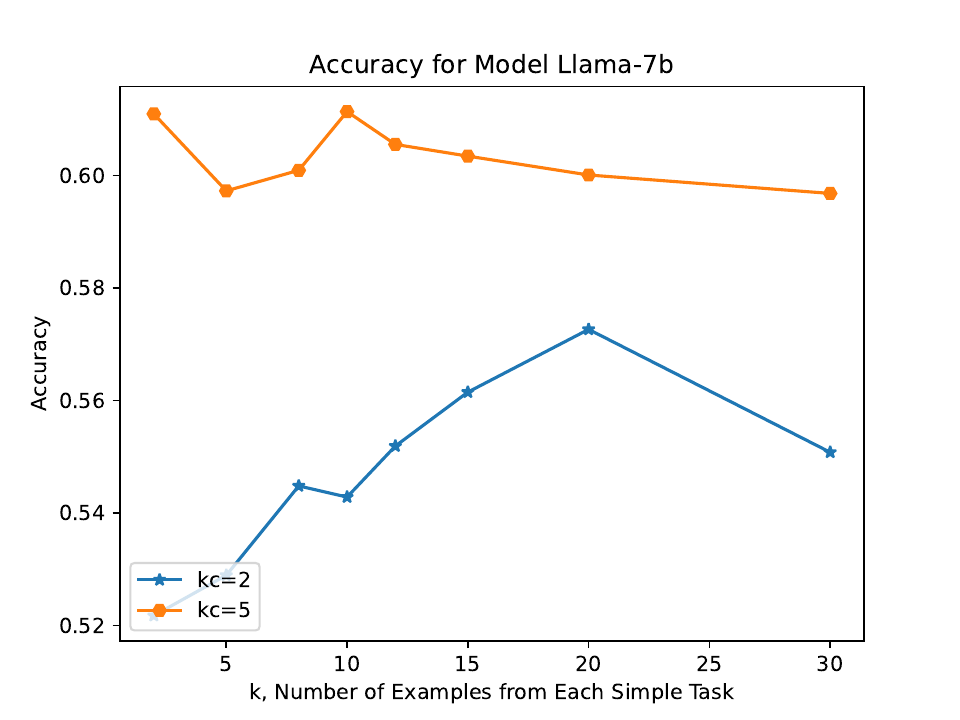}
\includegraphics[width=0.33\textwidth]{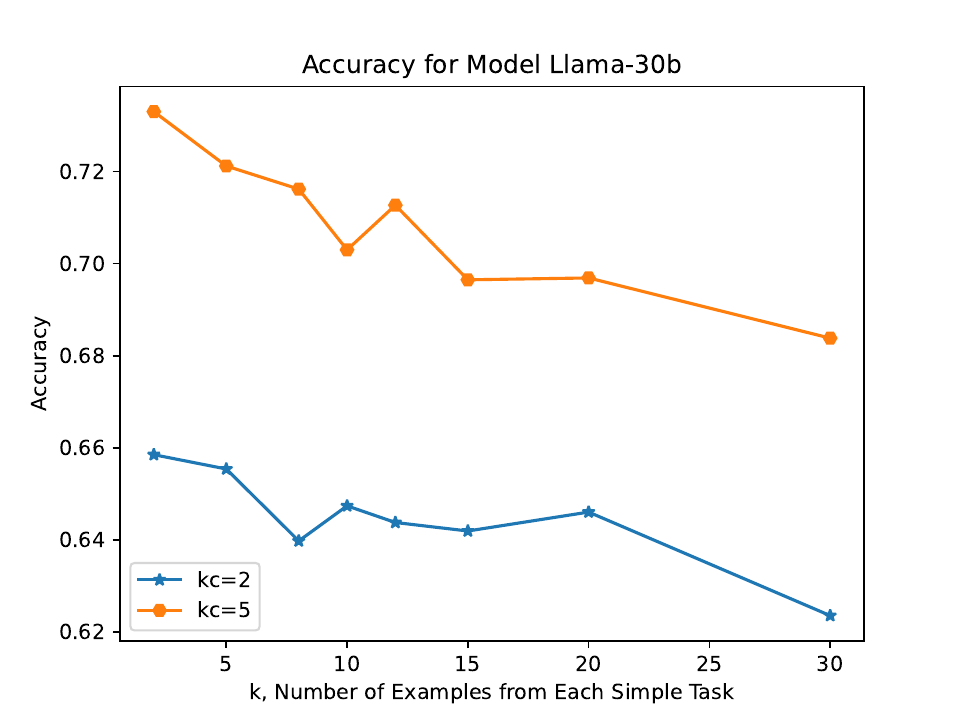}

\includegraphics[width=0.33\textwidth]{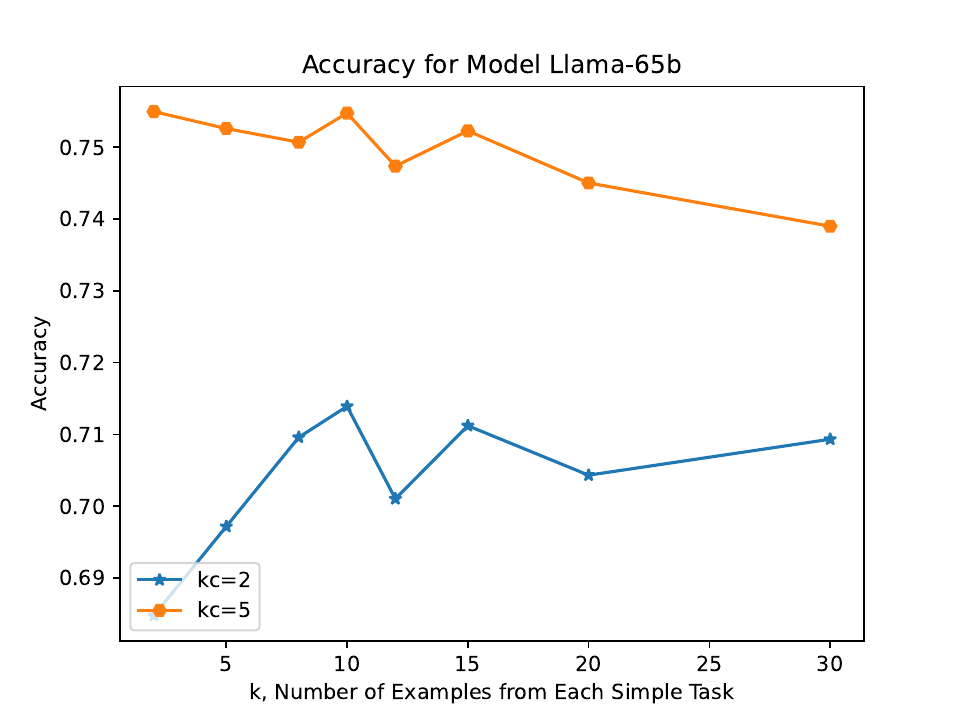}
\includegraphics[width=0.33\textwidth]{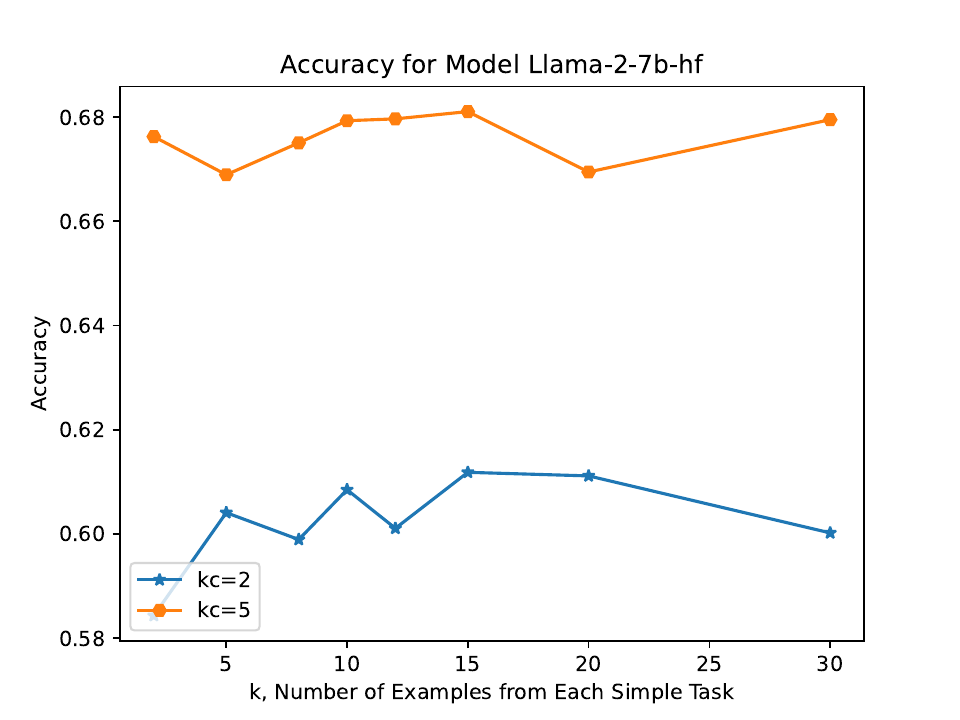}
\includegraphics[width=0.33\textwidth]{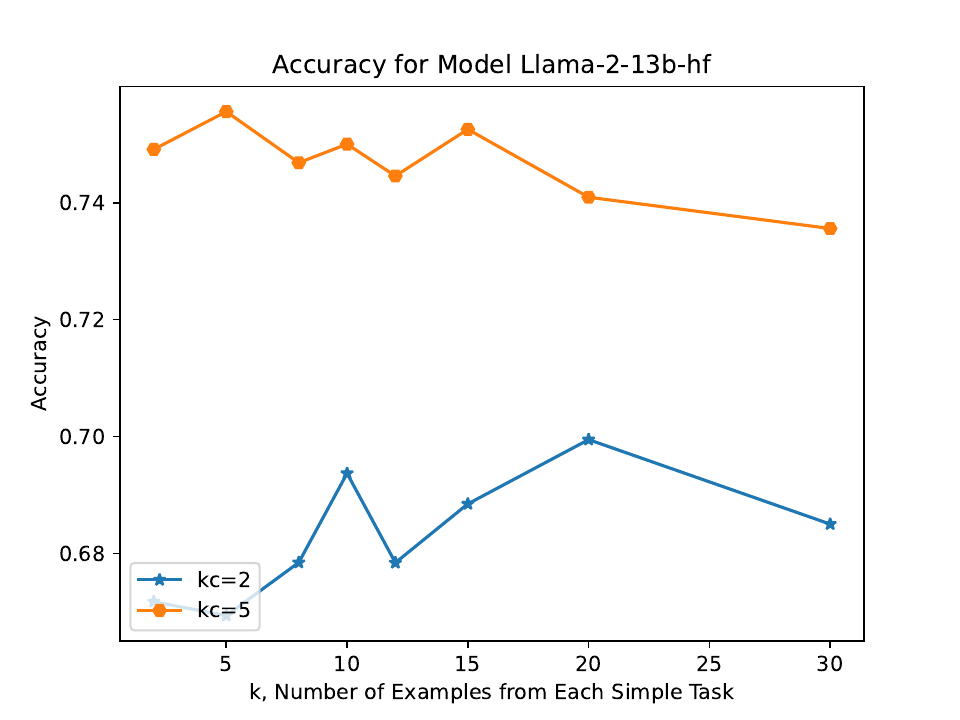}

\includegraphics[width=0.33\textwidth]{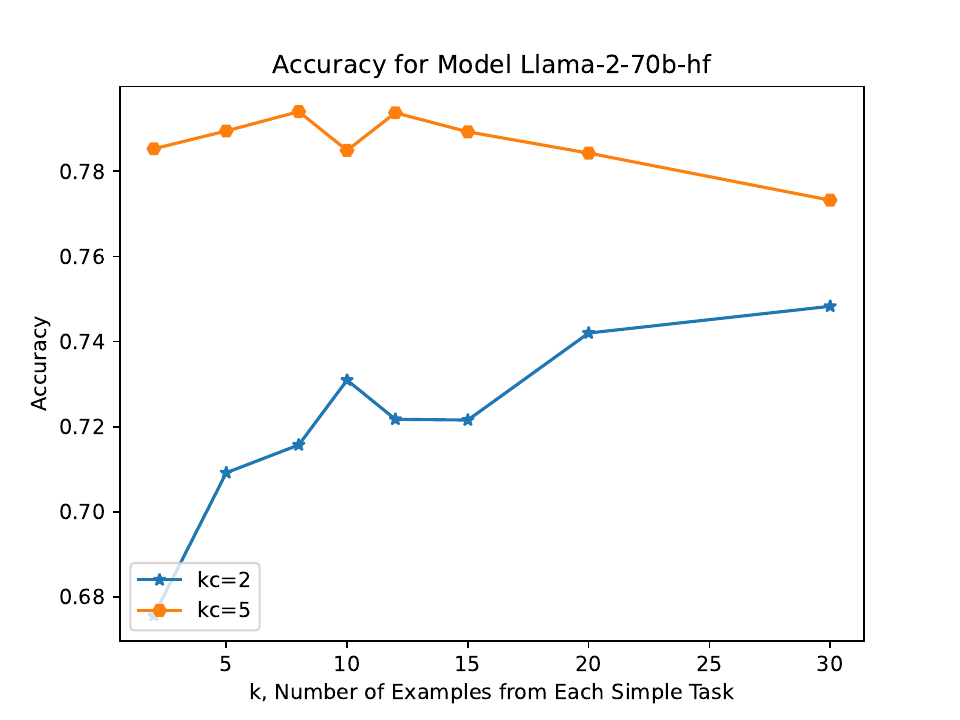}
\includegraphics[width=0.33\textwidth]{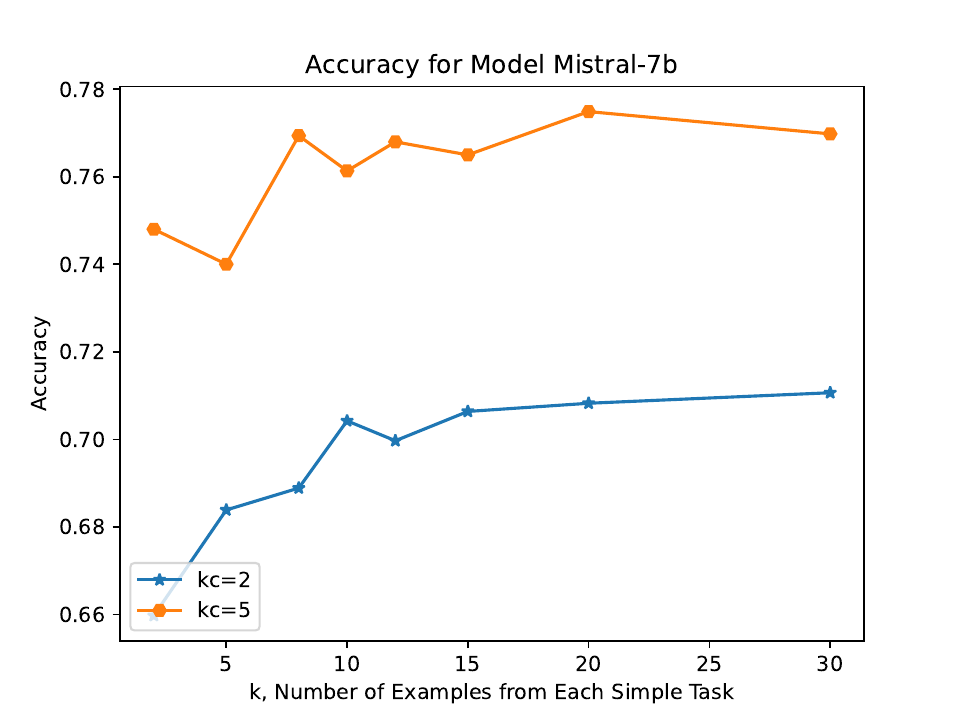}
\includegraphics[width=0.33\textwidth]{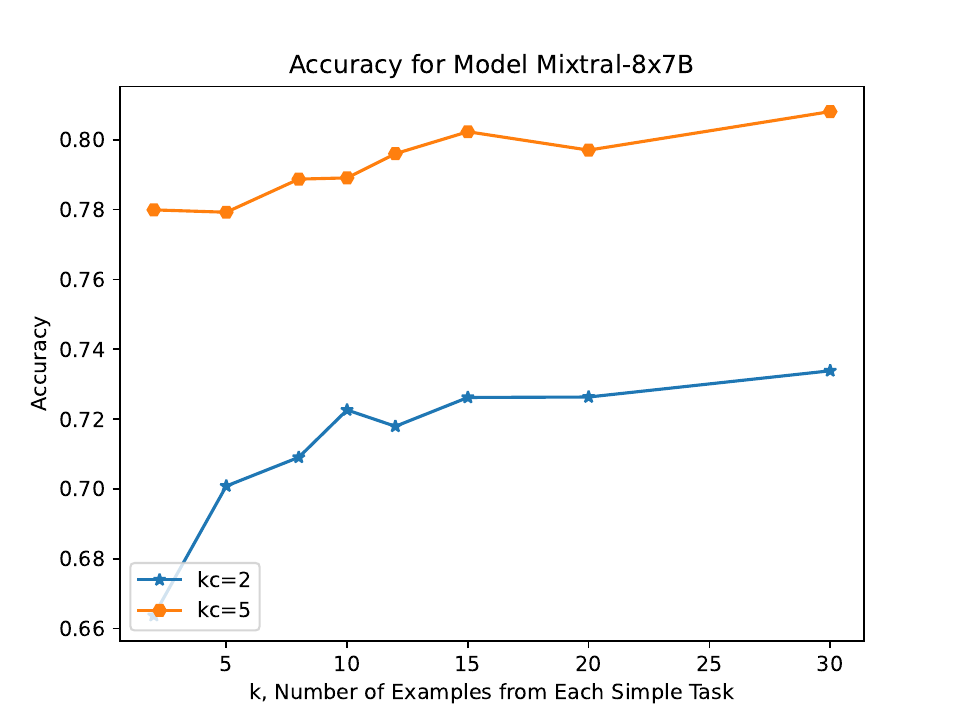}

\includegraphics[width=0.33\textwidth]{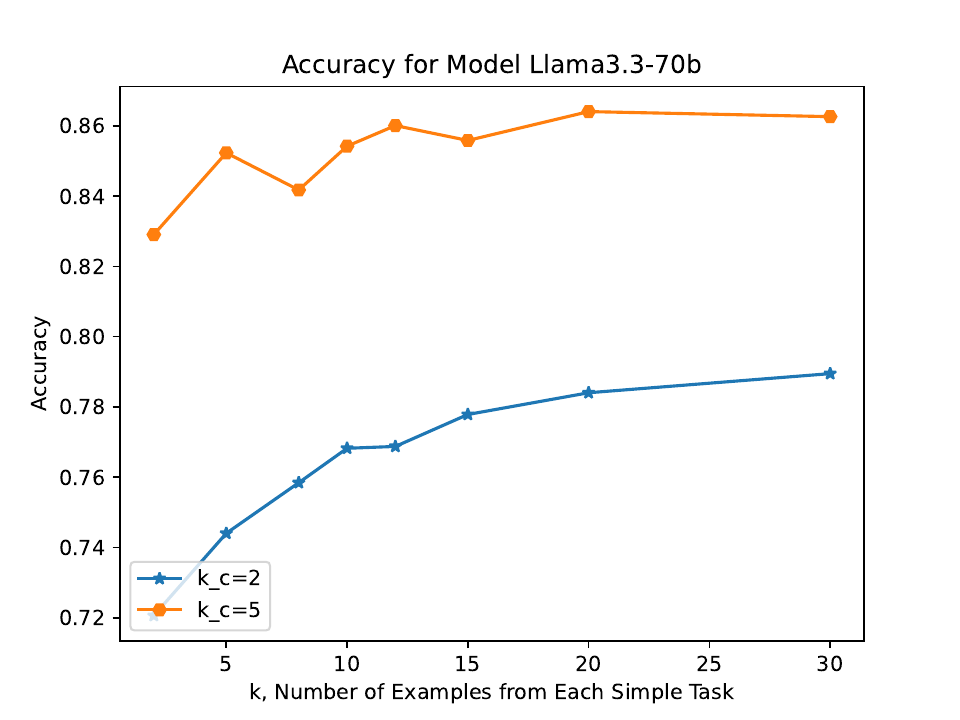}
\includegraphics[width=0.33\textwidth]{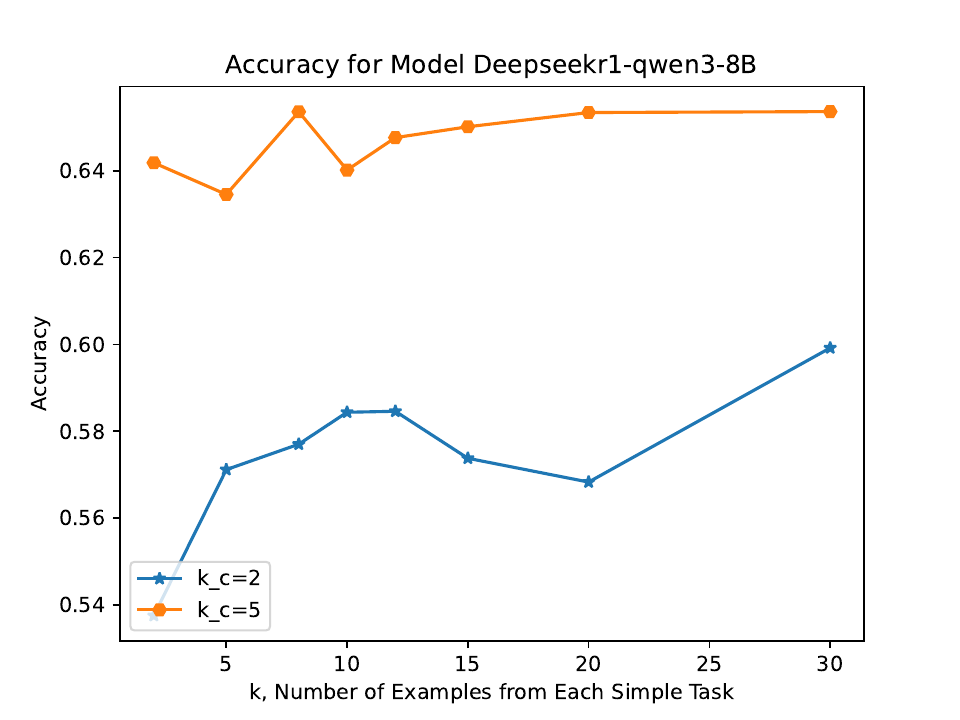}
\includegraphics[width=0.33\textwidth]{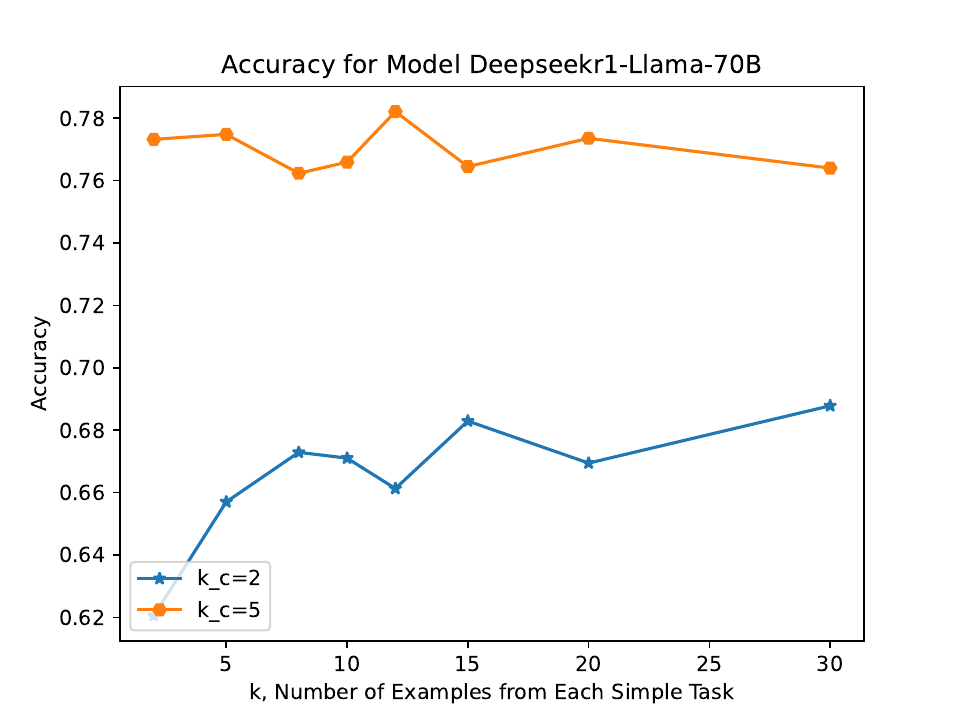}
\caption{Results after ablating the operators in the composite task examples.
}
\label{fig:increasing-kc-example-selected-tasks-irrloperator}
\end{figure*}

\subsection{Detailed results for inner attention} \label{app:attention}
%In this section, we present detailed results for inner attention.

\subsubsection{More results for similarities of attentions} \label{app:similarity}
In the main body, we present the similarities of attention for the opposition+swap task. In particular, we choose the opposition+swap task, fix a context, and randomly generate the queries (100 simple task queries and 100 composite task queries). The results there show that the similarities among simple and composite queries are high, suggesting the model does not distinguish the two kinds of tasks. 

In this section, we present some additional results for the case where the model succeeds in solving the query, i.e., similarities of the attention on queries that have high accuracy. We choose the task opposition+pastTense (which has high accuracy), fixed a context, and randomly generate 100 simple task queries and 100 composite task queries. 

\cref{fig:similarity-high-acc} shows that for such a case, the model indeed has different patterns of attention for the two kinds of queries: simple task and composite task queries. This means that in order to achieve high accuracy, it is important for the model to distinguish the two kinds of queries.

\begin{figure}
\centering 
\includegraphics[width=0.32\linewidth]{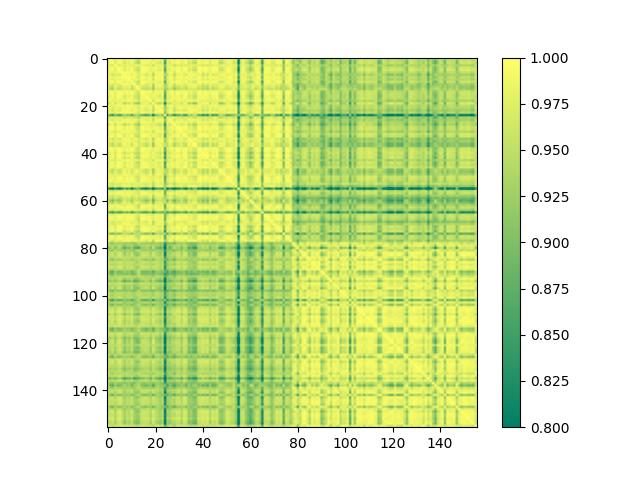}
\includegraphics[width=0.32\linewidth]{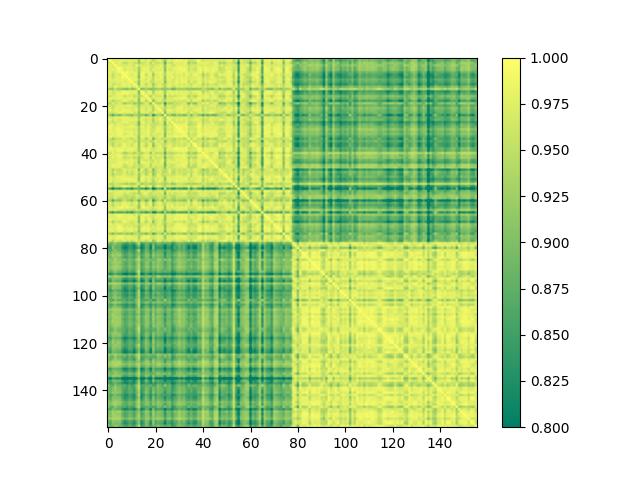}
\includegraphics[width=0.32\linewidth]{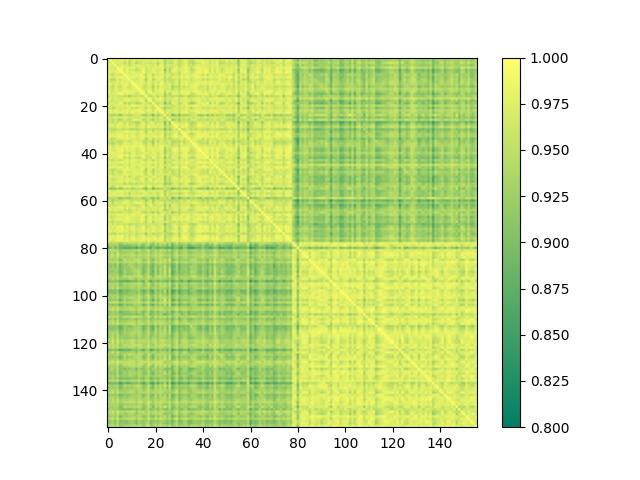}
\caption{Similarities of the attentions between 100 simple queries (first 100 rows/columns) and 100 composite queries (last 100 rows/columns). Attentions are from Layer 15, 17, 19 of MistralAI-7b. The composite queries are selected among high accuracy ones (more precisely, from the opposition+pastTense task).}
\label{fig:similarity-high-acc}
\end{figure}

\subsubsection{Results for average attention from the query}\label{app:metatoken}

Here we investigate the average attention from the tokens in the query to the tokens in different groups of the in-context examples. The prompt tokens are in four groups: the composite task, task 1, task 2, and the query. We compute the average attention from a token in the query to a token in some other group, to inspect where the model pays more attention when solving the query. 
We present the results for Layer 12, 15, 18 of the model Mistral-7B on a few example tasks.

% \cref{fig:metatoken} shows that more attention is paid to the composite examples but roughly the same is also paid to simple ones. So the model likely use both simple/composite task examples for the query. 

%\input{import/fig_metatoken}

\cref{fig:metatoken-detail} shows that on these tasks, the same phenomenon is observed: roughly the same order of attention is paid from the query to the three different groups of examples. While this observation alone does not rule out the possibility that the model makes clever use of different groups of examples but in different ways, the observations in the other experiments above suggest that is unlikely. So combined with the observations in the other experiments, the results here suggest the model may not be able to allocate proper attention to the three groups of examples. 

\begin{figure*} 
\centering
{
\subfloat[The opposition+swap task]{\includegraphics[width=0.4\linewidth]{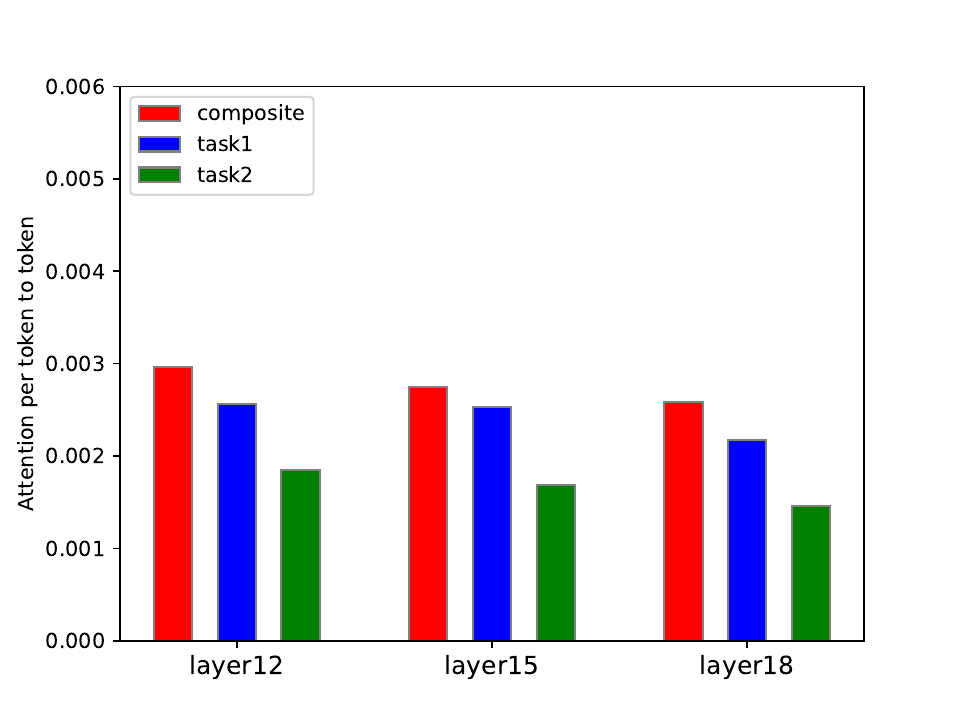}}
\subfloat[The opposition+pastTense task]{\includegraphics[width=0.4\textwidth]{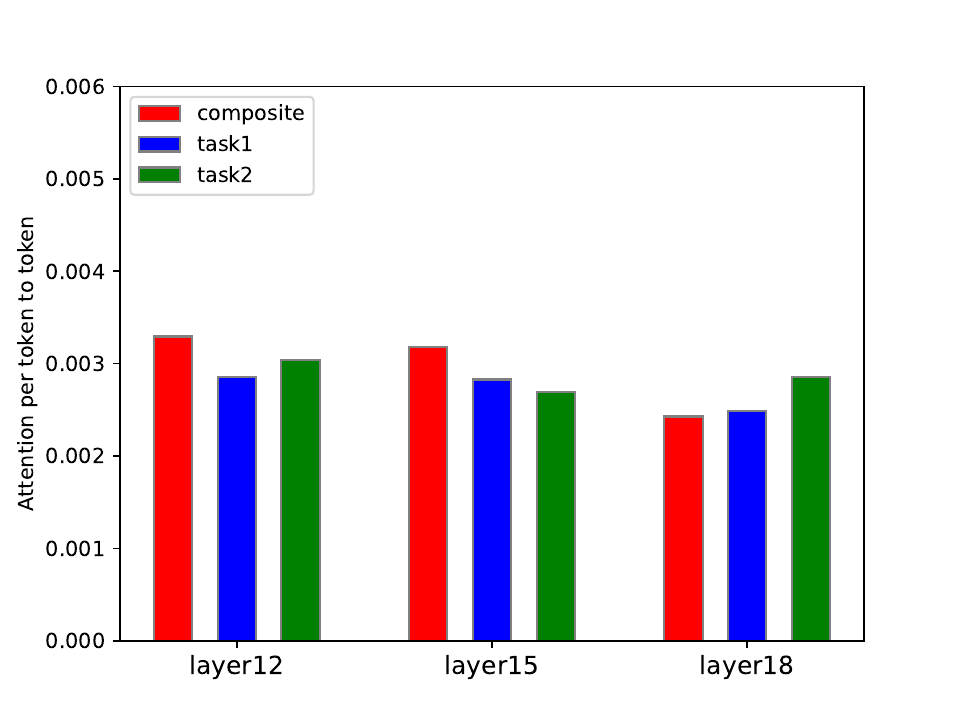}}
}%centering
{
\subfloat[The pastTense+capitalization task]{\includegraphics[width=0.4\textwidth]{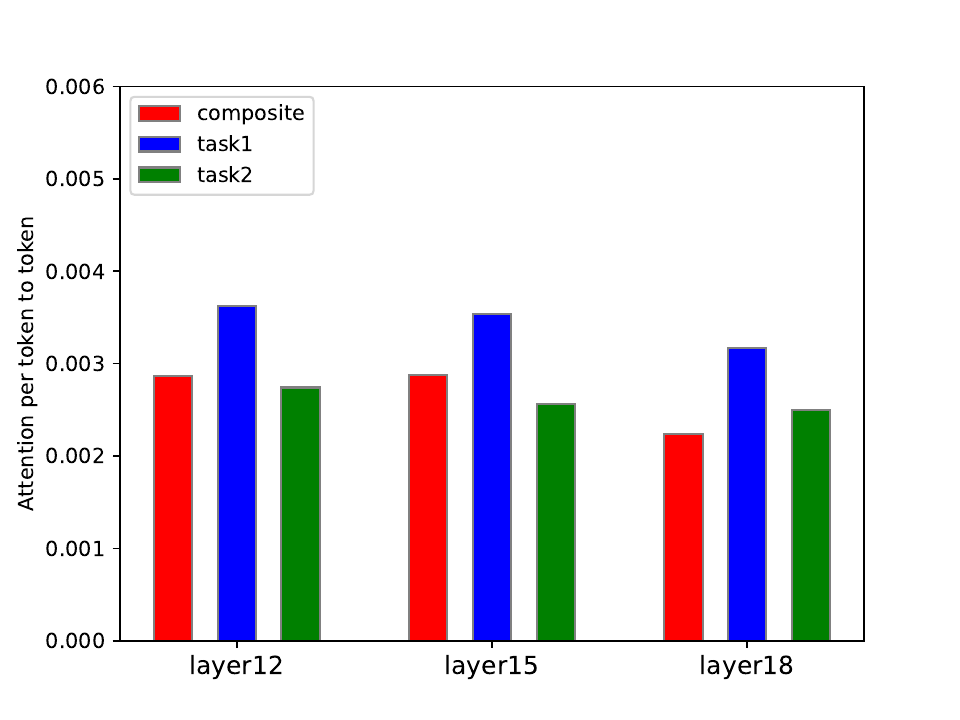}}
\subfloat[The pastTense+plusOne task]{\includegraphics[width=0.4\textwidth]{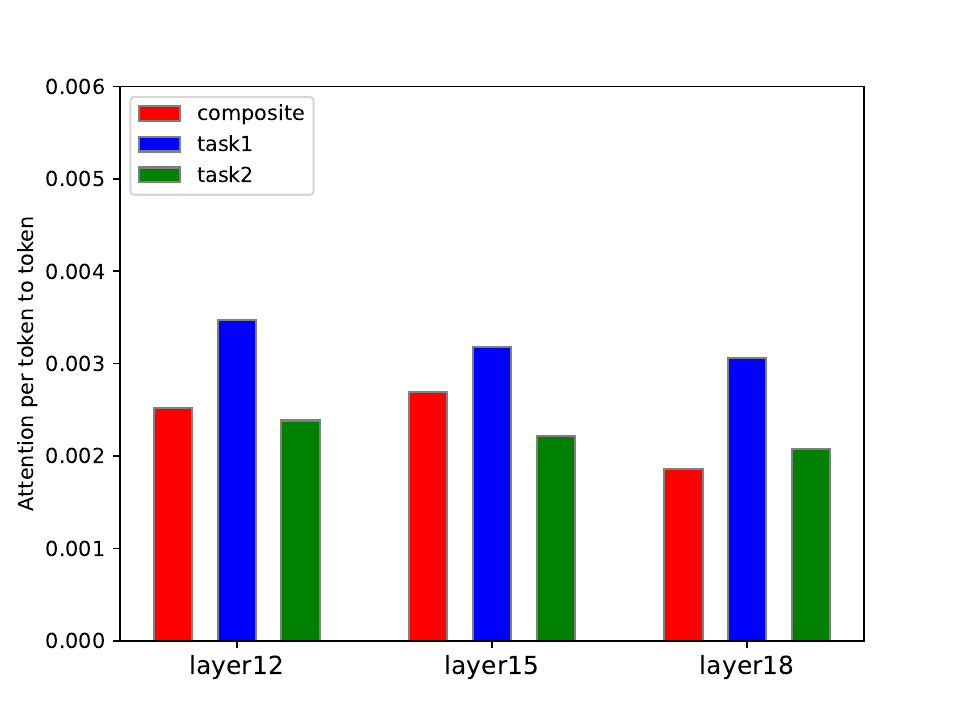}}
}
{
\subfloat[The pastTense+swap task]{\includegraphics[width=0.4\textwidth]{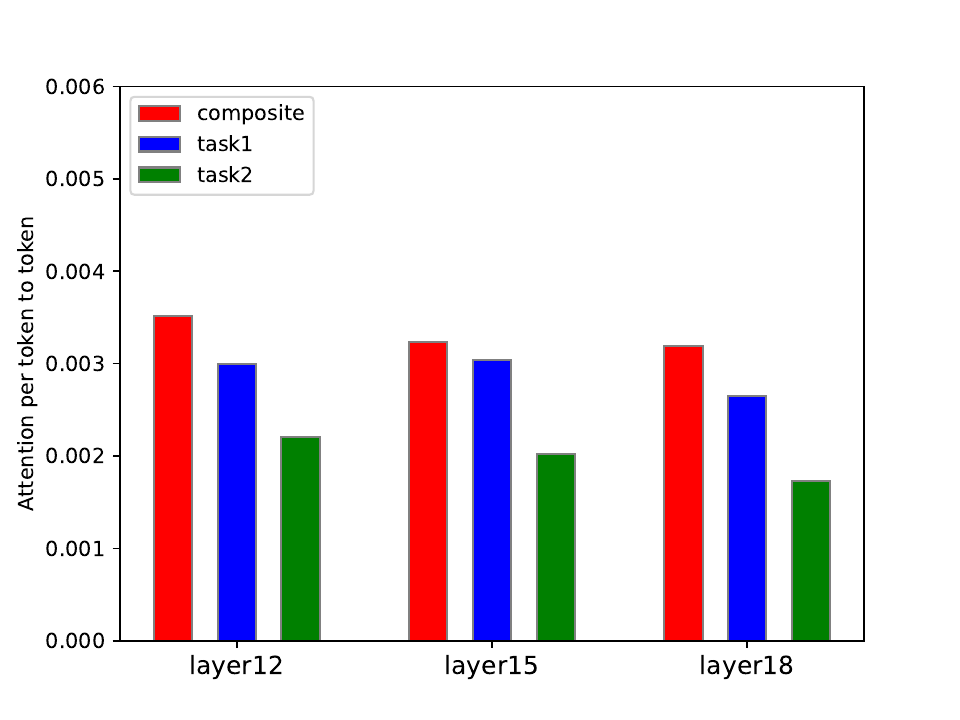}}
}
\caption{Average attention from the composite task query to different groups of in-context examples.
}
\label{fig:metatoken-detail}
\vspace{-1em}
\end{figure*}

\subsection{Detailed results about CoT}\label{app:cot}

In this section, we present detailed results for na\"ive applying Chain-of-Thought on the composite task examples.
 
\cref{fig:increasing-kc-example-selected-tasks-naivecot} shows the effect of in-context simple task examples for each $k_c$ and model (the reported accuracy is averaged over tasks). More precisely, we draw a subfigure for each model and draw a curve for each $k_c$; the $x$-axis is the number $k$ of examples from each simple task, and $y$-axis is the accuracy averaged over all the composite tasks. Note that when $k_c=0$ there are no composite task examples and thus it is meaningless to apply CoT, so we ignore this case. 
From the detailed results, we can see that applying CoT na\"ively does not improve the accuracy much. It does not reduce the negative impact of more simple task examples either, which suggests that even with CoT composite task examples the model still cannot utilize the examples from simple tasks properly. 

\begin{figure*}
\includegraphics[width=0.33\textwidth]{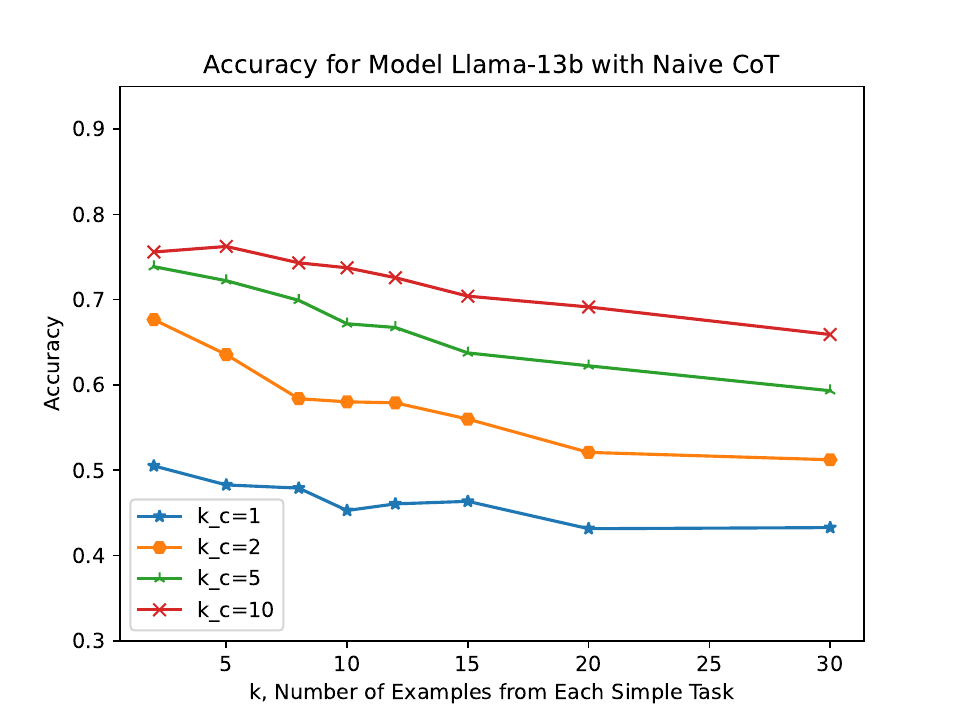}
\includegraphics[width=0.33\textwidth]{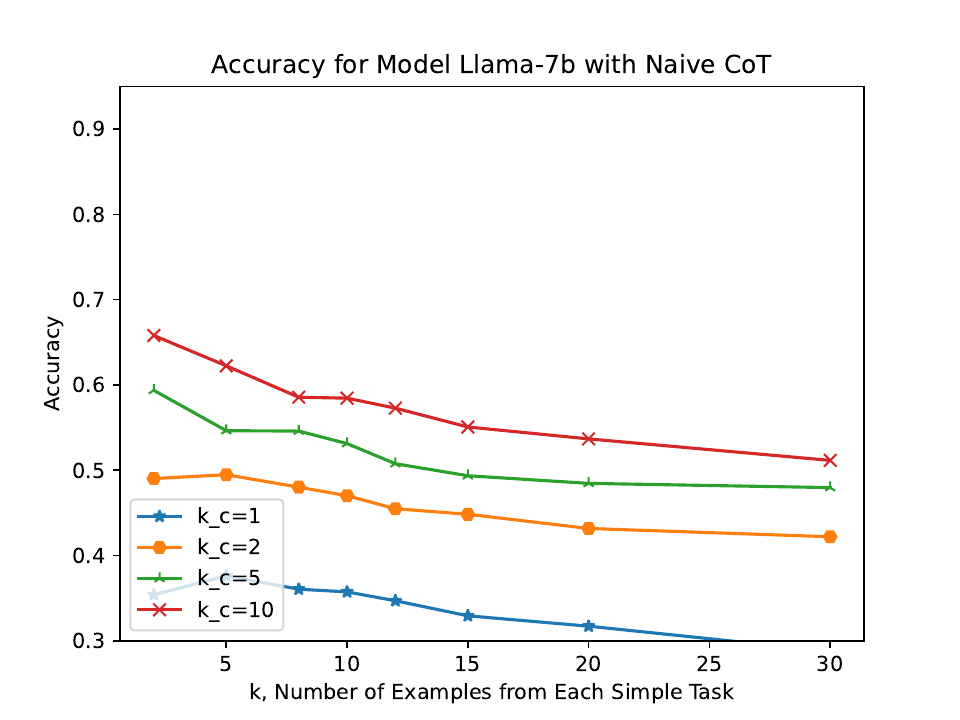}
\includegraphics[width=0.33\textwidth]{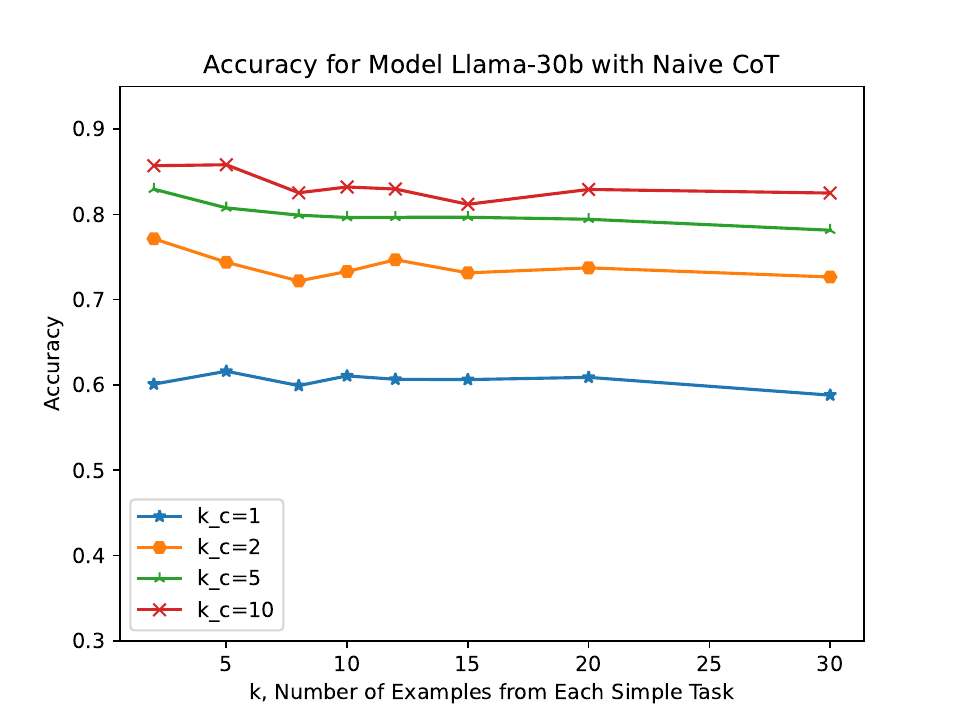}

\includegraphics[width=0.33\textwidth]{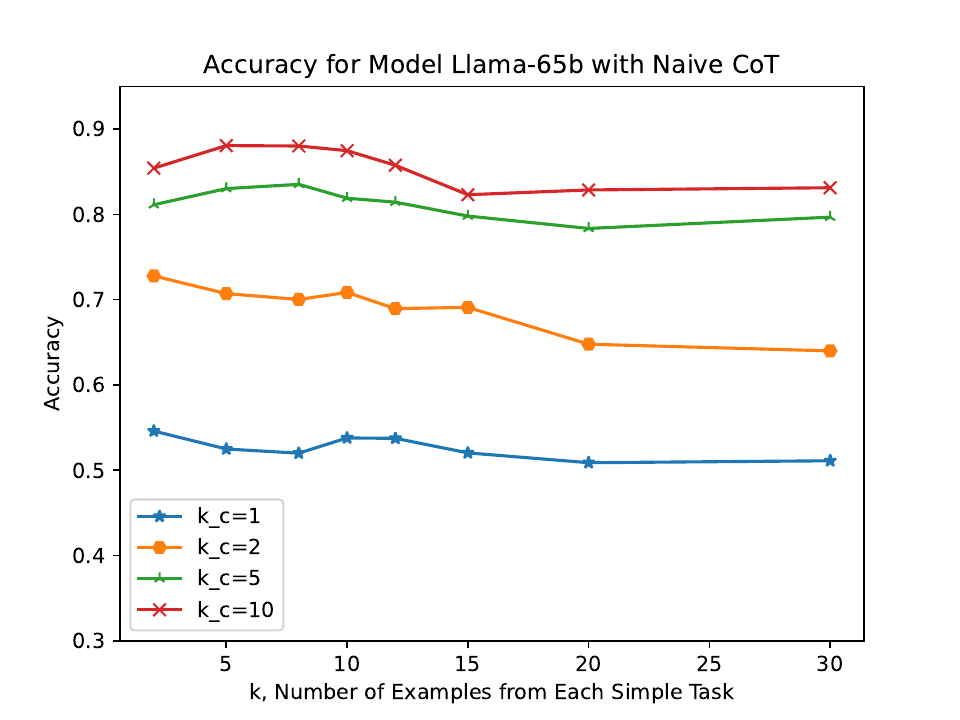}
\includegraphics[width=0.33\textwidth]{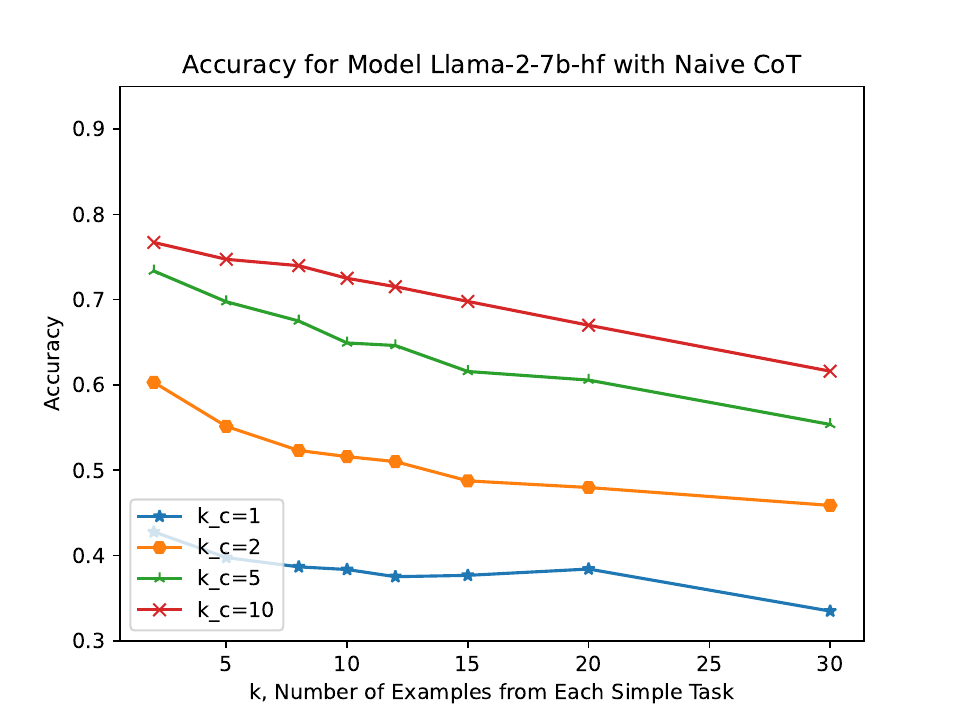}
\includegraphics[width=0.33\textwidth]{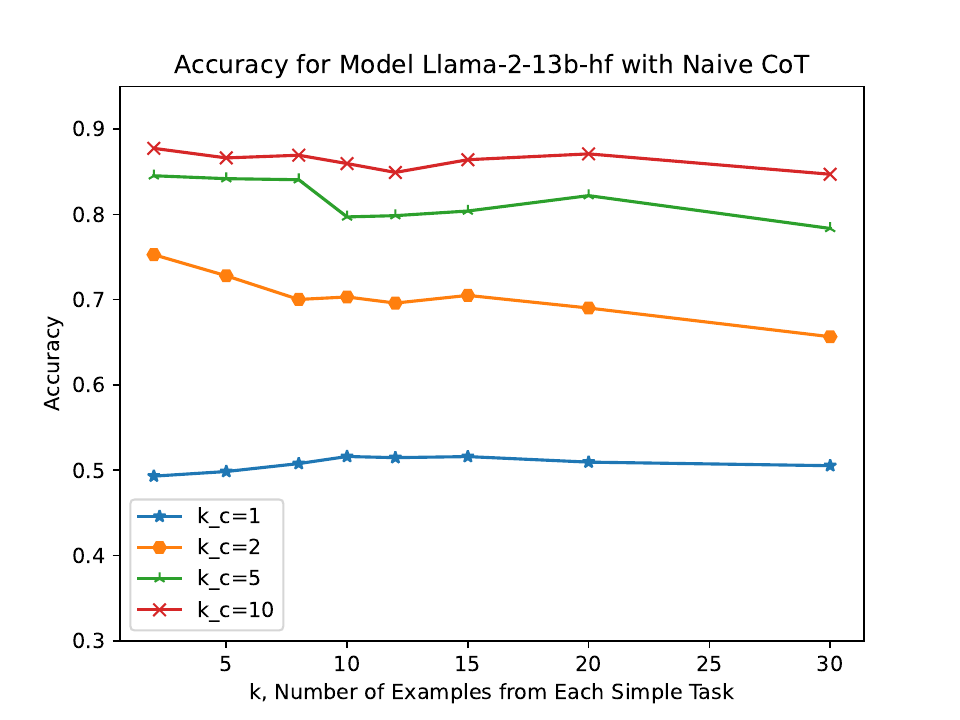}

\includegraphics[width=0.33\textwidth]{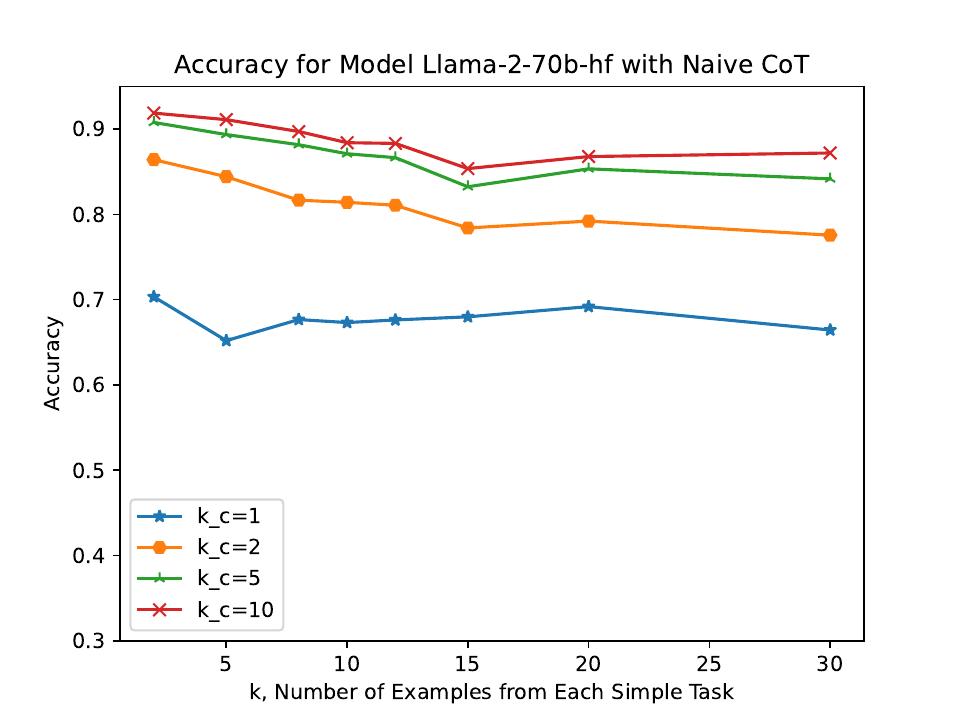}
\includegraphics[width=0.33\textwidth]{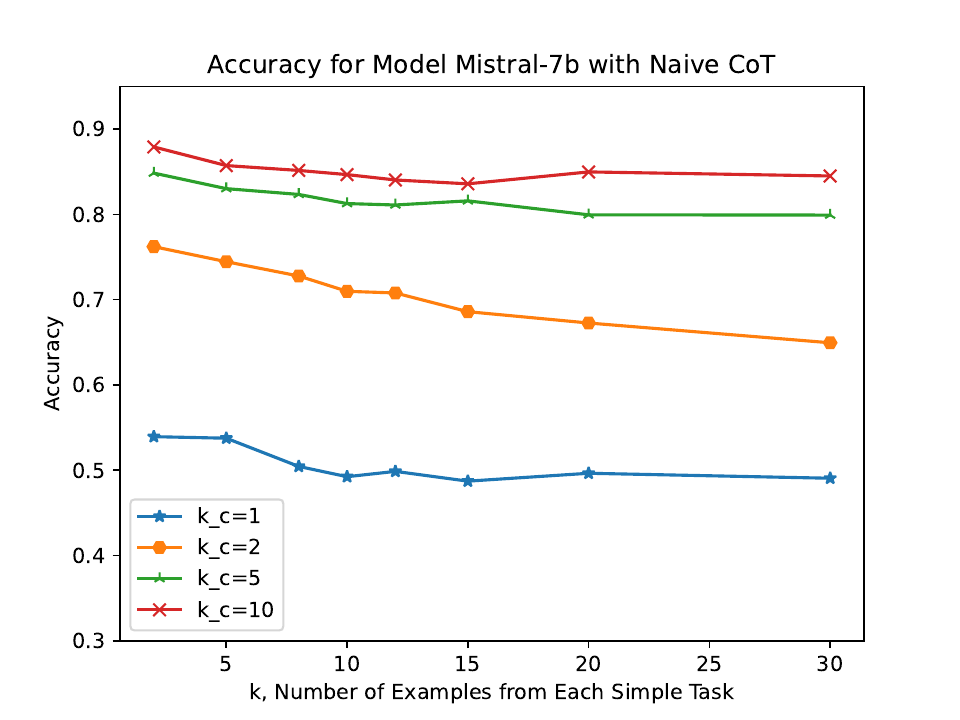}
\includegraphics[width=0.33\textwidth]{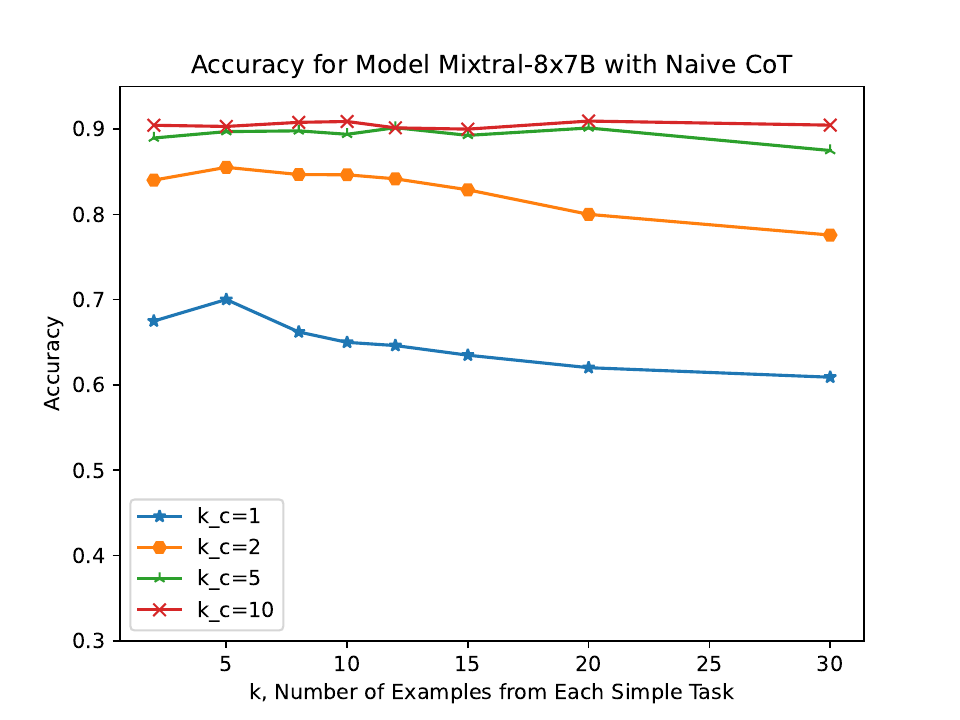}

\includegraphics[width=0.33\textwidth]{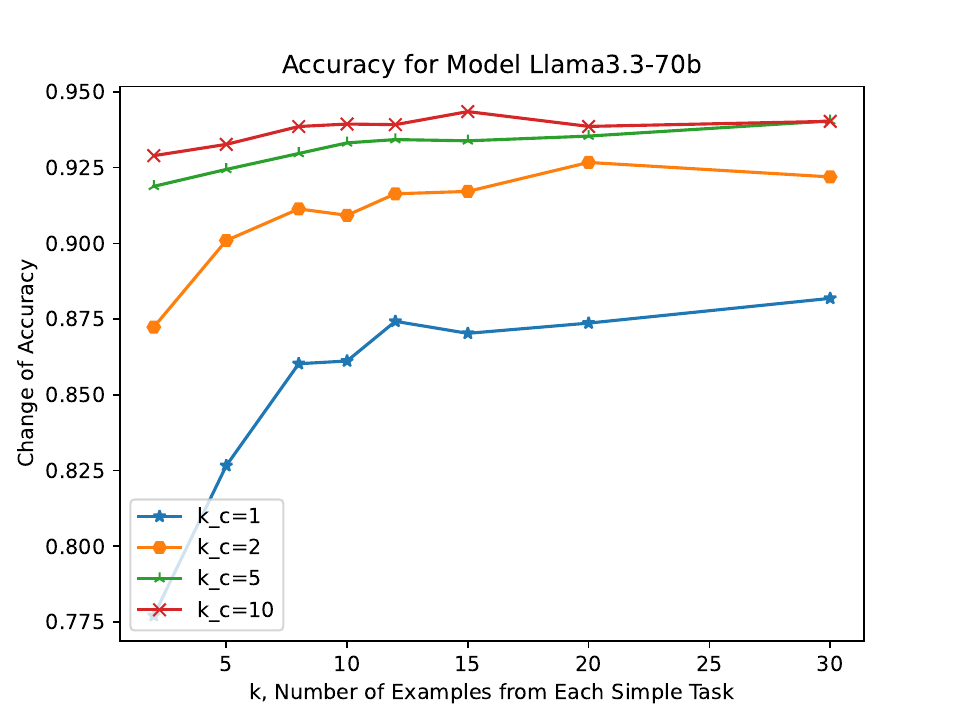}
\includegraphics[width=0.33\textwidth]{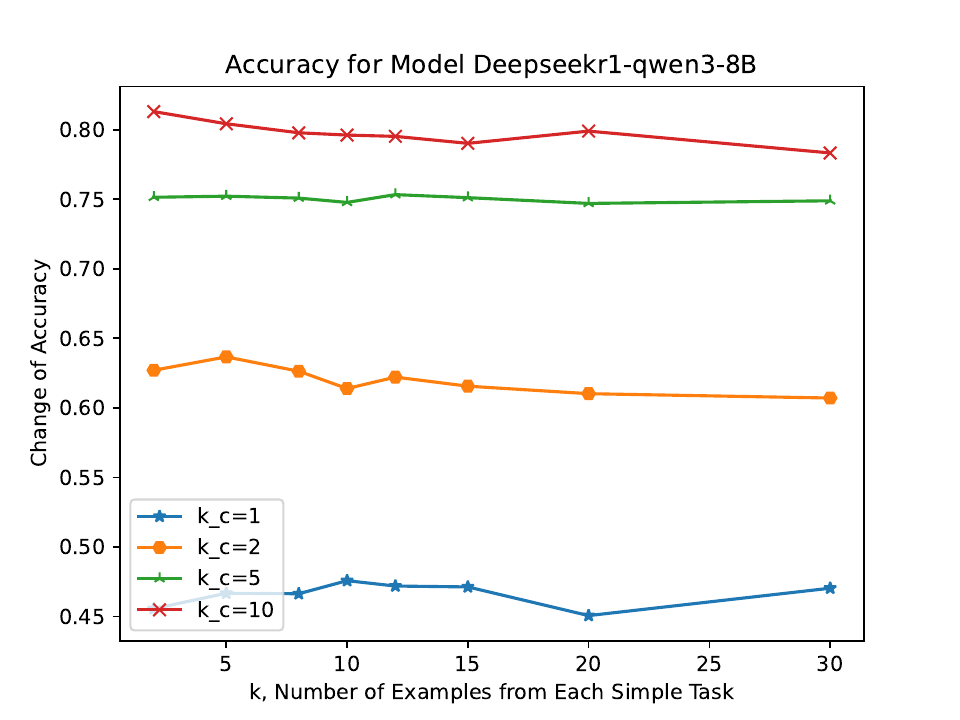}
\includegraphics[width=0.33\textwidth]{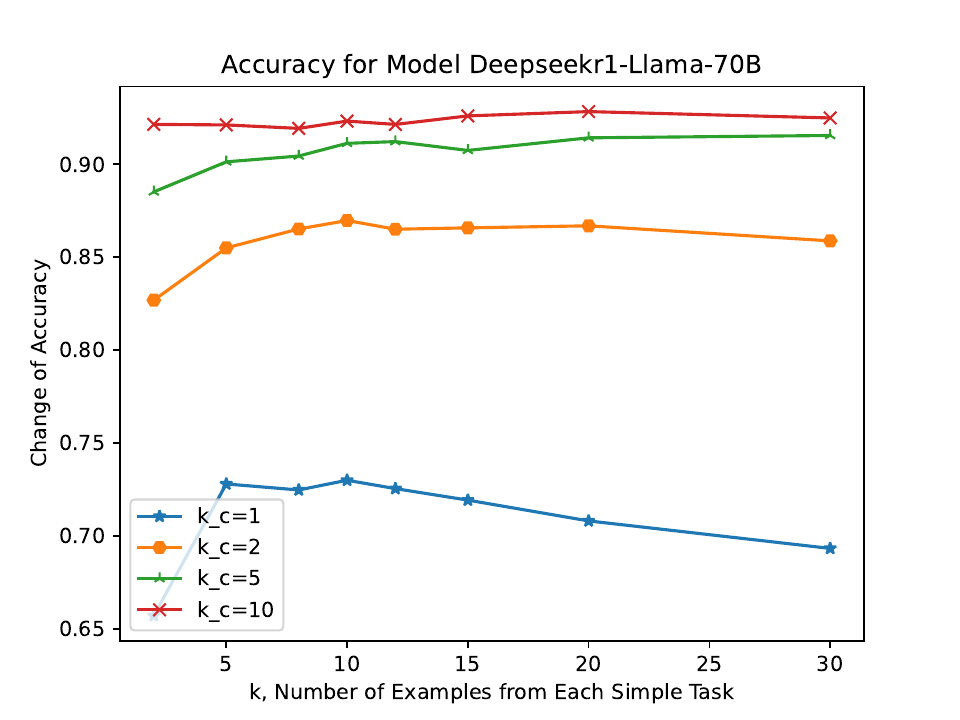}

\caption{The effect of the in-context examples from composite tasks with na\"ive Chain-of-Thoughts.
}
\label{fig:increasing-kc-example-selected-tasks-naivecot}
\end{figure*}

\subsection{The experiment before ExpCoT: adding tags to samples}\label{app:tag}

A natural idea to improve the composition performance is to let the model know explicitly which task each in-context example is from. This can potentially help the model distinguish between simple/composition task examples and make better use of them. 
We add tags “simple1” “simple2” or “composite” to each in-context example (and the query), and rerun the experiments with 
$k_c = 5$. 

%The main idea of ExpCoT is to align the steps. Apart from using steps to force match like expanded chain-of-thoughts, there are also other ways of approaching. Adding tags for each sample is obviously one possible way of doing so.

%More precisely, we insert a phrase before each sample (and the query) in the input sequence, denoting whether this sample belongs to simple task 1, 2, or the composite task. 

%We tend to see whether adding these tags would help better matching the steps.

\paragraph{Result.}
The results show that adding tags indeed helps improve accuracy, but it does not help eliminate the negative impact of simpler task examples. 

Table~\ref{tab:add-tags} presents the accuracy averaged over all tasks and all $k$ values, comparing the case without tags and the case with tags. It shows that adding tags indeed increases the accuracy by 1-5 percent. The result is consistent with our theoretical insight. Adding tags can help the model distinguish between simple task examples from composite ones, thus avoiding the confusion we observed in our experiments and improving the accuracy.

However, this does not help the model align the simple task examples with proper steps in the CoT. As a result, the model still cannot exploit the simple task examples effectively, and the negative impact of more simple task examples still exists as shown in
Figure~\ref{fig:add-tags}.

\begin{table}[th]
\setlength{\tabcolsep}{4pt}
\scriptsize
    %\hspace{-0.8cm}
    \centering
    \begin{tabular}{c|ccccccccc}
    \toprule
     & Llama-7B &  Llama-13B & Llama-30B & Llama-65B & Llama2-7B & Llama2-13B & Llama2-70B & Mistral-7B & Mistral-8x7B \\ 
    \midrule
    No Tags	& 53.4 & 	69.7	& 79.0 & 	74.9 & 	70.0 & 	 82.1 &	86.0	& 78.2	& 82.0 \\
    Have Tags &	64.8 & 	72.1 &	81.4&	75.7	&76.4	&82.2	& 87.0	& 82.3&	84.0 \\ 
    \bottomrule
    \end{tabular}
    \medskip
    \caption{The accuracy (\%) averaged over tasks and $k$ ($k_c = 5$), for without tags and with tags. }
    \label{tab:add-tags} 
\end{table}

\begin{figure*}
\centering
\includegraphics[width=0.65\textwidth]{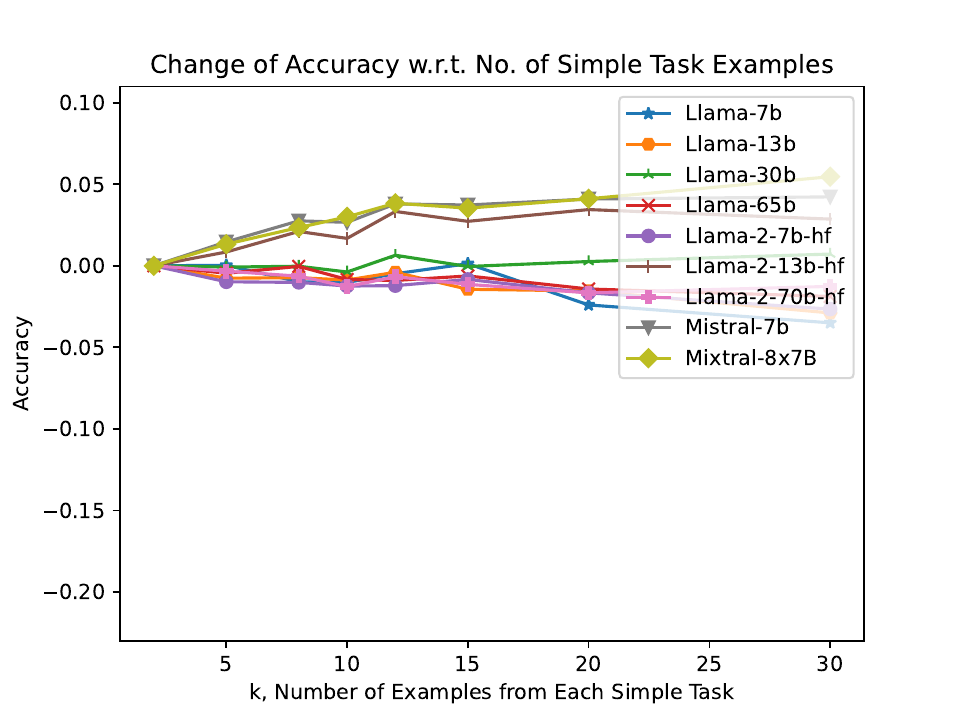}
\caption{The change of accuracy with the number of examples from each simple task, where tags were added to each sample.}
\label{fig:add-tags}
\end{figure*}

% \begin{figure*}
% \includegraphics[width=0.33\textwidth]{image_appendix/Llama-13b_naivetags.pdf}
% \includegraphics[width=0.33\textwidth]{image_appendix/Llama-7b_naivetags.pdf}
% \includegraphics[width=0.33\textwidth]{image_appendix/Llama-30b_naivetags.pdf}

% \includegraphics[width=0.33\textwidth]{image_appendix/Llama-65b_naivetags.pdf}
% \includegraphics[width=0.33\textwidth]{image_appendix/Llama-2-7b-hf_naivetags.pdf}
% \includegraphics[width=0.33\textwidth]{image_appendix/Llama-2-13b-hf_naivetags.pdf}

% \includegraphics[width=0.33\textwidth]{image_appendix/Llama-2-70b-hf_naivetags.pdf}
% \includegraphics[width=0.33\textwidth]{image_appendix/Mistral-7b_naivetags.pdf}
% \includegraphics[width=0.33\textwidth]{image_appendix/Mixtral-8x7B_naivetags.pdf}
% \caption{The effect of the in-context examples from composite tasks with captioning tags.
% }
% \label{fig:increasing-kc-example-selected-tasks-tags}
% \end{figure*}

%Compared to other methods, we could spot that adding tags helps in aiding matching the steps better. Adding tags experiment proves that our theory of matching steps is applicable. However, in real-world applications we could not really add a tag for every training data that we provide. 

\subsection{Detailed results about ExpCoT}\label{app:expcot}

In this section, we present detailed results for our ExpCoT method.
 
\cref{fig:increasing-kc-example-selected-tasks-expandedcot} shows the effect of in-context simple task examples for each $k_c$ and model (the reported accuracy is averaged over tasks). More precisely, we draw a subfigure for each model and draw a curve for each $k_c$; the $x$-axis is the number $k$ of examples from each simple task, and $y$-axis is the accuracy averaged over all the composite tasks. We ignore $k_c=0$ where there is no composite task examples and thus it is meaningless to apply our method. 
From the detailed results, we observe that our method yields significant accuracy improvements across models and mitigates the negative impact of simple task examples. While vanilla settings show performance degradation with additional simple examples, our approach enables most models to benefit from these examples, with only minor negative effects remaining in smaller models. These findings demonstrate that our method enhances the models' ability to effectively utilize examples for in-context composition.
% From the detailed results, we can see that overall, our method significantly improves the accuracy. It also address the negative impact of more simple task examples: on many models, more simple task examples now have a positive impact on the performance; only on some small models, they still have some negative impact but much less than in the vanilla setting without out method. 
% This suggests that with our method, the model can now utilize the examples properly and do in-context composition better.

\begin{figure*}
\includegraphics[width=0.33\textwidth]{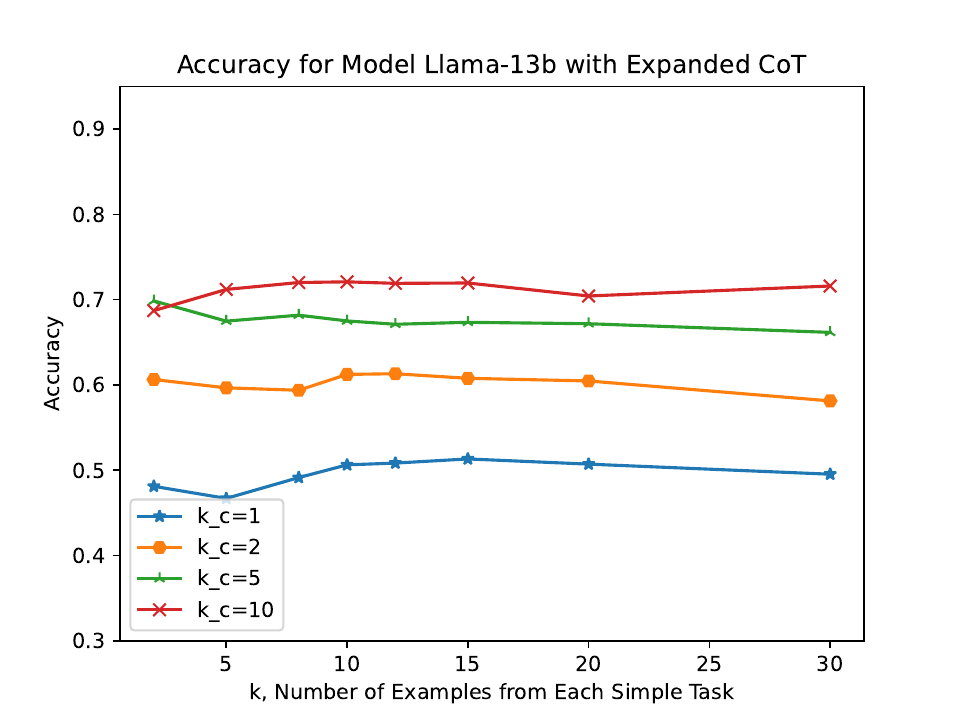}
\includegraphics[width=0.33\textwidth]{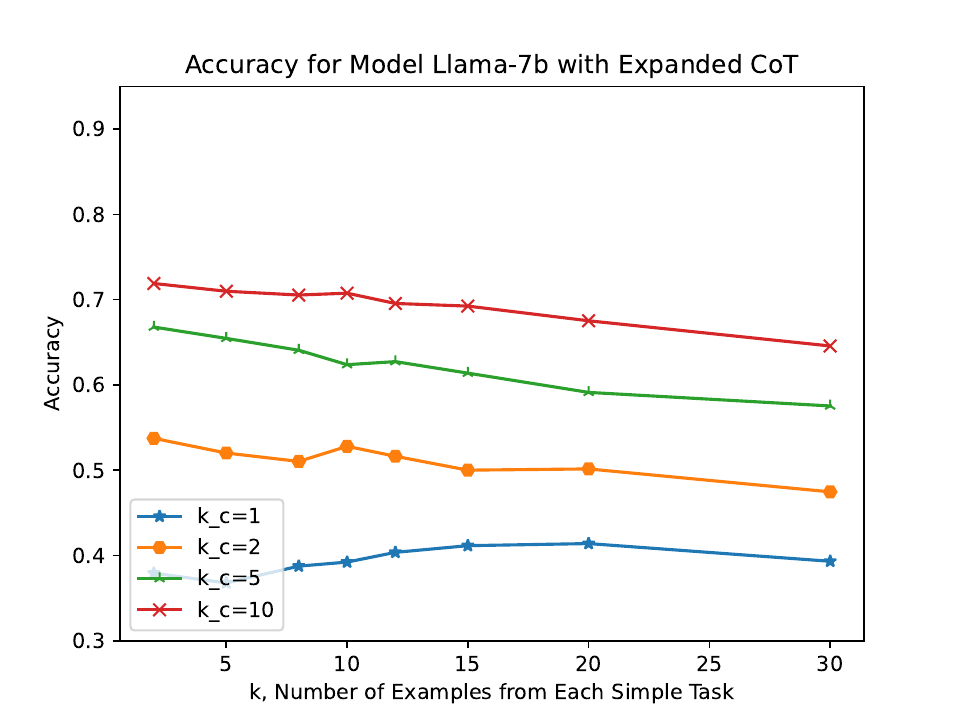}
\includegraphics[width=0.33\textwidth]{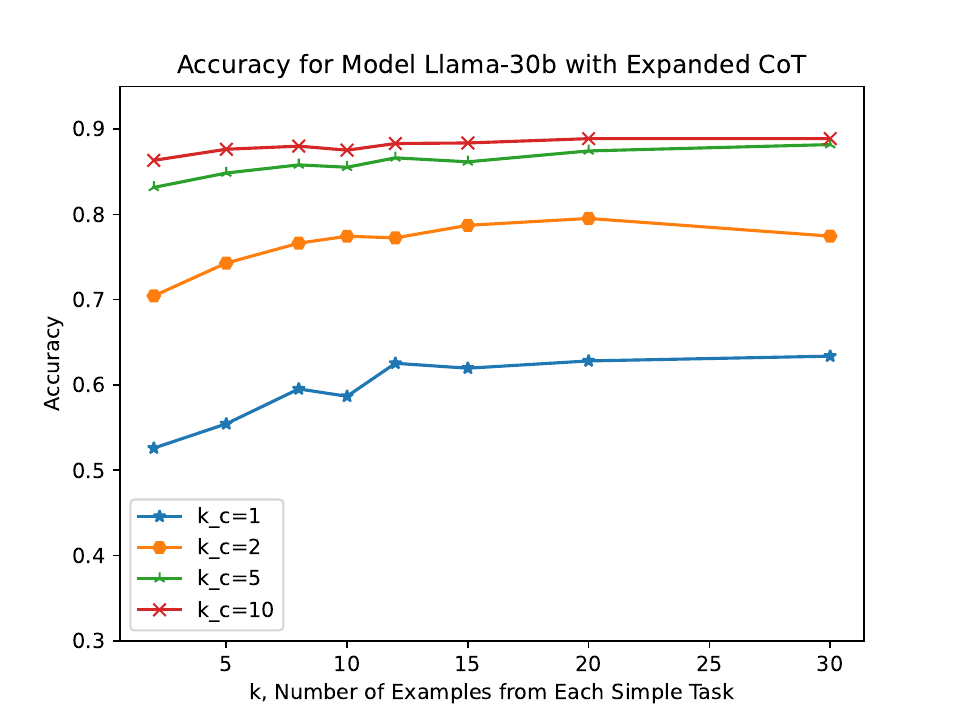}

\includegraphics[width=0.33\textwidth]{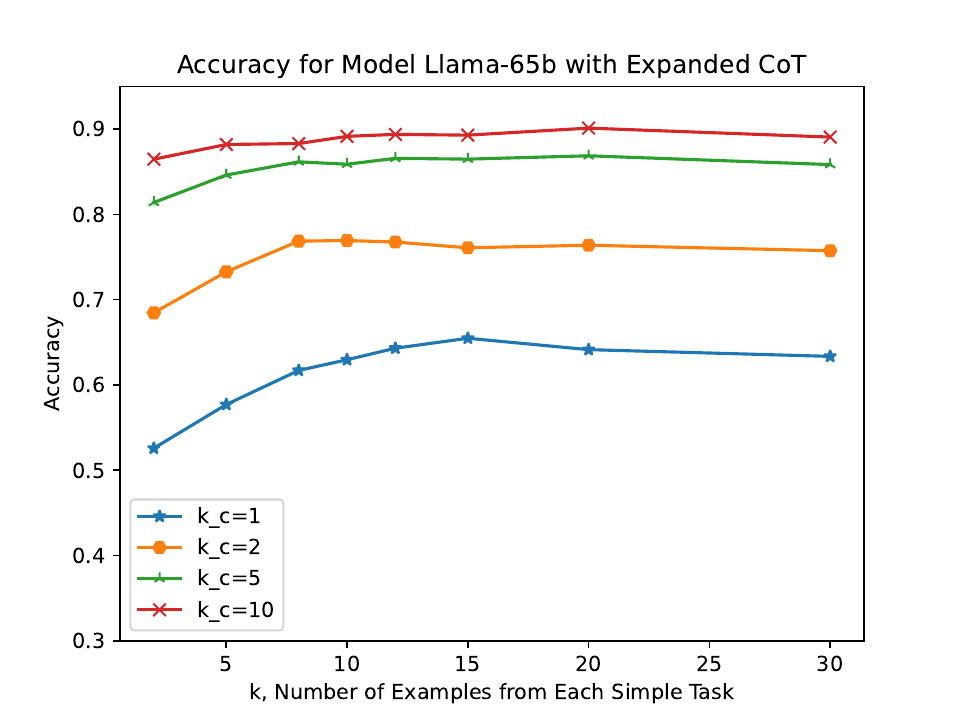}
\includegraphics[width=0.33\textwidth]{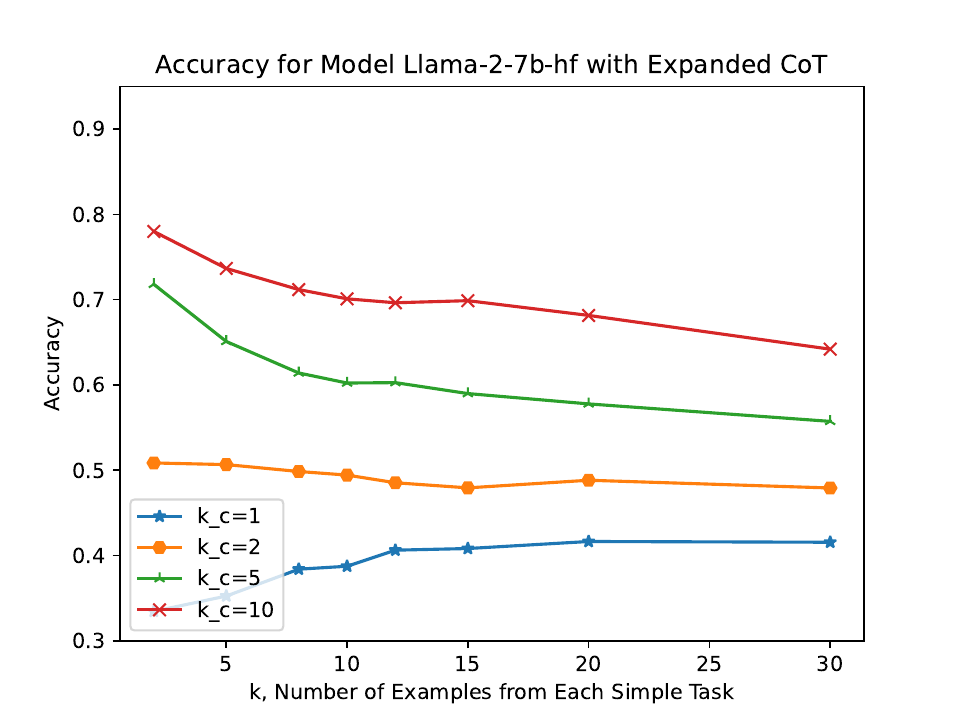}
\includegraphics[width=0.33\textwidth]{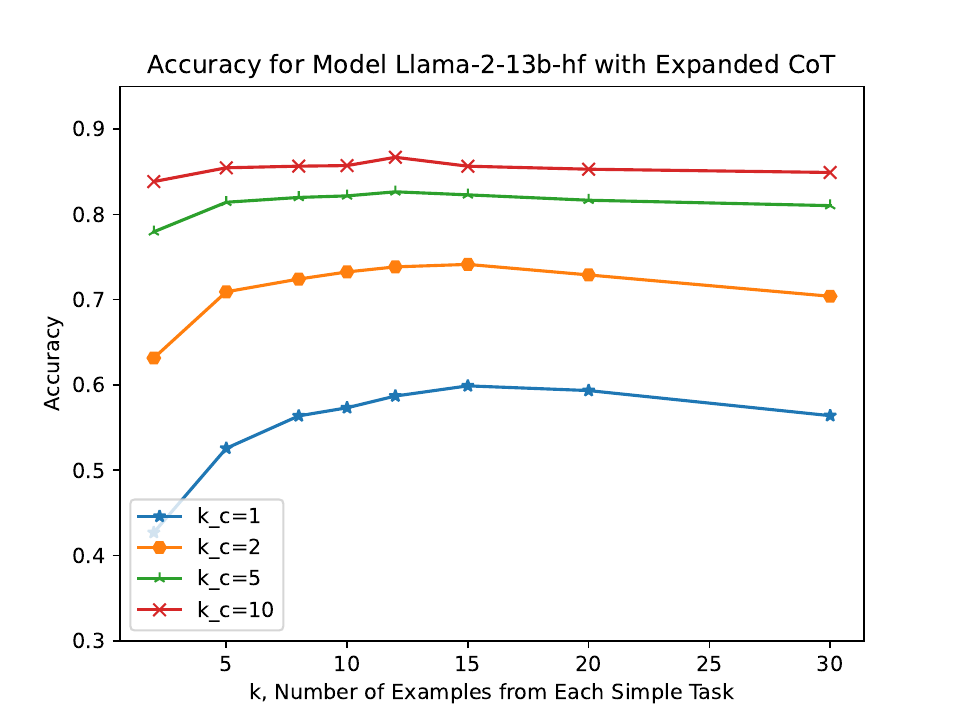}

\includegraphics[width=0.33\textwidth]{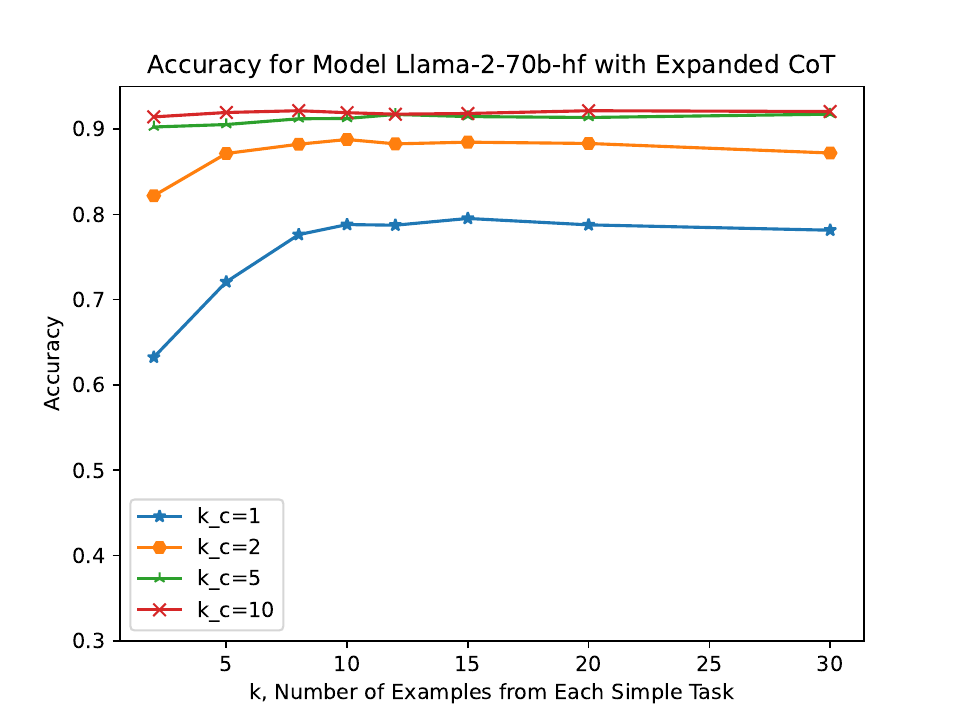}
\includegraphics[width=0.33\textwidth]{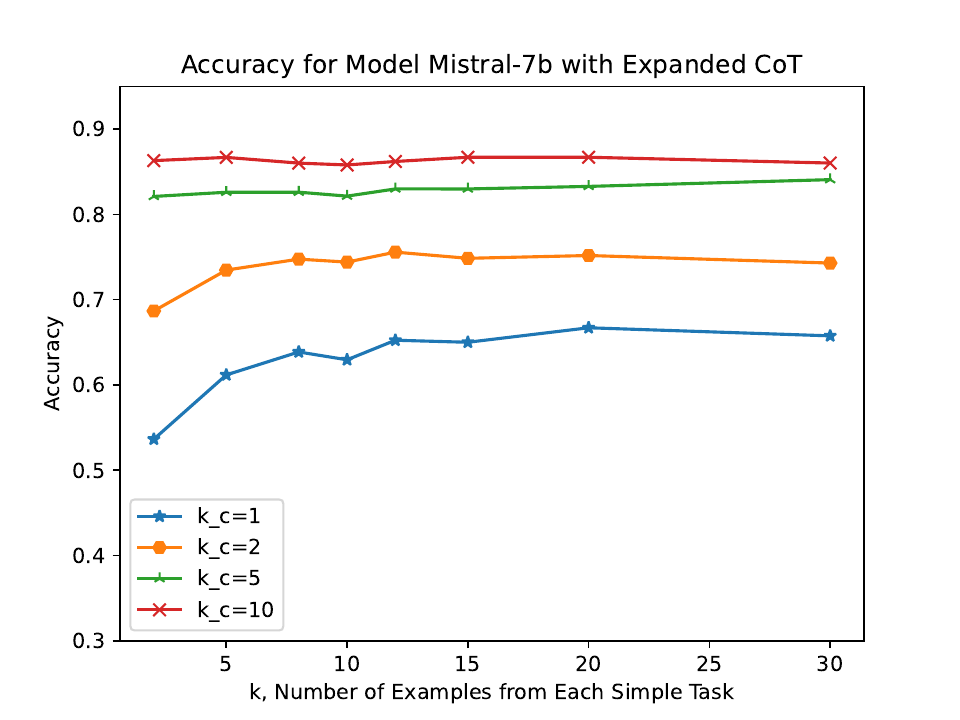}
\includegraphics[width=0.33\textwidth]{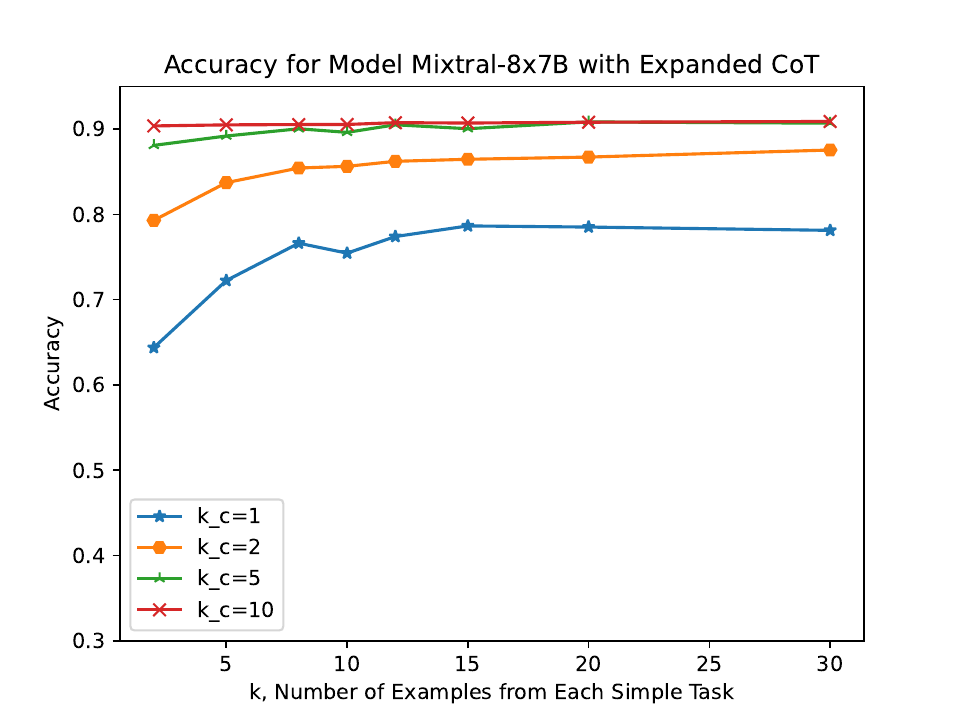}

\includegraphics[width=0.33\textwidth]{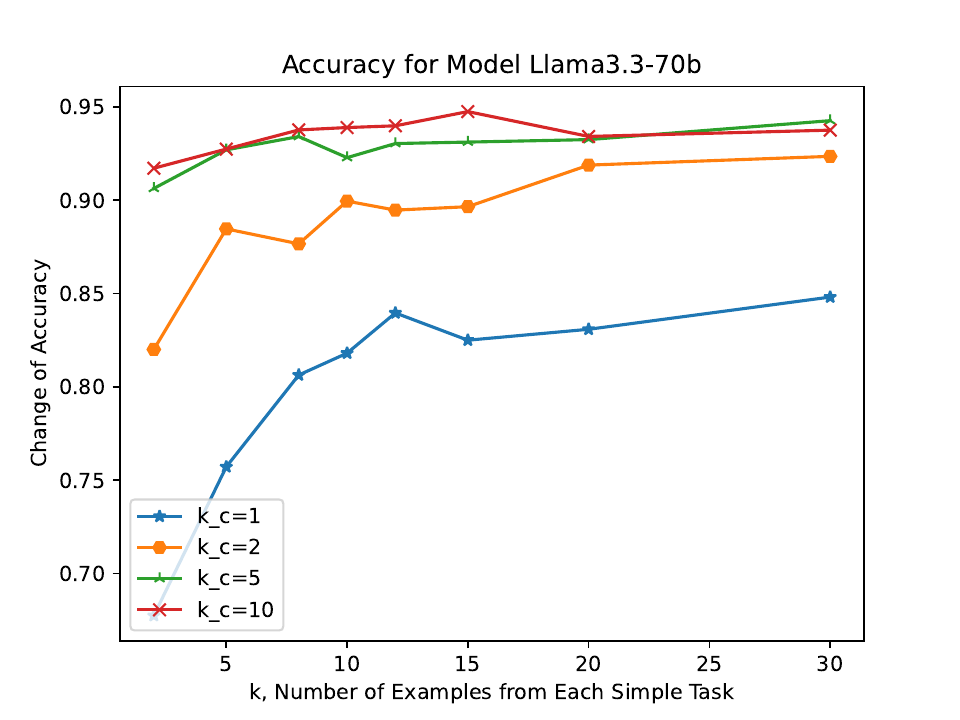}
\includegraphics[width=0.33\textwidth]{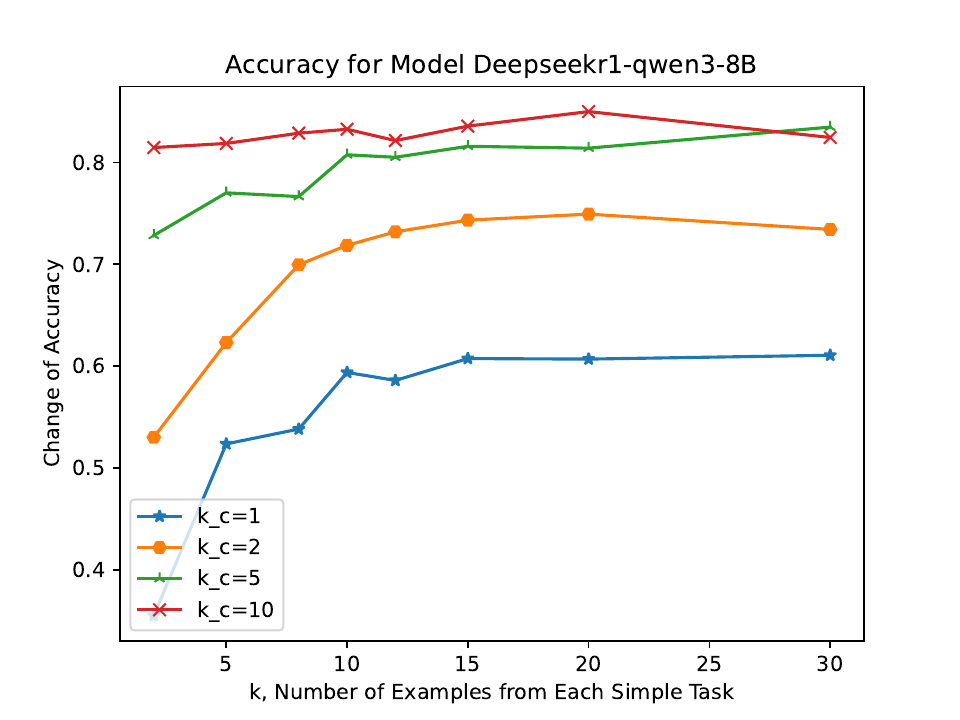}
\includegraphics[width=0.33\textwidth]{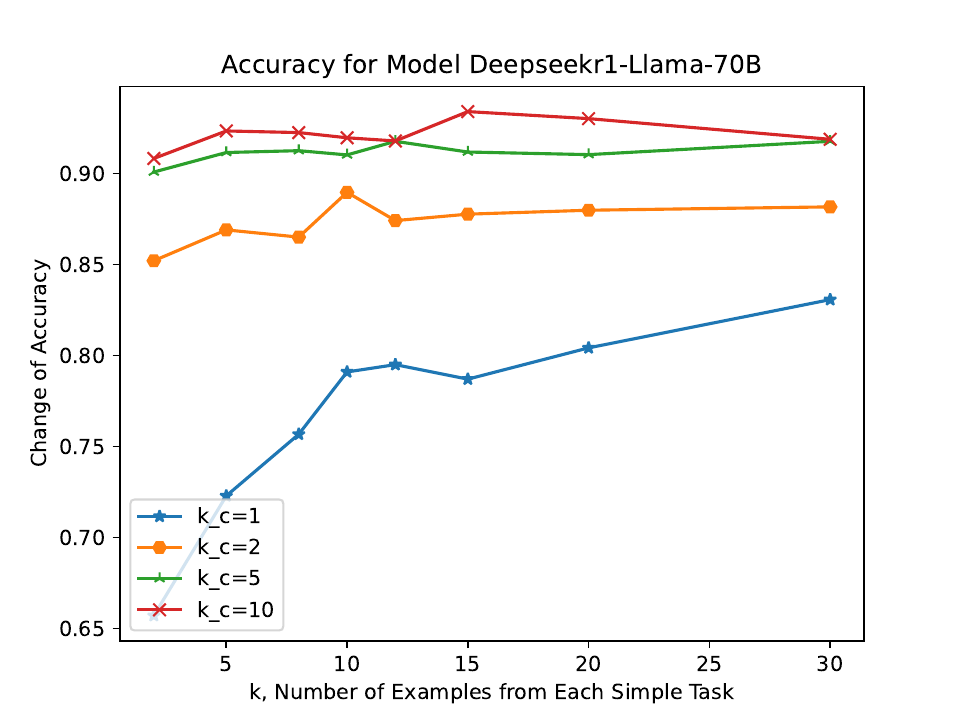}
\caption{The effect of the in-context examples from composite tasks with ExpCoT.
}
\label{fig:increasing-kc-example-selected-tasks-expandedcot}
\end{figure*}

\clearpage

\section{Details of Theoretical Analysis} \label{app:theory}

First recall the theoretical setup. A sequence-to-sequence task on a finite vocabulary of tokens $\Sigma$ is associated with an input distribution $\mathcal{D}$ over the input $\mathcal{X} \subseteq \Sigma^*$, and a target function $f: \Sigma^* \rightarrow \Sigma^* $ where $f \in \mathcal{H}$ for some model class $\mathcal{H}$.  
A composite task with the target function $f \in \mathcal{H}^T$ can consist of $T$ steps $f_1, f_2, \dots, f_T \in \mathcal{H}$, such that $f(x) = f_T \circ \cdots \circ f_2 \circ f_1(x)$.  
For simplicity, assume $\mathcal{H}$ is finite, and it includes the identity mapping so that $\mathcal{H} \subseteq \mathcal{H}^T$. 

Now we present the detailed proofs for our theoretical results.

Consider the case when $k_c$ composite task examples $\mathcal{S}_0 = \{(x_i, y_i): i \in[k_c]\}$ are given, where $x_i$ are i.i.d.\ from $\mathcal{D}$ and $y_i=f(x_i)$ for some $f \in \mathcal{H}^T$.
\begin{proposition}[Restatement of Proposition~\ref{prop:composite-task-examples}]
There exists a learning rule $\mathcal{M}: (\mathcal{X} \times \Sigma^*)^* \rightarrow \Sigma^{\mathcal{X}}$ such that for any distribution $\mathcal{D}$ over $\mathcal{X}$ and any $f \in \mathcal{H}^T$, for every $0 < \delta < 1$, we have with probability at least $1- \delta$ over $\mathcal{S}_0$, 
$$
\Pr_{x \sim \mathcal{D}}[\mathcal{M}(\mathcal{S}_0)(x) \neq f(x)] \le \frac{1}{k_c}\left(T \ln |\mathcal{H}| + \ln\left(\frac{1}{\delta}\right) \right).
$$
% Consequently, under Assumption~(\ref{ass:distinguishable}), if $k_c \ge \frac{2}{\epsilon_0}(T \ln |\mathcal{H}| + \ln(1/\delta))$, then with probability at least $1- \delta$ over $\mathcal{S}_c$, $\mathcal{M}(\mathcal{S}_c) = f$.
\end{proposition}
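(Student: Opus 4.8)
The plan is to recognize the statement as the textbook realizable-case generalization bound for a finite hypothesis class, applied to the composite class $\mathcal{H}^T$. The first step is to control the cardinality of this class: since every target $f \in \mathcal{H}^T$ is a composition $f_T \circ \cdots \circ f_1$ of $T$ maps each drawn from $\mathcal{H}$, we have $|\mathcal{H}^T| \le |\mathcal{H}|^T$, hence $\ln|\mathcal{H}^T| \le T\ln|\mathcal{H}|$. This is the only place the compositional structure enters, and it is exactly what produces the factor $T$ in the bound.

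Next I would define the learning rule $\mathcal{M}$ to be empirical risk minimization (consistency-based) over $\mathcal{H}^T$: on input $\mathcal{S}_0$, output any $h \in \mathcal{H}^T$ with $h(x_i) = y_i$ for all $i \in [k_c]$, breaking ties arbitrarily. Because the target $f$ lies in $\mathcal{H}^T$ and the labels satisfy $y_i = f(x_i)$ (realizability), the consistent set is nonempty, so $\mathcal{M}$ is well defined.

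The core is a union bound over ``bad'' hypotheses. Set $\epsilon := \frac{1}{k_c}\left(T\ln|\mathcal{H}| + \ln(1/\delta)\right)$ and call $h \in \mathcal{H}^T$ bad if $\Pr_{x\sim\mathcal{D}}[h(x) \neq f(x)] > \epsilon$. For a fixed bad $h$, each example is consistent with $h$ with probability at most $1-\epsilon$, so by independence of the $x_i$ the probability that $h$ agrees with all $k_c$ examples is at most $(1-\epsilon)^{k_c} \le e^{-\epsilon k_c}$. A union bound over the at most $|\mathcal{H}|^T$ hypotheses then gives that the probability some bad $h$ is consistent with $\mathcal{S}_0$ is at most $|\mathcal{H}|^T e^{-\epsilon k_c}$, which by the choice of $\epsilon$ equals exactly $\delta$.

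Finally, on the complementary event (probability at least $1-\delta$) no bad hypothesis is consistent, so the hypothesis returned by $\mathcal{M}$, being consistent by construction, has error at most $\epsilon$, which is the claimed bound. I do not anticipate a genuine obstacle: the argument is the standard Occam/finite-class bound, and the only setting-specific ingredient is bounding $|\mathcal{H}^T|$ by $|\mathcal{H}|^T$. The one mild subtlety worth stating carefully is that $\mathcal{M}$ is a deterministic consistency rule, so the event that must be excluded is precisely ``some bad hypothesis is consistent,'' after which correctness follows because $\mathcal{M}$ always returns a consistent hypothesis.
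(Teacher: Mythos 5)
Your proposal is correct and follows essentially the same route as the paper's proof: a consistency (Occam) rule over the finite class $\mathcal{H}^T$, the bound $\Pr[\text{a fixed bad } g \text{ is consistent}] \le (1-\epsilon)^{k_c}$, and a union bound over $|\mathcal{H}^T| \le |\mathcal{H}|^T$ hypotheses, with $(1-\epsilon)^{k_c} \le e^{-\epsilon k_c}$ giving the stated error level. The only difference is that you spell out the steps the paper leaves implicit (the cardinality bound $\ln|\mathcal{H}^T| \le T\ln|\mathcal{H}|$ and the exponential inequality), which is a matter of exposition, not substance.
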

\begin{proof}
The result follows a standard argument of consistent models from a finite hypothesis class.
Let $\mathcal{M}(\mathcal{S}_c)$ output a consistent model:
\begin{align}
    \mathcal{M}(\mathcal{S}_c) \in \{g \in \mathcal{H}^T: g(x) = y, \forall (x,y) \in \mathcal{S}_c\}.
\end{align}
Let $d(f, g) = \Pr_{x \sim \mathcal{D}}[g(x) \neq f(x)]$ denote the difference between $f$ and $g$. For a fixed $g$ with $d(f,g) > \epsilon$, we have 
\begin{align}
    \Pr[g(x) = y, \forall (x,y) \in \mathcal{S}_c] \le (1-\epsilon)^{k_c}.
\end{align}
So 
\begin{align}
    \Pr[\exists g \in \mathcal{H}^T, g(x) = y, \forall (x,y) \in \mathcal{S}_c] \le |\mathcal{H}^T| (1-\epsilon)^{k_c}.
\end{align}
Letting the right-hand side bounded by $\delta$ leads to the result.
\end{proof}

Next we consider examples from multiple tasks. Recall that $\mathcal{S}_t$ is a set of $k_t$ examples from the $t$-th task $(\mathcal{D}_t, f_t) (0 \le t \le T)$. We say $\mathcal{M}$ is \emph{focusing} if its expected error on the $0$-th task is no worse than that on any other task, i.e., for any $j \in [T]$,
\begin{align} 
\mathcal{L}_0(\mathcal{M}; (\mathcal{D}_t, f_t)_{t=0}^T) \le \mathcal{L}_j(\mathcal{M}; (\mathcal{D}_t, f_t)_{t=0}^T)
\end{align} 
where $\mathcal{L}_j(\mathcal{M}; (\mathcal{D}_t, f_t)_{t=0}^T) := 
\mathbb{E}_{\mathcal{S}_t \sim (\mathcal{D}_t, f_t), 0\le t \le T } \Pr_{x \sim \mathcal{D}_j}[\mathcal{M}(\mathcal{S}_0; \mathcal{S}_1, \ldots, \mathcal{S}_T)(x) \neq f_j(x)]$ is the expected error of $\mathcal{M}$ on the $j$-th task. And we say that $\mathcal{M}$ does \emph{not distinguish examples from different tasks}, if it is symmetric w.r.t.\ the data sets $\mathcal{S}_t$'s, i.e., for any permutation $\sigma$ on $\{0,1,\ldots,T\}$, the distribution of $\mathcal{M}(\mathcal{S}_{\sigma(0)}; \mathcal{S}_{\sigma(1)}, \ldots, \mathcal{S}_{\sigma(T)})$ is the same as that of $\mathcal{M}(\mathcal{S}_0; \mathcal{S}_1, \ldots, \mathcal{S}_T)$. Then we have: 

\begin{proposition}[Restatement of Proposition~\ref{prop:confusion}]
Suppose there exist $g_1, \ldots, g_T \in \mathcal{H}$ with pairwise difference at least $\Delta$ for some $\mathcal{D}$, i.e., $\min_{i\neq j} \Pr_{x \sim \mathcal{D}}[g_i(x) \neq g_j(x)] \ge \Delta$. For any $\mathcal{M}$ that is focusing but does not distinguish between examples from different tasks, there exist $f_1, \ldots, f_T \in \mathcal{H}$, $f_0 = f_T \circ \cdots f_2 \circ f_1$, and $\mathcal{D}_t (0\le t\le T)$'s, such that 
$\mathbb{E}_{\{\mathcal{S}_t\}} \Pr_{x \sim \mathcal{D}_0}[\mathcal{M}(\mathcal{S}_0; \mathcal{S}_1, \ldots, \mathcal{S}_T)(x) \neq f_0(x)] = \Omega(\Delta)$.
\end{proposition}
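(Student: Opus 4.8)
The plan is to play the two assumed properties of $\mathcal{M}$ against each other. By itself the focusing inequality $\mathcal{L}_0 \le \mathcal{L}_j$ points the wrong way for a lower bound on $\mathcal{L}_0$; the key idea is that symmetry lets me apply focusing to every \emph{relabelling} of the tasks, and the resulting web of inequalities collapses into the equality $\mathcal{L}_0 = \mathcal{L}_1 = \cdots = \mathcal{L}_T$. Once all task errors are equal, it suffices to exhibit one valid composite instance on which the \emph{average} error over the simple tasks is large, and the bound on $\mathcal{L}_0$ follows for free.

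The workhorse is a symmetry relation for the errors. Fix a task tuple $(\mathcal{D}_t, f_t)_{t=0}^T$ and a permutation $\sigma$ of $\{0,1,\ldots,T\}$, and form the relabelled tuple whose $s$-th task is $(\mathcal{D}_{\sigma(s)}, f_{\sigma(s)})$. The dataset fed into slot $s$ of the relabelled instance is distributed as $\mathcal{S}_{\sigma(s)}$, so because $\mathcal{M}$ does not distinguish examples from different tasks the predictor $\mathcal{M}(\mathcal{S}_{\sigma(0)}; \ldots; \mathcal{S}_{\sigma(T)})$ has the same distribution as $\mathcal{M}(\mathcal{S}_0; \ldots; \mathcal{S}_T)$. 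Relabelling the dummy variables in the expectation then gives $\mathcal{L}_s(\sigma\text{-instance}) = \mathcal{L}_{\sigma(s)}(\text{instance})$. I would state and verify this identity carefully, since everything else rests on it.

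Next I would collapse focusing into equality. Applying the focusing inequality to the relabelled instance gives $\mathcal{L}_{\sigma(0)} \le \mathcal{L}_{\sigma(j)}$ for all $j \in [T]$. Letting $\sigma$ range over all permutations, for any pair $a \neq b$ I can pick $\sigma$ with $\sigma(0) = a$ and $\sigma(j) = b$ for some $j$, which yields $\mathcal{L}_a \le \mathcal{L}_b$; the reverse inequality follows by the same argument with $a,b$ swapped, so all of $\mathcal{L}_0, \mathcal{L}_1, \ldots, \mathcal{L}_T$ coincide. I would make explicit here that this reads focusing as a property of $\mathcal{M}$ holding for \emph{arbitrary} task tuples, in particular for relabellings in which the composite target no longer sits in slot $0$; this is the step where the two hypotheses genuinely interact, and I expect it to be the main obstacle, both conceptually and in pinning down the right quantifier order.

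Finally I would construct the hard instance: set $\mathcal{D}_t = \mathcal{D}$ and $f_t = g_t$ for $t \in [T]$, and $f_0 = g_T \circ \cdots \circ g_1 \in \mathcal{H}^T$, which is a legitimate composite instance (the actual form of $f_0$ is irrelevant, which is why I need not control the composition). Writing $d(f,g) := \Pr_{x \sim \mathcal{D}}[f(x) \neq g(x)]$, the disagreement event for $f,g$ lies in the union of the disagreement events with any third function, so $d$ obeys the triangle inequality; hence any fixed predictor $h$ is within $\Delta/2$ of at most one $g_t$, and at least $T-1$ of the simple targets satisfy $d(h, g_t) \ge \Delta/2$. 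Averaging over $t$ and taking expectation over the datasets gives $\frac{1}{T}\sum_{t=1}^T \mathcal{L}_t \ge \frac{(T-1)\Delta}{2T} \ge \frac{\Delta}{4}$ for $T \ge 2$. Combining with $\mathcal{L}_0 = \frac{1}{T}\sum_{t=1}^T \mathcal{L}_t$ from the previous step yields $\mathcal{L}_0 \ge \Delta/4 = \Omega(\Delta)$, which is exactly the claimed lower bound on $\mathbb{E}_{\{\mathcal{S}_t\}} \Pr_{x \sim \mathcal{D}_0}[\mathcal{M}(\mathcal{S}_0; \ldots; \mathcal{S}_T)(x) \neq f_0(x)]$.
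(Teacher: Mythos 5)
Your proof is correct and takes essentially the same route as the paper's: both rest on the relabelling identity $\mathcal{L}_s(\sigma\text{-instance}) = \mathcal{L}_{\sigma(s)}(\text{instance})$ derived from the non-distinguishing assumption, on applying focusing to permuted task tuples (the paper likewise reads focusing as holding for arbitrary tuples, including those with a simple task in slot $0$), and on the pairwise separation of $g_1, \ldots, g_T$ to force a large average error. The only differences are organizational: the paper averages over all permutations and sums pairwise triangle inequalities to reach $\frac{T-1}{T+1}\Delta$, whereas you collapse the focusing inequalities into the equality $\mathcal{L}_0 = \mathcal{L}_1 = \cdots = \mathcal{L}_T$ and use a metric pigeonhole to reach $\Delta/4$; both give the claimed $\Omega(\Delta)$.
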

\begin{proof}
Let $f_t = g_t$ and $\mathcal{D}_t = \mathcal{D}$ for $t \in [T]$. 
For datasets $\mathcal{V}_0, \mathcal{V}_1, \ldots, \mathcal{V}_T$ and a task $(\mathcal{D}_j, f_j)$, define 
\begin{align}
    \mathcal{L}(\mathcal{M}; \mathcal{V}_0, \mathcal{V}_1, \ldots, \mathcal{V}_T; \mathcal{D}_j, f_j) :=  \Pr_{x \sim \mathcal{D}_j}[\mathcal{M}(\mathcal{V}_0; \mathcal{V}_1, \ldots, \mathcal{V}_T)(x) \neq f_j(x)].
\end{align}

Consider a uniform distribution $\mathcal{U}$ over permutations on $\{0,1,\ldots, T\}$.
We have
\begin{align}
    & \quad \mathbb{E}_{\sigma \sim \mathcal{U}} \mathbb{E}_{\{\mathcal{S}_t\}} \mathcal{L}(\mathcal{M}; \mathcal{S}_{\sigma(0)}, \mathcal{S}_{\sigma(1)}, \ldots, \mathcal{S}_{\sigma(T)}; \mathcal{D}_{\sigma(0)}, f_{\sigma(0)}) 
    \\
    & = \mathbb{E}_{\sigma \sim \mathcal{U}} \mathbb{E}_{\{\mathcal{S}_t\}} \mathcal{L}(\mathcal{M}; \mathcal{S}_{0}, \mathcal{S}_{1}, \ldots, \mathcal{S}_{T}; \mathcal{D}_{\sigma(0)}, f_{\sigma(0)})
    \\ 
    & = \frac{1}{T+1} \sum_{t=0}^{T}\mathbb{E}_{\{\mathcal{S}_t\}} \mathcal{L}(\mathcal{M}; \mathcal{S}_{0}, \mathcal{S}_{1}, \ldots, \mathcal{S}_{T}; \mathcal{D}, f_{t})
\end{align}
where the first equation comes from the assumption that $\mathcal{M}$ does not distinguish between examples from different tasks, and the second equation is because $\sigma$ is uniformly at random. We have the following triangle inequality about the error by definition: 
\begin{claim}
For any $i, j \in [T]$, 
\begin{align}
    \mathcal{L}(\mathcal{M}; \mathcal{S}_{0}, \mathcal{S}_{1}, \ldots, \mathcal{S}_{T}; \mathcal{D}, f_i) + \mathcal{L}(\mathcal{M}; \mathcal{S}_{0}, \mathcal{S}_{1}, \ldots, \mathcal{S}_{T}; \mathcal{D}, f_j) \ge \Pr_{x \sim \mathcal{D}}[f_i(x) \neq f_j(x)]
\end{align}
\end{claim}
Then 
\begin{align}
    & \quad \mathbb{E}_{\sigma \sim \mathcal{U}} \mathbb{E}_{\{\mathcal{S}_t\}} \mathcal{L}(\mathcal{M}; \mathcal{S}_{\sigma(0)}, \mathcal{S}_{\sigma(1)}, \ldots, \mathcal{S}_{\sigma(T)}; \mathcal{D}_{\sigma(0)}, f_{\sigma(0)}) 
    \\ 
    & = \frac{1}{T+1} \sum_{t=0}^{T}\mathbb{E}_{\{\mathcal{S}_t\}} \mathcal{L}(\mathcal{M}; \mathcal{S}_{0}, \mathcal{S}_{1}, \ldots, \mathcal{S}_{T}; \mathcal{D}, f_{t})
    \\
    & \ge \frac{1}{T(T+1)} \sum_{t=1}^{T} T \mathbb{E}_{\{\mathcal{S}_t\}} \mathcal{L}(\mathcal{M}; \mathcal{S}_{0}, \mathcal{S}_{1}, \ldots, \mathcal{S}_{T}; \mathcal{D}, f_{t})
    \\
    & = \frac{1}{T(T+1)} \sum_{i=1}^{T} \sum_{j=1}^{T}  \mathbb{E}_{\{\mathcal{S}_t\}} \mathcal{L}(\mathcal{M}; \mathcal{S}_{0}, \mathcal{S}_{1}, \ldots, \mathcal{S}_{T}; \mathcal{D}, f_i) + \mathbb{E}_{\{\mathcal{S}_t\}} \mathcal{L}(\mathcal{M}; \mathcal{S}_{0}, \mathcal{S}_{1}, \ldots, \mathcal{S}_{T}; \mathcal{D}, f_j)
    \\
    & \ge \frac{1}{T(T+1)} \sum_{i=1}^{T} \sum_{j=1}^{T} \Pr_{x \sim \mathcal{D}}[f_i(x) \neq f_j(x)]
    \\
    & \ge \frac{T-1}{T+1} \Delta. 
\end{align}   
On the other hand, since $\mathcal{M}$ is focusing,
\begin{align}
    & \quad \mathbb{E}_{\{\mathcal{S}_t\}}\mathcal{L}(\mathcal{M}; \mathcal{S}_{\sigma(0)}, \mathcal{S}_{\sigma(1)}, \ldots, \mathcal{S}_{\sigma(T)}; \mathcal{D}_{\sigma(0)}, f_{\sigma(0)}) 
    \\
    & = \mathcal{L}_0(\mathcal{M}; (\mathcal{D}_{\sigma(t)}, f_{\sigma(t)})_{t=0}^T) \\
    & \le \mathcal{L}_{\sigma^{-1}(0)}(\mathcal{M}; (\mathcal{D}_{\sigma(t)}, f_{\sigma(t)})_{t=0}^T) \\ 
    & = \mathbb{E}_{\{\mathcal{S}_t\}}\mathcal{L}(\mathcal{M}; \mathcal{S}_{0}, \mathcal{S}_{\sigma(1)}, \ldots, \mathcal{S}_{\sigma(T)}; \mathcal{D}_{\sigma(\sigma^{-1}(0))}, f_{\sigma(\sigma^{-1}(0))})\\
    & = \mathbb{E}_{\{\mathcal{S}_t\}}\mathcal{L}(\mathcal{M}; \mathcal{S}_{0}, \mathcal{S}_{1}, \ldots, \mathcal{S}_{T}; \mathcal{D}_0, f_{0})\\
    & = \mathbb{E}_{\{\mathcal{S}_t\}} \Pr_{x \sim \mathcal{D}_0}[\mathcal{M}(\mathcal{S}_0; \mathcal{S}_1, \ldots, \mathcal{S}_T)(x) \neq f_0(x)].
\end{align}
Combining the above two inequalities, we have
\begin{align}
    \mathbb{E}_{\{\mathcal{S}_t\}} \Pr_{x \sim \mathcal{D}_0}[\mathcal{M}(\mathcal{S}_0; \mathcal{S}_1, \ldots, \mathcal{S}_T)(x) \neq f_0(x)] \ge \frac{T-1}{T+1} \Delta.
\end{align}
This finishes the proof.
\end{proof}

\begin{theorem}[Restatement of Theorem~\ref{thm:expcot}]
Suppose we are given $k_t$ examples $\mathcal{S}_t$ from $(\mathcal{D}_t, f_t)$ for $t\in [T]$ and $k_c$ examples $\mathcal{S}_0$ from $(\mathcal{D}_0, f_0)$ with $f_0 = f_T \circ \ldots \circ f_2 \circ f_1$.
Suppose $\mathcal{H}$ is distinguishable: for some $\epsilon_0 > 0$, for any $f \neq g \in \mathcal{H}$ and $\mathcal{D}_t (0 \le t \le T)$, $\Pr_{x \sim \mathcal{D}_t}[f(x) \neq g(x)] > \epsilon_0$.
There exists a learning rule $\mathcal{M}: ((\mathcal{X} \times \Sigma^*)^*)^{T+1} \rightarrow \Sigma^{\mathcal{X}}$ such that for every $0 < \delta < 1$, if 
$$
\max(k_c, k_t) \ge \frac{2}{\epsilon_0}\left(\ln |\mathcal{H}| + \ln\frac{T}{\delta}\right), \quad \forall t\in [T],
$$
then with probability at least $1- \delta$ over $\{\mathcal{S}_t \}_{t=0}^T$, we have $\mathcal{M}(\mathcal{S}_0; \mathcal{S}_1, \ldots, \mathcal{S}_T) = f_0$. 
\end{theorem}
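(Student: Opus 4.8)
The plan is to reduce exact recovery of the composition to exact identification of each individual step $f_t \in \mathcal{H}$, exploiting that the CoT structure exposes per-step input--output pairs. The learning rule $\mathcal{M}$ I would use is as follows: first unpack each composite CoT example $(z^1,\dots,z^{T+1})\in\mathcal{S}_0$ into the $T$ transition pairs $(z^t,z^{t+1})$, which are by definition labeled examples for $f_t$ with input $z^t$; then, for each step $t\in[T]$, pool these $k_c$ transition pairs together with the $k_t$ simple-task pairs in $\mathcal{S}_t$ and pick any hypothesis $\hat f_t\in\mathcal{H}$ consistent with all pooled pairs for that step; finally output $\hat f_T\circ\cdots\circ\hat f_1$. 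Because the setup grants that $\mathcal{M}$ knows which $\mathcal{S}_t$ feeds which step, this pooling is well-defined.

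The core of the argument is a per-step consistent-hypothesis bound, exactly in the spirit of the proof of \Cref{prop:composite-task-examples}. Fix a step $t$ and a wrong candidate $g\in\mathcal{H}$ with $g\neq f_t$. By distinguishability, $g$ disagrees with $f_t$ with probability exceeding $\epsilon_0$ on the relevant input distribution, so the chance that $g$ agrees with $f_t$ on one labeled example is below $1-\epsilon_0$, and on all examples from a source of size $m$ it is at most $(1-\epsilon_0)^m\le e^{-\epsilon_0 m}$. Taking $m=\max(k_c,k_t)$ (i.e.\ retaining whichever of the two example sources is larger) and a union bound over the at most $|\mathcal{H}|$ wrong candidates, the probability that step $t$ admits any consistent $g\neq f_t$ is at most $|\mathcal{H}|\,e^{-\epsilon_0\max(k_c,k_t)}$. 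Substituting the hypothesis $\max(k_c,k_t)\ge\frac{2}{\epsilon_0}(\ln|\mathcal{H}|+\ln\frac{T}{\delta})$ drives this below $\delta/T$. Since $\hat f_t=f_t$ holds whenever no wrong candidate survives, a union bound over the $T$ steps gives $\hat f_t=f_t$ for all $t$ with probability at least $1-\delta$, and then $\mathcal{M}(\mathcal{S}_0;\mathcal{S}_1,\dots,\mathcal{S}_T)=\hat f_T\circ\cdots\circ\hat f_1=f_T\circ\cdots\circ f_1=f_0$ identically, as required.

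The main point requiring care is \emph{which} input distribution the separation $>\epsilon_0$ is measured against for the composite-example source. The transition input $z^t$ in a CoT example from $\mathcal{S}_0$ is not drawn from $\mathcal{D}_t$ but from the pushforward $\widetilde{\mathcal{D}}_t := (f_{t-1}\circ\cdots\circ f_1)_{\#}\mathcal{D}_0$ of the composite input distribution through the first $t-1$ steps. So to legitimately use the $k_c$ transition pairs to identify $f_t$, I must invoke distinguishability under $\widetilde{\mathcal{D}}_t$, whereas using the $k_t$ simple-task pairs only needs it under $\mathcal{D}_t$. This is precisely why the distinguishability hypothesis is quantified over a family of distributions rather than a single one: I read it as supplying the $\epsilon_0$-separation for every distribution arising here, including each $\widetilde{\mathcal{D}}_t$. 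With that reading the two sources are interchangeable and the $\max(k_c,k_t)$ bound is justified; absent it, one would fall back on the simple-task source alone (replacing $\max$ by $k_t$). A secondary subtlety is that the theorem demands \emph{exact} functional equality, not a small error bound: this is what forces the use of the strict separation $>\epsilon_0$, so that any wrong hypothesis is ruled out almost surely given enough samples, rather than the realizable PAC-style bound used in \Cref{prop:composite-task-examples}.
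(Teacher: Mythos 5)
Your proposal is correct and takes essentially the same route as the paper's own proof: a per-step consistent-hypothesis rule on the pooled simple-task pairs and CoT transition pairs, elimination of each wrong $g \in \mathcal{H}$ at rate $(1-\epsilon_0)^{\max(k_c,k_t)}$, a union bound over $|\mathcal{H}|$ candidates and then over the $T$ steps, and exact composition $\hat f_T \circ \cdots \circ \hat f_1 = f_0$. Your explicit treatment of the pushforward distribution $\widetilde{\mathcal{D}}_t$ governing the CoT transition inputs is in fact more careful than the paper, which compresses that point into ``a similar argument holds for the $k_c$ examples from the composite task.''
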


\begin{proof}
Consider each step $t$ in the composition, which can be learned by the examples from this simple task and the corresponding intermediate outputs from the composite task examples. This high-level idea is similar to that in~\cite{abedsoltan2025task,joshi2025theory}, but our setting is quite different, e.g., we have examples for each single step of the composition. 

More precisely, for learning $f_t$, we have $\mathcal{S}_t = \{(x_i, y_i): i \in [k_t]\}$ from the simple task, and $\mathcal{S}_{0,t} = \{(z^t_i, z^{t+1}_i): i \in [k_c]\}$ from the composite task CoT examples. Output a model consistent with all these data:
\begin{align}
    \hat{f}_t \in \{g \in \mathcal{H}: g(x) = y, \forall (x,y) \in \mathcal{S}_t \cup \mathcal{S}_{0,t} \}.
\end{align}
For a fixed $g$ with $\Pr_{x \sim \mathcal{D}_t}[g(x) \neq f_t(x)] > \epsilon_0$, we have 
\begin{align}
    \Pr[g(x) = y, \forall (x,y) \in \mathcal{S}_t] \le (1-\epsilon_0)^{k_t}.
\end{align}
So 
\begin{align}
    \Pr[\exists g \in \mathcal{H}^T, g(x) = y, \forall (x,y) \in \mathcal{S}_c] \le |\mathcal{H}| (1-\epsilon_0)^{k_t}.
\end{align}
Letting the right-hand side bounded by $\delta/T$, we know that if 
\begin{align}
    k_t \ge \frac{1}{\epsilon_0}\left(\ln |\mathcal{H}| + \ln\frac{T}{\delta}\right),
\end{align}
then with probability at least $1-\delta/T$, $\hat{f}_t = f_t$. 
A similar argument holds for the $k_c$ examples from the composite task.
Taking a union bound over the $T$ steps leads to the statement.
\end{proof}
Note that if the $t$-th simple task example input data distribution $\mathcal{D}_t$ is the same as the distribution of $f_t \circ \ldots \circ f_1(x)$ where $x \sim \mathcal{D}_0$, then the sample complexity is improved to:
\begin{align} 
    k_c + k_t \ge \frac{1}{\epsilon_0}\left(\ln |\mathcal{H}| + \ln\frac{T}{\delta}\right).
\end{align}
\end{document}